\newtheorem{definition}{Definition}
\newtheorem{lemma}{Lemma}
\newtheorem{corollary}{Corollary}
\newtheorem{theorem}{Theorem}
\title{
Your Group-Relative Advantage Is Biased
}
\author{
    \normalfont
    Fengkai Yang\textsuperscript{1,3,4}, 
    Zherui Chen\textsuperscript{2}, 
    Xiaohan Wang\textsuperscript{4}, 
    Xiaodong Lu\textsuperscript{1,4}, 
    Jiajun Chai\textsuperscript{4}, \\
    \normalfont
    Guojun Yin\textsuperscript{4}, 
    Wei Lin\textsuperscript{4}, 
    Shuai Ma\textsuperscript{1}, 
    Fuzhen Zhuang\textsuperscript{1}, 
    Deqing Wang\textsuperscript{1}, \\
    \normalfont
    Yaodong Yang\textsuperscript{3}, 
    Jianxin Li\textsuperscript{1},   
    Yikun Ban\textsuperscript{1}\thanks{Corresponding Author. \\ If you have any questions, feel free to contact   \texttt{yikunb@buaa.edu.cn} or \texttt{yangfengkai@stu.pku.edu.cn}}  \\
    \normalfont 
    \textsuperscript{1}Beihang University \quad
    \textsuperscript{2}University of California, Berkeley \quad
    \textsuperscript{3}Peking University \quad
    \textsuperscript{4}Meituan 
}
\begin{document}
\maketitle

\begin{abstract}

Reinforcement Learning from Verifier Rewards (RLVR) has emerged as a widely used approach for post-training large language models on reasoning tasks, with group-based methods such as GRPO and its variants gaining broad adoption. These methods rely on group-relative advantage estimation to avoid learned critics, yet its theoretical properties remain poorly understood.

In this work, we uncover a fundamental issue of group-based RL: the group-relative advantage estimator is inherently biased relative to the true (expected) advantage. We provide the first theoretical analysis showing that it systematically underestimates advantages for hard prompts and overestimates them for easy prompts, leading to imbalanced exploration and exploitation. To address this issue, we propose History-Aware Adaptive Difficulty Weighting (HA-DW), an adaptive reweighting scheme that adjusts advantage estimates based on an evolving difficulty anchor and training dynamics. Both theoretical analysis and experiments on five mathematical reasoning benchmarks demonstrate that HA-DW consistently improves performance when integrated into GRPO and its variants. Our results suggest that correcting biased advantage estimation is critical for robust and efficient RLVR training.

\end{abstract}

\section{Introduction}

\begin{figure*}[t]
    \centering
    \includegraphics[width= \linewidth, trim=0 135 0 95,clip]{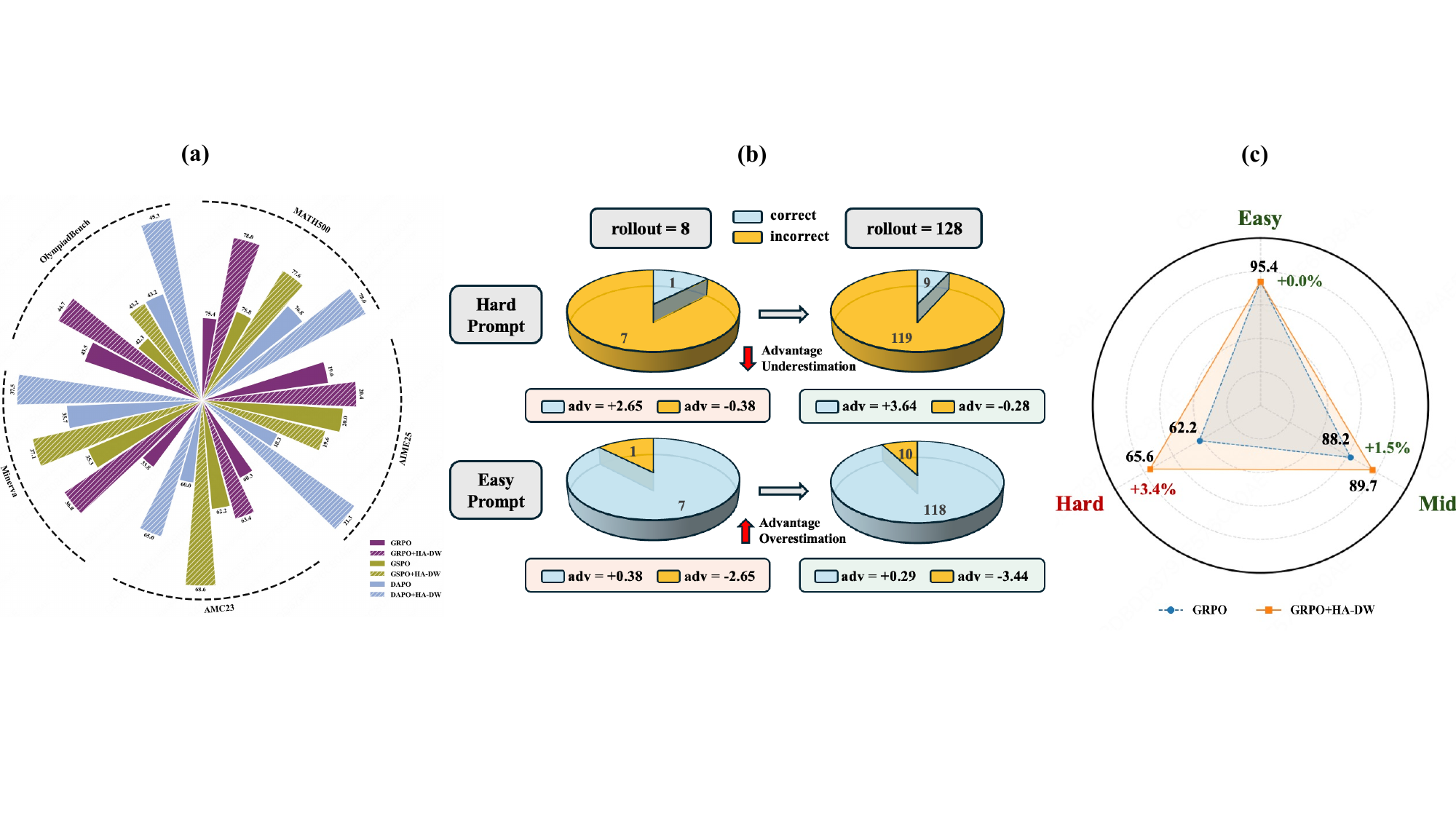}
    \caption{(a) Comparison of the performance of RL algorithms with and without HA-DW on Qwen3-4B-Base across five mathematical reasoning benchmarks. (b) Significant biased advantage estimation on the MATH dataset under 8 and 128 rollouts. (c) Performance gain by GRPO+HA-DW on MATH500 stratified by difficulty levels.
    }
    \vspace{-0.5cm}  
    \label{fig:introduction}
\end{figure*}

After the success of DeepSeek-R1 \citep{DBLP:journals/corr/abs-2501-12948}, RLVR has rapidly emerged as a simple yet powerful paradigm for training reasoning-oriented LLMs. GRPO \citep{DBLP:journals/corr/abs-2402-03300} has gained increasing popularity after PPO \citep{DBLP:journals/corr/SchulmanWDRK17}. Numerous variants of GRPO have been proposed to improve the algorithm, with the goal of achieving better stability and performance. Common variants include GSPO \citep{DBLP:journals/corr/abs-2507-18071}, DAPO \citep{DBLP:journals/corr/abs-2503-14476}, Dr.GRPO \citep{DBLP:journals/corr/abs-2503-20783} and GMPO \citep{DBLP:journals/corr/abs-2507-20673}.

In post-training, \emph{intra-group} advantage estimation is critical to the
performance of group-relative RL algorithms. Typically, for each sampled prompt,
the algorithm generates only a small number of rollouts and uses the
\emph{within-group} average reward as a baseline to compute advantages, thereby
avoiding the need for a separate critic model. While this design is appealing and
has attracted broad interest in the RL community, it still lacks a detailed
theoretical characterization\citep{DBLP:journals/corr/abs-2510-04996, DBLP:journals/corr/abs-2509-25300}.

\begin{center}
\textbf{Your advantage estimation is biased.}
\end{center}

In this paper, we identify a fundamental issue in group-based RL: the group-relative advantage estimator is generally biased relative to the true (expected) advantage. We provide a theoretical analysis showing that for \emph{hard prompts}, the estimator tends to \emph{underestimate} the expected advantages, whereas for \emph{easy prompts}, it tends to \emph{overestimate} the expected advantages, as presented in Section~\ref{sec:theoretical analysis2.2}. Such systematic bias can cause the policy to under-learn
from hard questions while over-exploiting easy ones, ultimately hurting both
training stability and generalization.
As illustrated by the representative example in
Figure~\ref{fig:introduction}(b), group-relative estimation can introduce
substantial bias in advantage estimation for group-based RL algorithms.
Our empirical results further corroborate this phenomenon, with consistent
evidence reported in appendix~\ref{appendix:toy-experiment}.

Motivated by these findings, we propose a novel policy optimization algorithm that \emph{adaptively reweights advantage estimates} to mitigate the bias induced by group-based advantage estimation. The overall framework is depicted in
Figure~\ref{fig:main-picture}. 
Our main contributions are summarized as follows:



\noindent   \textbf{[Discovery]}.  We provide the first theoretical analysis revealing that group-based
    advantage estimation in RLVR is inherently biased, systematically
    underestimating advantages for hard prompts and overestimating them for easy
    prompts.
    
\noindent  \textbf{[Algorithm]}.
  Motivated by this fundamental discovery, we propose
\emph{History-Aware Adaptive Difficulty Weighting (HA-DW)}, which dynamically
adjusts advantage weights using an evolving difficulty anchor that integrates
long-term reward trends and historical training information. HA-DW compensates
for the bias induced by group-relative advantage estimation and enables a more
principled balance between exploration and exploitation in RL training.


\noindent \textbf{[Performance]}.
    As illustrated in Figure~\ref{fig:introduction}(a), we validate our approach through extensive experiments on mathematical reasoning benchmarks, demonstrating consistent performance improvements when integrated HA-DW with GRPO and its variants across model scales. Notably, even when compared with GRPO using a larger number of rollouts, our method still achieves superior results.

Our goal is not to model all RLVR settings, but to expose a previously overlooked statistical bias in group-relative algorithms and demonstrate that even lightweight corrections can yield consistent gains.

\section{Why Your Advantage Estimation is Biased?}\label{sec:theoretical-analysis}

In this section, we theoretically analyze the biased estimation in group-relative algorithms. Firstly, we provide the prerequisite definitions.

\subsection{Definitions}
At training step $t$, we sample a prompt $x_t \sim D$. Given $x_t$, a group-relative RL algorithm samples $G$ responses
$\{y_{t,i}\}_{i=1}^G$ independently from the current policy $\pi_{\theta_t}(\cdot \mid x_t)$.
Each response $y_{t,i}$ receives a corresponding scalar reward $r_{t,i} \in \{0,1\}$, forming the reward set $\{r_{t,i}\}_{i=1}^G$, where $r(\cdot)$ is the reward function and we denote $r(y_{t,i})$ by $r_{t,i}$ for brevity.
The \emph{group-relative policy optimization} (Group-PO) objective is defined as:
\begin{equation}
\label{eq:group_po_objective}
J_{\text{group}}(\theta)
=
\frac{1}{G}
\sum_{i=1}^G
\psi\!\left(
\frac{\pi_{\theta}(y_{t,i} \mid x_t)}
     {\pi_{\theta_{\text{old}}}(y_{t,i} \mid x_t)}
\right)
\,
\phi\!\left(\hat{A}_{t,i}\right),
\end{equation}
where $\pi_{\theta_{\text{old}}}$ denotes the reference (behavior) policy.

The \textbf{group-relative advantage} $\hat{A}_{t,i}$ is computed as:
\begin{equation}
\label{eq:group_advantage}
\hat{A}_{t,i}
=
r_{t,i}
-
\hat{p}_t,  \quad \ \hat{p}_t
=
\frac{1}{G}
\sum_{i=1}^G r_{t,i},  
\end{equation} 
where  $\hat{p}_t$ is the group baseline:

Here, $\psi(\cdot)$ denotes a function applied to the importance sampling ratio (e.g., identity, clipping, or logarithmic transformation), and $\phi(\cdot)$ denotes a function applied to the advantage term, introduced to maintain generality across different group-relative policy optimization variants.

\begin{definition}[Expected Reward]
Assume the reward function $r(\cdot)$ is binary, i.e., $r(\cdot) \in \{0,1\}$.
Given a prompt $x_t \sim D$ and a policy $\pi_{\theta_t}$, let
$y_t \sim \pi_{\theta_t}(\cdot \mid x_t)$ be a sampled response.
The expected reward of policy $\pi_{\theta_t}$ on prompt $x_t$ is defined as:
\begin{equation}
p_t
=
\mathbb{E}_{y_t \sim \pi_{\theta_t}(\cdot \mid x_t)}
\bigl[ r(y_t) \bigr]
=
\mathbb{P}\!\left(r(y_t) = 1 \mid x_t, \pi_{\theta_t}\right).
\end{equation}
\end{definition}

In the RLVR setting, $p_t$ represents the expected reward under policy $\pi_{\theta_t}$ given $x_t$, while $\hat{p}_t$ can be regarded as an empirical estimator of $p_t$ obtained from a finite group of sampled responses. This motivates the following definition.

\begin{definition}[Expected Advantage]
\label{def:expected_advantage}
Given a prompt $x_t \sim D$, let $y_{t,i} \sim \pi_{\theta_t}(\cdot \mid x_t)$ be a sampled
response with corresponding reward $r_{t,i}$.
The expected advantage is defined as:
\begin{equation}
\label{eq:expected_advantage}
A_{t,i}
=
r_{t,i}
-
p_t.
\end{equation}
\end{definition}

Thus, in the RLVR setting, $A_{t,i}$ represents the \emph{expected} advantage of response $y_{t,i}$ under policy $\pi_{\theta_t}$ given $x_t$, while $\hat{A}_{t,i}$ can be regarded as an empirical estimator of $A_{t,i}$ obtained from a finite group of sampled responses. Most group-relative RL algorithms rely on $\hat{A}_{t,i}$ for policy
updates, differing primarily in how $\hat{A}_{t,i}$ is processed or transformed within their respective optimization objectives.

\subsection{Fundamental Discovery}\label{sec:theoretical analysis2.2}

Next, we present a formal formulation of the problem.
Given a prompt $x_t \sim D$, let $p_t$ denote the expected reward of policy
$\pi_{\theta_t}$ on $x_t$. We then sample $G$ responses independently from
$\pi_{\theta_t}(\cdot \mid x_t)$. 
In RLVR, rewards are often
binary, especially in mathematical and formal reasoning tasks where verifiers
return pass/fail signals. Under this widely adopted setting, it is natural to
model the reward associated with each response as a Bernoulli random variable:
\begin{equation}
r_{t,i} \sim \mathrm{Bernoulli}(p_t), \quad \forall i \in [G].
\end{equation}

Let $R = \sum_{i=1}^G r_{t,i}$ denote the total reward within the group.
The empirical group baseline is given by $\hat{p}_t = R / G$.

\begin{definition}[Prompt Difficulty]
\label{def:prompt_difficulty}
Given a prompt $x_t$, a policy $\pi_{\theta_t}$, and $\Delta \in [0, 1)$, we define the difficulty
of $x_t$ as follows:
\begin{itemize}
    \item $x_t$ is a \textbf{hard prompt} if $p_t < 0.5 - \Delta$;
    \item $x_t$ is a \textbf{moderate prompt} if
    $ 0.5 - \Delta \leq p_t \leq  0.5 + \Delta$;
    \item $x_t$ is a \textbf{easy prompt} if $p_t >  0.5 + \Delta$,
\end{itemize}
where $\Delta$ is a user-defined
threshold to customize the prompt difficulty.
\end{definition}

In group-based policy optimization, the group-relative advantage estimator
satisfies $\hat{A}_{t,i}=0$ for all $i\in[G]$ when either $R=0$ or $R=G$,
resulting in zero gradients and hence no parameter updates.
In practice, such degenerate groups do not contribute to learning and are
either explicitly discarded or implicitly ignored by GRPO-style algorithms.

Accordingly, our analysis focuses on the effective update regime, namely
groups for which at least one response receives a non-zero advantage.
This corresponds to the non-degenerate event
\begin{equation}
\mathcal{S} \coloneqq \{1 \le R \le G-1\}.
\end{equation}
Importantly, conditioning on $\mathcal{S}$ does not alter the optimization
trajectory, but isolates the subset of samples that actively drive learning,
allowing us to precisely characterize the bias in advantage estimation.
Next, we present the main results.

\begin{tcolorbox}[
    colback=gray!5,
    colframe=black,
    boxrule=0.8pt,
    arc=2pt,
    left=6pt,
    right=6pt,
    top=6pt,
    bottom=6pt
]
\begin{theorem}
\label{theo:main1}
Given a prompt $x_t \sim D$, let $y_{t,i} \sim \pi_{\theta_t}(\cdot \mid x_t)$
denote a sampled response with reward $r_{t,i}$.
Suppose $G \ge 2$, and condition on the event $\mathcal{S}=\{1 \le R \le G-1\}$.
Then, for any $i \in [G]$, we have:
\[
\begin{aligned}
\mathbb{E}\!\left[\hat{A}_{t,i} \mid \mathcal{S}\right]
&< A_{t,i}, \ \text{if } p_t < 0.5; \\
\mathbb{E}\!\left[\hat{A}_{t,i} \mid \mathcal{S}\right]
&> A_{t,i}, \ \text{if } p_t > 0.5; \\
\mathbb{E}\!\left[\hat{A}_{t,i} \mid \mathcal{S}\right]
&= A_{t,i}, \ \text{if and only if } p_t = 0.5.
\end{aligned}
\]
\end{theorem}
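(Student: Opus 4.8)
The plan is to collapse the three-way comparison into a single statement about how conditioning on the non-degenerate event $\mathcal{S}=\{1\le R\le G-1\}$ distorts the group baseline $\hat{p}_t$. The first observation is purely algebraic: $\sum_{i=1}^{G}\hat{A}_{t,i}=\sum_{i=1}^{G}r_{t,i}-G\hat{p}_t=0$ for every realization, and since $\mathcal{S}$ is a symmetric event in $(r_{t,1},\dots,r_{t,G})$ the responses remain exchangeable after conditioning; hence $\mathbb{E}[\hat{A}_{t,i}\mid\mathcal{S}]$ does not depend on $i$ and must equal $0$. By the same exchangeability, $\mathbb{E}[r_{t,i}\mid\mathcal{S}]=\mathbb{E}[R/G\mid\mathcal{S}]=\mathbb{E}[\hat{p}_t\mid\mathcal{S}]$, so the conditional expected advantage is $\mathbb{E}[A_{t,i}\mid\mathcal{S}]=\mathbb{E}[\hat{p}_t\mid\mathcal{S}]-p_t$; consistently with the naming in Definition~\ref{def:expected_advantage}, I read the right-hand side of the theorem as this conditional expected advantage. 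The three claimed inequalities are therefore equivalent to $\operatorname{sign}\!\big(\mathbb{E}[\hat{p}_t\mid\mathcal{S}]-p_t\big)=\operatorname{sign}(1/2-p_t)$, i.e.\ to the statement that discarding the degenerate all-$0$ and all-$1$ groups biases the surviving baseline $\hat{p}_t$ toward $1/2$.

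The second step is a short moment computation for $R\sim\mathrm{Binomial}(G,p_t)$. Writing $\mathbb{E}[\hat{p}_t\mid\mathcal{S}]=\mathbb{E}[R\,\mathbf{1}_{\mathcal{S}}]/(G\,\mathbb{P}(\mathcal{S}))$, the $R=0$ atom contributes nothing to the numerator and the $R=G$ atom contributes $Gp_t^{G}$, so $\mathbb{E}[R\,\mathbf{1}_{\mathcal{S}}]=Gp_t-Gp_t^{G}$, while $\mathbb{P}(\mathcal{S})=1-(1-p_t)^{G}-p_t^{G}$. Subtracting $p_t$ and simplifying gives
\[
\mathbb{E}[\hat{p}_t\mid\mathcal{S}]-p_t
=\frac{p_t(1-p_t)\big[(1-p_t)^{G-1}-p_t^{G-1}\big]}{1-(1-p_t)^{G}-p_t^{G}} .
\]
The only place requiring care is this simplification, which I would verify via the identity $-Gp_t^{G}+Gp_t(1-p_t)^{G}+Gp_t^{G+1}=Gp_t(1-p_t)\big[(1-p_t)^{G-1}-p_t^{G-1}\big]$.

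To conclude, note that for $p_t\in(0,1)$ and $G\ge2$ the denominator $1-(1-p_t)^{G}-p_t^{G}$ is strictly positive --- it is exactly $\mathbb{P}(\mathcal{S})$, which must be positive for the conditioning to be well defined --- and the prefactor $p_t(1-p_t)$ is strictly positive, so $\mathbb{E}[\hat{p}_t\mid\mathcal{S}]-p_t$ has the same sign as $(1-p_t)^{G-1}-p_t^{G-1}$. Since $x\mapsto x^{G-1}$ is strictly increasing on $[0,1]$ (as $G-1\ge1$), this quantity is positive, zero, or negative exactly when $1-p_t>p_t$, $1-p_t=p_t$, or $1-p_t<p_t$, i.e.\ when $p_t<1/2$, $p_t=1/2$, or $p_t>1/2$; combined with $\mathbb{E}[\hat{A}_{t,i}\mid\mathcal{S}]=0$ and $\mathbb{E}[A_{t,i}\mid\mathcal{S}]=\mathbb{E}[\hat{p}_t\mid\mathcal{S}]-p_t$, this yields the three cases of the theorem. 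The main obstacle is conceptual rather than computational: because $\sum_{i}\hat{A}_{t,i}\equiv0$, the conditional mean of the estimator is identically zero, so the entire content of the theorem lies in the direction in which $\mathcal{S}$ biases the baseline $\hat{p}_t$; once that reduction is made, only the elementary Binomial identity above remains.
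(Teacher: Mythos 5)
Your proposal is correct and follows essentially the same route as the paper's Appendix~\ref{proof of theorem1}: compute $\mathbb{E}[\hat p_t\mid\mathcal S]=\frac{p_t-p_t^G}{1-(1-p_t)^G-p_t^G}$ from the binomial moment restricted to $\mathcal S$, then transfer the baseline bias to the advantage. Your two refinements are genuinely useful: the explicit observation that $\sum_i\hat A_{t,i}\equiv 0$ forces $\mathbb{E}[\hat A_{t,i}\mid\mathcal S]=0$ (so the theorem is entirely a statement about the baseline), and the factorization of the bias as $\frac{p_t(1-p_t)\left[(1-p_t)^{G-1}-p_t^{G-1}\right]}{1-(1-p_t)^G-p_t^G}$, which makes the three-way sign analysis immediate rather than asserted. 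Note also that your direction is the right one and the paper's prose is not: for $p_t<1/2$ the conditional baseline is \emph{over}estimated ($\mathbb{E}[\hat p_t\mid\mathcal S]>p_t$; e.g.\ $G=2$, $p_t=0.1$ gives $\mathbb{E}[\hat p_t\mid\mathcal S]=0.5$), which is exactly why the advantage is \emph{under}estimated — the sentence in Appendix~\ref{proof of theorem1} claiming the baseline "tends to be underestimated" for $p_t<1/2$ has the direction flipped, though the theorem's conclusion is unaffected.
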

\end{tcolorbox}

Theorem~\ref{theo:main1} shows that the expectation of the group-based advantage
estimator $\hat{A}_{t,i}$ is \emph{lower} than the true advantage $A_{t,i}$
for difficult prompts, and \emph{larger} than $A_{t,i}$ for easy prompts.
The estimator is unbiased only when $p_t=0.5$. 
This bias is amplified as $p_t$ deviates from $0.5$ and $G$ is smaller in Figure \ref{figure:bias} based on Lemma \ref{lemma:biasdetails}.

However, the expectation-level result in Theorem~\ref{theo:main1} alone is insufficient to characterize the \emph{probability} of overestimation or underestimation of $\hat{A}_{t,i}$.
We provide the following probabilistic result.

\begin{tcolorbox}[
    colback=gray!5,
    colframe=black,
    boxrule=0.8pt,
    arc=2pt,
    left=6pt,
    right=6pt,
    top=6pt,
    bottom=6pt
]
\begin{theorem}\label{theo:hard regime lower bound}
Under the condition of Theorem \ref{theo:main1}, suppose $x_t$ is a \textbf{hard} prompt ($p_t < 0.5$). Then, for any $\epsilon\in \left(0, \mathbb{E}[\hat{p}_t\mid \mathcal{S}]-p_t\right)$, we have:
\[
\begin{aligned}
    &\mathbb{P}\left(A_{t,i} - \hat{A}_{t,i} > \epsilon \mid \mathcal{S} \right) \\
    &= \frac{\sum_{k=\lfloor G\left(p_t+\epsilon\right) \rfloor + 1}^{G-1} \binom{G}{k} \, p_t^{k} (1-p_t)^{G-k}}{1 - (1-p_t)^G - p_t^G} .
\end{aligned}
\]
Simlarily, suppose $x_t$ is an \textbf{easy} prompt ($p_t > 0.5$). Then, for any $\epsilon\in \left(0, p_t - \mathbb{E}[\hat{p}_t\mid \mathcal{S}] \right)$, we have:
\[
\begin{aligned}
    &\mathbb{P}\left(\hat{A}_{t,i} - A_{t,i} > \epsilon \mid \mathcal{S}\right) \\
    &= \frac{\sum_{k=1}^{\lceil G\left(p_t - \epsilon\right) \rceil - 1} \binom{G}{k} \, p_t^{k} (1-p_t)^{G-k}}{1 - (1-p_t)^G - p_t^G} .
\end{aligned}
\]
\end{theorem}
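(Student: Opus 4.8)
The plan is to exploit the fact that the discrepancy between the two advantage estimators does not depend on the individual response, which collapses the statement to a one-dimensional binomial tail computation. Writing $R=\sum_{i=1}^{G}r_{t,i}$ and $\hat p_t=R/G$, for every $i\in[G]$ we have
\[
A_{t,i}-\hat A_{t,i} = (r_{t,i}-p_t)-(r_{t,i}-\hat p_t) = \hat p_t - p_t = \frac{R}{G}-p_t ,
\]
and symmetrically $\hat A_{t,i}-A_{t,i} = p_t - R/G$. Hence both events in the theorem are events about the single variable $R\sim\mathrm{Bin}(G,p_t)$, and since $\mathbb{P}(\mathcal S)=1-(1-p_t)^{G}-p_t^{G}$ supplies the common denominator, it remains to evaluate conditional binomial tails given $\mathcal S=\{1\le R\le G-1\}$.

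For the hard case I would first rewrite the event using integrality of $R$: for any real $c$, $R>c$ iff $R\ge\lfloor c\rfloor+1$ (including the boundary case $c\in\mathbb{Z}$), so with $c=G(p_t+\epsilon)$,
\[
\{A_{t,i}-\hat A_{t,i}>\epsilon\} = \bigl\{R>G(p_t+\epsilon)\bigr\} = \bigl\{R\ge \lfloor G(p_t+\epsilon)\rfloor+1\bigr\}.
\]
Intersecting with $\mathcal S$, the lower index obeys $\lfloor G(p_t+\epsilon)\rfloor+1\ge1$ trivially, while the hypothesis $\epsilon<\mathbb{E}[\hat p_t\mid\mathcal S]-p_t$ combined with the deterministic bound $\hat p_t\le (G-1)/G$ on $\mathcal S$ gives $p_t+\epsilon<\mathbb{E}[\hat p_t\mid\mathcal S]\le (G-1)/G$, hence $\lfloor G(p_t+\epsilon)\rfloor+1\le G-1$. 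So the intersection is $\{\lfloor G(p_t+\epsilon)\rfloor+1\le R\le G-1\}$; writing its probability as $\sum_{k=\lfloor G(p_t+\epsilon)\rfloor+1}^{G-1}\binom{G}{k}p_t^{k}(1-p_t)^{G-k}$ and dividing by $\mathbb{P}(\mathcal S)$ yields the stated ratio.

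The easy case is symmetric: using $R<c$ iff $R\le\lceil c\rceil-1$ with $c=G(p_t-\epsilon)$,
\[
\{\hat A_{t,i}-A_{t,i}>\epsilon\} = \bigl\{R<G(p_t-\epsilon)\bigr\} = \bigl\{R\le \lceil G(p_t-\epsilon)\rceil-1\bigr\},
\]
and intersecting with $\mathcal S$ forces the lower index to $1$. The upper index is $\le G-1$ automatically, and is $\ge1$ because $\epsilon<p_t-\mathbb{E}[\hat p_t\mid\mathcal S]$ together with $\hat p_t\ge 1/G$ on $\mathcal S$ gives $p_t-\epsilon>\mathbb{E}[\hat p_t\mid\mathcal S]\ge 1/G$, so $\lceil G(p_t-\epsilon)\rceil-1\ge1$. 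Dividing the resulting sum $\sum_{k=1}^{\lceil G(p_t-\epsilon)\rceil-1}\binom{G}{k}p_t^{k}(1-p_t)^{G-k}$ by $\mathbb{P}(\mathcal S)$ gives the second formula.

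The non-routine content is entirely bookkeeping, and this is where the hypotheses enter: (i) converting the strict real inequalities into inclusive integer inequalities through floor/ceiling, handling the degenerate case in which $G(p_t\pm\epsilon)$ is itself an integer; and (ii) verifying the summation limits lie in $\{1,\dots,G-1\}$, for which I use the immediate bounds $1/G\le\mathbb{E}[\hat p_t\mid\mathcal S]\le(G-1)/G$ on $\mathcal S$ and invoke Theorem~\ref{theo:main1} to know $\mathbb{E}[\hat p_t\mid\mathcal S]>p_t$ for hard prompts and $<p_t$ for easy prompts, so that the admissible interval for $\epsilon$ is nonempty. I expect step (ii), pinning down exactly which summation endpoints survive the conditioning on $\mathcal S$, to be the only place that requires care; everything else follows from the identity $A_{t,i}-\hat A_{t,i}=\hat p_t-p_t$ and $\mathbb{P}(\mathcal S)=1-(1-p_t)^{G}-p_t^{G}$.
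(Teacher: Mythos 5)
Your proposal is correct and follows essentially the same route as the paper: reduce the event to the binomial variable $R$ via the identity $A_{t,i}-\hat A_{t,i}=\hat p_t-p_t$, convert the strict inequality to integer summation limits with the floor/ceiling, intersect with $\mathcal S$, and divide by $\mathbb P(\mathcal S)=1-(1-p_t)^G-p_t^G$. If anything, your bookkeeping is slightly more careful than the paper's (which only treats the hard case explicitly and appeals to symmetry, and does not verify that the summation limits stay inside $\{1,\dots,G-1\}$ under the stated range of $\epsilon$).
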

\end{tcolorbox}

Theorem~\ref{theo:hard regime lower bound} provides a distribution-level
characterization of how likely group-relative advantage estimation is to
\emph{underestimate} or \emph{overestimate} the true advantage, depending on
prompt difficulty.
In contrast to expectation-level results, this theorem quantifies the exact
probability mass of large estimation errors under finite group sizes.

It is well known that generating multiple rollouts per prompt is computationally
expensive in practice. Consequently, existing RLVR methods typically sample only
a small number of responses (e.g., $G=8$) for each prompt $x_t$ to
estimate $\hat{p}_t$ \citep{DBLP:journals/corr/abs-2505-08364, liu2025spec_rl, DBLP:journals/corr/abs-2508-20151} .
Motivated by this practical constraint, we derive the following corollaries
based on Theorem~\ref{theo:hard regime lower bound}, which explicitly characterize
the estimation behavior under small group sizes.

\begin{corollary}
\label{cor:prob_bias}
Under the condition of Theorem~\ref{theo:hard regime lower bound}, suppose the
group size satisfies $2 \le G \le 8$, and assume that $p_t$ is uniformly
distributed over $[0,1]$.
Then, for any $i \in [G]$, the following inequalities hold:
\[ 
\begin{aligned}
\mathbb{P}\!\left(\hat{A}_{t,i} < A_{t,i} \mid \mathcal{S},\; p_t < 0.5 \right)
&> 0.63, \\
\mathbb{P}\!\left(\hat{A}_{t,i} > A_{t,i} \mid \mathcal{S},\; p_t > 0.5 \right)
&> 0.63, \\
\mathbb{P}\!\left(\hat{A}_{t,i} < A_{t,i} \mid \mathcal{S},\; p_t < 0.25 \right)
&> 0.78, \\
\mathbb{P}\!\left(\hat{A}_{t,i} > A_{t,i} \mid \mathcal{S},\; p_t > 0.75 \right)
&> 0.78 \\
\mathbb{P}\!\left(\hat{A}_{t,i} < A_{t,i} \mid \mathcal{S},\; p_t < 0.125 \right)
&= 1.00 , \\
\mathbb{P}\!\left(\hat{A}_{t,i} > A_{t,i} \mid \mathcal{S},\; p_t > 0.875 \right)
&= 1.00.
\end{aligned}
\]
\end{corollary}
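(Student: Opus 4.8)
The plan is to send $\epsilon\to 0^{+}$ in Theorem~\ref{theo:hard regime lower bound} to get a closed form for the conditional probability at a fixed $p_t$, and then average it against the uniform prior on $p_t$ restricted to each threshold interval. First I would note that $A_{t,i}-\hat{A}_{t,i}=\hat{p}_t-p_t$ does not depend on $i$, so for a hard prompt the event $\{\hat{A}_{t,i}<A_{t,i}\}$ coincides with $\{\hat{p}_t>p_t\}=\{R>Gp_t\}$. By Theorem~\ref{theo:main1} we have $\mathbb{E}[\hat{p}_t\mid\mathcal{S}]-p_t>0$ whenever $p_t<0.5$, so Theorem~\ref{theo:hard regime lower bound} applies for all small $\epsilon>0$; since $\lfloor G(p_t+\epsilon)\rfloor=\lfloor Gp_t\rfloor$ for $\epsilon$ small enough and $\mathbb{P}(A_{t,i}-\hat{A}_{t,i}>\epsilon\mid\mathcal{S})\uparrow\mathbb{P}(A_{t,i}-\hat{A}_{t,i}>0\mid\mathcal{S})$ by continuity of measure, letting $\epsilon\downarrow 0$ yields, for every $p_t\in(0,0.5)$,
\[
f_G(p_t)\coloneqq\mathbb{P}\!\left(\hat{A}_{t,i}<A_{t,i}\mid\mathcal{S},p_t\right)
=\frac{\sum_{k=\lfloor Gp_t\rfloor+1}^{G-1}\binom{G}{k}p_t^{k}(1-p_t)^{G-k}}{\,1-(1-p_t)^{G}-p_t^{G}\,}.
\]

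Next I would average over $p_t\sim\mathrm{Uniform}[0,1]$. Writing $g_G(p)\coloneqq\mathbb{P}(\mathcal{S}\mid p_t=p)=1-(1-p)^{G}-p^{G}$, the tower rule gives for every $c\in(0,1)$
\[
\mathbb{P}\!\left(\hat{A}_{t,i}<A_{t,i}\mid\mathcal{S},\;p_t<c\right)
=\frac{\int_0^{c}f_G(p)\,g_G(p)\,dp}{\int_0^{c}g_G(p)\,dp}
=\frac{\int_0^{c}\sum_{k=\lfloor Gp\rfloor+1}^{G-1}\binom{G}{k}p^{k}(1-p)^{G-k}\,dp}{\,c-\frac{1-(1-c)^{G+1}}{G+1}-\frac{c^{G+1}}{G+1}\,}.
\]
The change of variables $p\mapsto 1-p$ (equivalently $R\mapsto G-R$) fixes $\mathcal{S}$ and turns the easy-prompt overestimation event into the hard-prompt underestimation event, so $\mathbb{P}(\hat{A}_{t,i}>A_{t,i}\mid\mathcal{S},p_t>1-c)=\mathbb{P}(\hat{A}_{t,i}<A_{t,i}\mid\mathcal{S},p_t<c)$. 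Taking $c\in\{0.5,0.25,0.125\}$ (so $1-c\in\{0.5,0.75,0.875\}$) therefore reduces all six claims to the three hard-prompt statements.

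For the two equalities, I would observe that when $2\le G\le 8$ and $p\in(0,0.125)$ we have $Gp<1$, hence $\lfloor Gp\rfloor=0$, so the numerator integrand collapses to $\sum_{k=1}^{G-1}\binom{G}{k}p^{k}(1-p)^{G-k}=g_G(p)$, which is exactly the denominator integrand; the ratio is identically $1$. (Intuitively, if $p_t<1/G$ then any non-degenerate group has $\hat{p}_t\ge 1/G>p_t$, so underestimation is certain.) For the remaining four bounds ($c\in\{0.5,0.25\}$), the numerator integrand is a polynomial on each subinterval $(j/G,(j+1)/G)$, so the integral equals $\sum_{0\le j<Gc}\int_{j/G}^{\min\{(j+1)/G,\,c\}}\sum_{k=j+1}^{G-1}\binom{G}{k}p^{k}(1-p)^{G-k}\,dp$, a finite sum of elementary integrals; evaluating the resulting ratio for $G=2,\dots,8$ and minimizing over $G$ yields the constants $0.63$ and $0.78$. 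The main obstacle is exactly this final evaluation: it is a finite but bookkeeping-heavy computation, and it is delicate because the ratio decreases in $G$ (tending to $1/2$ as $G\to\infty$), so the binding case is $G=8$ and must be checked carefully — I would record the exact rational values in a table rather than display each integral.
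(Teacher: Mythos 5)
Your overall strategy mirrors the paper's: send $\epsilon\to0^{+}$ in Theorem~\ref{theo:hard regime lower bound} to get a closed form at fixed $p_t$, average over the uniform prior, transfer the easy-prompt cases by the symmetry $p\mapsto 1-p$, and observe that for $p_t<1/G$ the numerator collapses to the denominator (so the last two probabilities equal $1$, exactly as the paper derives via Corollary~\ref{cor:extreme_sure}). However, there is one substantive divergence that breaks your final step. You average with the tower rule, weighting by $g_G(p)=\mathbb{P}(\mathcal S\mid p_t=p)$, i.e.\ you compute $\int_0^c f_G(p)g_G(p)\,dp\big/\int_0^c g_G(p)\,dp$. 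That is indeed the literal meaning of $\mathbb{P}(\hat A_{t,i}<A_{t,i}\mid \mathcal S,\,p_t<c)$, but it does \emph{not} reproduce the stated constants. The paper instead computes the unweighted average $\frac{1}{c}\int_0^c f_G(p)\,dp=\mathbb{E}_{p\sim U(0,c)}\bigl[\mathbb{P}(\hat A_{t,i}<A_{t,i}\mid\mathcal S,p_t=p)\bigr]$ (see Equation~\eqref{eq:jifen} and Tables~\ref{tab:P_vs_G_1}--\ref{tab:P_vs_G_2}), which drops the $g_G$ weight; at the binding case $G=8$ this gives $0.641$ and $0.781$.

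Concretely, for $G=8$ and $c=1/2$ your weighted formula evaluates (using $\binom{8}{k}\int_0^x p^k(1-p)^{8-k}\,dp=\tfrac19\,\mathbb{P}(\mathrm{Bin}(9,x)\ge k+1)$) to a numerator of about $0.451$ and a denominator of $1-\tfrac{2}{9}(1-2^{-9})-\tfrac{2}{9}2^{-9}=7/9\approx0.778$, hence a ratio of about $0.580<0.63$; for $c=1/4$ the analogous computation gives about $0.708<0.78$. The discrepancy arises because $g_G(p)$ is small precisely where $f_G(p)$ is close to $1$ (near $p=0$), so your weighting down-weights the region driving the bound. So as written, your plan's last sentence (``evaluating the resulting ratio for $G=2,\dots,8$ \dots yields the constants $0.63$ and $0.78$'') would fail for the four non-trivial inequalities. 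To recover the paper's constants you must adopt the paper's interpretation of the left-hand side as an average over $p_t$ of the $\mathcal S$-conditional probability, not as a probability conditioned on the joint event $\mathcal S\cap\{p_t<c\}$; alternatively, if you keep your (standard) reading, the correct conclusion is that the stated thresholds need to be weakened (to roughly $0.57$ and $0.70$). The two equalities and the symmetry reduction are unaffected by this issue, and your $\epsilon\downarrow0$ limit argument is fine. One minor additional point: the paper only tabulates $G\in\{2,4,6,8\}$, whereas the statement covers all $2\le G\le 8$; your intention to check every $G$ is the more careful route, though under either formula the minimum over $G$ is attained at $G=8$.
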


Corollary~\ref{cor:prob_bias} shows that, with high probability, the group-relative
advantage estimator $\hat{A}_{t,i}$ \emph{underestimates} the true advantage
$A_{t,i}$ for hard prompts and \emph{overestimates} $A_{t,i}$ for easy prompts,
 under the practical set of $G$.
Moreover, as the prompt difficulty becomes more extreme (i.e., as
$\Delta$ increases), this bias becomes more pronounced, which is also demonstrated in Colloary \ref{corollary:sufficient G}.

\begin{corollary} \label{corollary:sufficient G}
Under the condition of Corollary \ref{cor:prob_bias},
suppose $G\geq 6$. The following inequalities hold:
\[
        \begin{aligned}
&\mathbb{P}\!\left(\hat{A}_{t,i} < A_{t,i} \mid \mathcal{S},\; p_t < \frac{2}{G} \right)
> 0.78, \\
&\mathbb{P}\!\left(\hat{A}_{t,i} > A_{t,i} \mid \mathcal{S},\; p_t > \frac{G-2}{G} \right)
> 0.78.
\end{aligned}
\]  
\end{corollary}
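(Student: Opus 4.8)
The plan is to push everything through Theorem~\ref{theo:hard regime lower bound}, collapse the probability to a single deterministic integral in $G$, and then bound that integral. First, a symmetry reduction: the relabeling $r_{t,j}\mapsto 1-r_{t,j}$ (equivalently $p_t\mapsto 1-p_t$, $R\mapsto G-R$) fixes $\mathcal S$, fixes the law $p_t\sim\mathrm{Unif}[0,1]$, sends $\hat A_{t,i}\mapsto-\hat A_{t,i}$ and $A_{t,i}\mapsto-A_{t,i}$, and maps $\{p_t>(G-2)/G\}$ to $\{p_t<2/G\}$; hence the easy‑prompt inequality is the hard‑prompt one in disguise, and it suffices to prove $\mathbb P(\hat A_{t,i}<A_{t,i}\mid\mathcal S,\,p_t<2/G)>0.78$. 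Since $\hat A_{t,i}<A_{t,i}\iff\hat p_t>p_t\iff R>Gp_t$, an event not depending on $i$, I let $\epsilon\downarrow 0$ in Theorem~\ref{theo:hard regime lower bound} (the $\epsilon$-range is nonempty because $p_t<0.5$ forces $\mathbb E[\hat p_t\mid\mathcal S]>p_t$ by Theorem~\ref{theo:main1}, and $\{Gp_t\in\mathbb Z\}$ is $p_t$-null) to get, for a.e.\ $p_t$,
\[
\mathbb P(\hat A_{t,i}<A_{t,i}\mid\mathcal S,p_t)=\frac{\sum_{k=\lfloor Gp_t\rfloor+1}^{G-1}\binom{G}{k} p_t^{k}(1-p_t)^{G-k}}{1-(1-p_t)^G-p_t^G}.
\]
For $G\ge 6$ and $p_t<2/G$ one has $Gp_t<2$, so this ratio is $1$ when $p_t\in(0,1/G)$ and equals $1-\dfrac{Gp_t(1-p_t)^{G-1}}{1-(1-p_t)^G-p_t^G}$ when $p_t\in[1/G,2/G)$. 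Averaging over $p_t\sim\mathrm{Unif}(0,2/G)$ (each subinterval carrying mass $\tfrac{1}{2}$), the claim is equivalent to the deterministic statement
\[
I(G):=\frac{G}{2}\int_{1/G}^{2/G}\frac{Gp(1-p)^{G-1}}{1-(1-p)^G-p^G}\,dp<0.22\qquad\text{for all }G\ge 6,
\]
and the substitution $p=u/G$ rewrites $I(G)=\tfrac{1}{2}\int_1^2\frac{u(1-u/G)^{G-1}}{1-(1-u/G)^G-(u/G)^G}\,du$, whose integrand tends as $G\to\infty$ to $u/(e^u-1)$, with limiting value $\tfrac{1}{2}\int_1^2\frac{u}{e^u-1}\,du\approx 0.218<0.22$.

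To verify $I(G)<0.22$ I would split into $6\le G\le G_0$ and $G>G_0$ for an explicit cutoff $G_0$. For small $G$, the numerator has the elementary antiderivative $-(1-p)^G+\tfrac{G}{G+1}(1-p)^{G+1}$, and the denominator $1-(1-p)^G-p^G$ is positive and increasing on $[1/G,2/G]$; so on each piece of a sufficiently fine partition of $[1/G,2/G]$ one replaces the denominator by its left-endpoint value and integrates the numerator exactly, obtaining a rigorous rational upper bound for $I(G)$ that one checks is $<0.22$. For $G>G_0$, bound the integrand using $1-x\le e^{-x}$ (so the denominator is $\ge 1-e^{-u}-(2/G)^G$), $(1-u/G)^{G-1}\le e^{-u(G-1)/G}$, and $(u/G)^G\le(2/G)^G$, which gives $I(G)\le\tfrac{1}{2}\int_1^2\frac{u}{e^u-1}\,du+C/G$ for an explicit constant $C$; choosing $G_0$ with $C/G_0<0.22-\tfrac{1}{2}\int_1^2\frac{u}{e^u-1}\,du$ closes the gap. (If one can also show that the integrand, or just $I(G)$ itself, is monotone in $G$ for $G\ge 6$, the whole argument collapses to certifying $I(6)<0.22$.)

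The main obstacle is that the inequality is genuinely tight: $I(G)$ sits around $0.218$–$0.219$ for every $G\ge 6$, so the margin to $0.22$ is only a few parts in a thousand. Consequently any crude pointwise estimate — for instance replacing $(1-u/G)^{G-1}$ by $e^{-u}e^{u/G}$ and pulling an $e^{2/G}$ factor out of the integral — overshoots $0.22$ precisely for the small values of $G$ that are decisive, so the finite check for $6\le G\le G_0$ must be carried out with honest numerical precision and the tail constant $C$ must be kept sharp.
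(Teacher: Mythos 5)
Your proposal is correct and follows essentially the same route as the paper: reduce the easy case to the hard case by symmetry, collapse the conditional probability to the binomial-tail ratio of Theorem~\ref{theo:hard regime lower bound} with $\lfloor Gp_t\rfloor\in\{0,1\}$ on $(0,2/G)$, substitute $u=Gp$, and compare against the Poisson limit $1-\tfrac{1}{2}\int_1^2\frac{u}{e^u-1}\,du\approx 0.7818$ together with a finite numerical check (the paper tabulates $G=2,\dots,6$ and gets $0.78115$ at $G=6$). If anything, your plan is more careful than the paper's, which infers the claim for all $G\ge 6$ from the $G=6$ value and the limit without an explicit monotonicity or $C/G$ tail argument; your observation that the margin is only a few parts in a thousand is exactly why such an explicit tail bound (or a monotonicity proof) is needed to make the "for all $G\ge 6$" claim fully rigorous.
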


\begin{corollary}
\label{cor:extreme_sure}
Under the condition of Theorem \ref{theo:hard regime lower bound},
suppose $G \ge 2$.
Then, for any $i \in [G]$, the following inequalities hold surely:
\[
\begin{aligned}
\hat{A}_{t,i} &< A_{t,i}, \quad \text{if } p_t < \tfrac{1}{G}, \\
\hat{A}_{t,i} &> A_{t,i}, \quad \text{if } p_t > \tfrac{G-1}{G}.
\end{aligned}
\]
\end{corollary}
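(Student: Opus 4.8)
The plan is to exploit the fact that the realized reward $r_{t,i}$ cancels in the difference between the true and estimated advantages. Writing $A_{t,i} = r_{t,i} - p_t$ and $\hat A_{t,i} = r_{t,i} - \hat p_t$, we immediately get
\[
A_{t,i} - \hat A_{t,i} = \hat p_t - p_t,
\]
which depends neither on $i$ nor on the particular outcome of $y_{t,i}$, only on the group baseline $\hat p_t = R/G$. So the claim reduces to an elementary comparison between the constant $p_t$ and the empirical mean $\hat p_t$, and since the right-hand side is independent of $i$, the conclusion for "any $i\in[G]$" is automatic.

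Next I would invoke the conditioning on $\mathcal{S} = \{1 \le R \le G-1\}$: on this event $R$ is an integer in $\{1,\dots,G-1\}$, hence $1/G \le \hat p_t \le (G-1)/G$ holds \emph{surely} (on every outcome in $\mathcal{S}$), not merely in expectation. In the first case, $p_t < 1/G \le \hat p_t$ forces $A_{t,i} - \hat A_{t,i} = \hat p_t - p_t > 0$, i.e.\ $\hat A_{t,i} < A_{t,i}$, on every outcome in $\mathcal{S}$. Symmetrically, in the second case $p_t > (G-1)/G \ge \hat p_t$ yields $\hat p_t - p_t < 0$, i.e.\ $\hat A_{t,i} > A_{t,i}$. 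Because both inequalities hold pointwise on $\mathcal{S}$, they hold surely, which is exactly the statement.

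I do not expect a genuine obstacle here: this corollary is the $\epsilon \to 0^+$ "sure" limit of the probabilistic bounds in Theorem~\ref{theo:hard regime lower bound}. Indeed, for a hard prompt with $p_t < 1/G$ we have $\lfloor G(p_t+\epsilon)\rfloor = 0$ for small $\epsilon$, so the numerator in that theorem's formula becomes $\sum_{k=1}^{G-1}\binom{G}{k}p_t^k(1-p_t)^{G-k} = 1 - (1-p_t)^G - p_t^G$, making the conditional probability equal to $1$; one could derive the result that way instead. The only point requiring a little care is to state the reduction $A_{t,i} - \hat A_{t,i} = \hat p_t - p_t$ cleanly and to emphasize that the bounds $1/G \le \hat p_t \le (G-1)/G$ are deterministic consequences of conditioning on $\mathcal{S}$, which is what justifies the stronger "holds surely" conclusion rather than a high-probability one.
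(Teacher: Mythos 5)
Your proof is correct and matches the paper's own argument: both reduce the claim to the deterministic bounds $1/G \le \hat p_t \le (G-1)/G$ on the event $\mathcal{S}$ and compare $\hat p_t$ with $p_t$ via $A_{t,i}-\hat A_{t,i}=\hat p_t-p_t$. Your explicit statement of that cancellation is a slightly cleaner presentation of the same idea.
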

Corollary~\ref{cor:extreme_sure} shows that the group-based advantage estimator
$\hat{A}_{t,i}$ necessarily \emph{underestimates} the true advantage $A_{t,i}$
for extremely difficult prompts ($p_t < 1/G$), and \emph{overestimates} $A_{t,i}$
for extremely easy prompts ($p_t > (G-1)/G$).
Detailed derivation process is presented in appendix~\ref{appendix:theoretical proof}.

\begin{tcolorbox}[
    colback=blue!5,
    colframe=blue!60!black,
    boxrule=0.8pt,
    arc=2pt,
    left=6pt,
    right=6pt,
    top=6pt,
    bottom=6pt
]
\paragraph{Discovery.}
Group-relative advantage is provably biased except at $p_t = 0.5$.
Specifically, it systematically underestimates the true advantage for hard
prompts and overestimates it for easy prompts.
Moreover, this bias becomes deterministic in extreme regimes: the estimator
necessarily underestimates the true advantage for extremely difficult prompts
and necessarily overestimates it for extremely easy prompts.
\end{tcolorbox}

\begin{tcolorbox}[
    colback=gray!5,
    colframe=black,
    boxrule=0.8pt,
    arc=2pt,
    left=6pt,
    right=6pt,
    top=6pt,
    bottom=6pt
]
\paragraph{Discussion.} 
While the binary reward assumption covers many standard RLVR setups — particularly those using hard verifier outcomes — we recognize that real-world reward signals can be more general. To address this, we extend our analysis in Appendix~\ref{appendix:nonbinary} to continuous bounded reward distributions. The results suggest that the core bias phenomenon is not an artifact of the Bernoulli reward assumption but is prevalent across a broader class of bounded reward models.
\end{tcolorbox}

\begin{figure}[t]
  \centering
  \includegraphics[width=0.8\columnwidth, trim=0 0 0 15,clip]{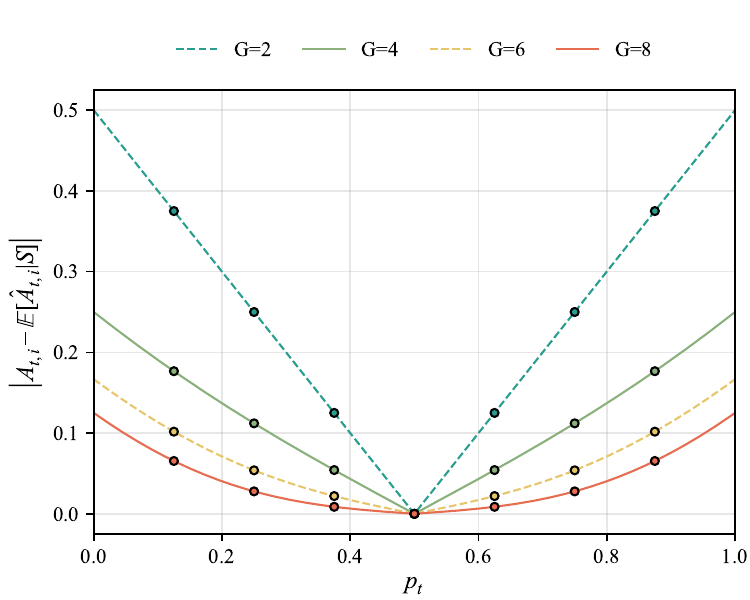}
  \caption{Illustration of advantage bias $|A_{t,i} - \mathbb{E}  [\hat{A}_{t,i} \mid \mathcal{S} ]|$ as a function of $p_t$ and
  group size $G$.}
  \label{figure:bias}
\end{figure}

\section{Proposed Solution}

\begin{figure*}[t]
    \centering
    \includegraphics[width=\linewidth, trim=0 150 0 60,clip]{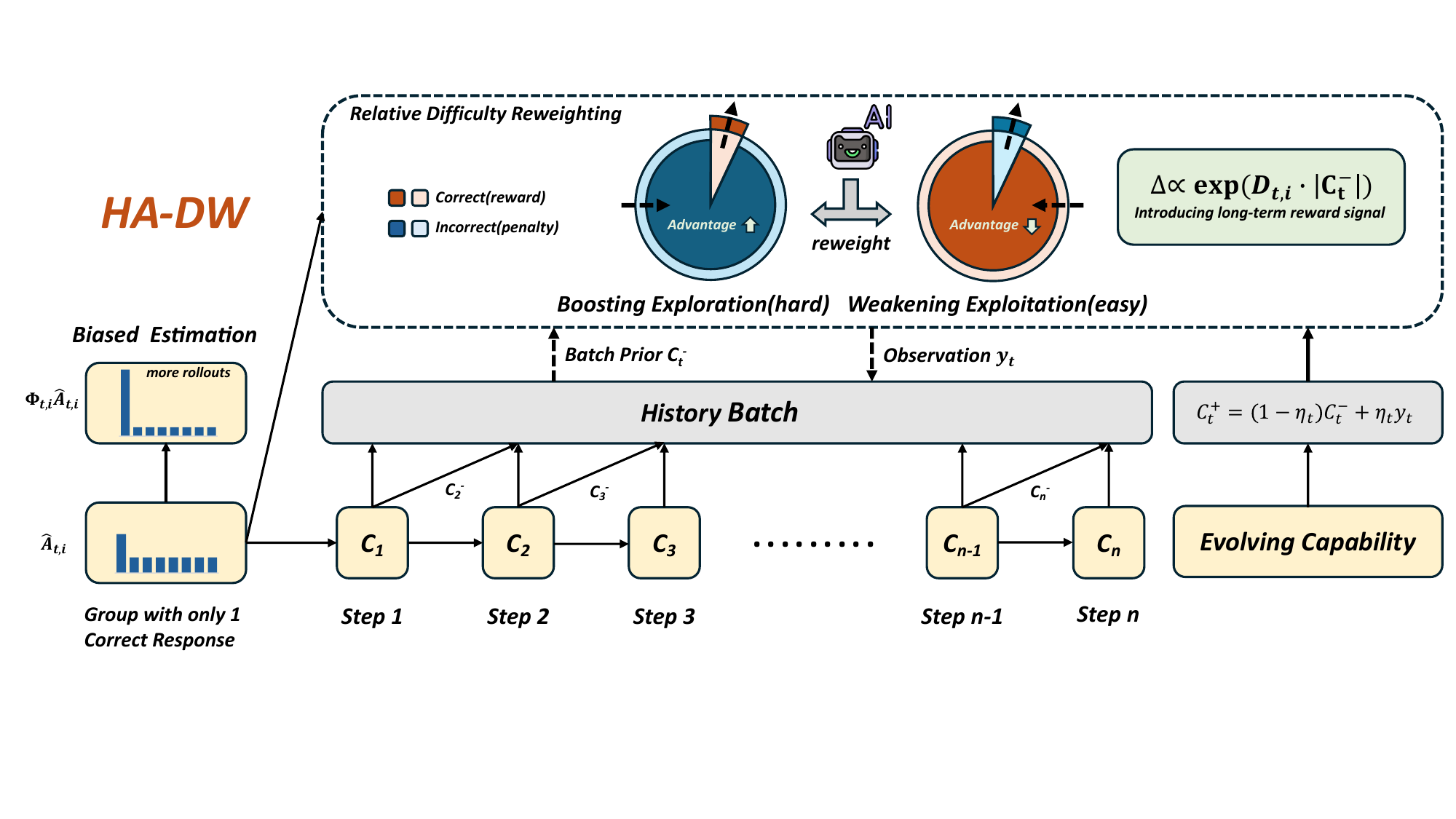}
    \caption{HA-DW consists of two collaborative phases. In the first phase, an evolving difficulty anchor incorporates cross-batch historical information by propagating the model’s prior through a history buffer, capturing long-term reward trends. In the second phase, prompt weights are adaptively adjusted based on their estimated difficulty under the model’s evolving state, compensating for biased advantage estimates.}
    \label{fig:main-picture}
\end{figure*}

Since the group-based advantage estimator is biased, we propose an algorithm to
adjust the advantage estimation accordingly. The proposed approach consists of two key components. First, we introduce a framework that incorporates cross-batch information into RL training, enabling a history-aware anchor for prompt difficulty. Second, we design an adaptive advantage reweighting algorithm to correct the induced bias.

\subsection{Evolving Difficulty Anchor}

To track the evolving model state across batches, we propose the cross-batch difficulty anchor
framework that integrates long-term reward trends and historical information.
Let $B_t$ denote the total number of responses in batch $t$.
Model updates are guided by observations of the current batch’s prompt accuracy
$y_t$ together with historical information, defined as:
\begin{equation}
    y_t = \frac{K_t}{B_t}, \qquad
    K_t = \sum_{i=1}^{B_t} r_{t,i}.
\end{equation}

We treat the model’s solving capability $C_t$ as a latent belief state.
At training step $t$, the observation $y_t$ is used to update the prior belief
$C_t^{-}$ to the posterior belief $C_t^{+}$ via a Kalman-style update
\citep{DBLP:journals/automatica/BattilottiBCDG26, DBLP:journals/automatica/Zhang26}:
\begin{equation}\label{eq:soft-evolve}
    C_t^{+} = (1 - \eta_t)\, C_t^{-} + \eta_t\, y_t,
    \quad \eta_t \in [0,1].
\end{equation}

The forgetting factor $\eta_t$ controls the influence of historical information
and is dynamically modulated by model stability. Specifically, we compute the
average belief over the previous $m$ batches as:
\begin{equation}
    \bar{C}_{t} = \frac{1}{m} \sum_{j=1}^{m} C_{t-j},
\end{equation}
and define the corresponding standard deviation:
\begin{equation}
\label{eq:def_sigma_t}
    \sigma_t =
    \sqrt{
        \frac{1}{m}
        \sum_{j=1}^{m}
        \left(C_{t-j} - \bar{C}_t\right)^2
    }.
\end{equation}

The adaptive forgetting factor is then given by:
\begin{equation}
    \eta_t = \eta \cdot \sigma_t,
\end{equation}
where $\eta$ is a task-dependent hyperparameter.
Intuitively, a larger $\eta_t$ is used during early training stages to capture
rapid capability shifts, while a smaller $\eta_t$ is adopted in later, more
stable stages to preserve historical information and reduce noise.

Between consecutive steps, the posterior belief $C_t^{+}$ serves as the prior
belief for the next batch:
\begin{equation}
    C_t^{+} \rightarrow C_{t+1}^{-}.
\end{equation}

Overall, $C_t$ enables the model to aggregate information across historical
batches via belief updates and to condition its training strategy on this
evolving belief. This evolving belief serves as a history-aware anchor
for the subsequent difficulty-adaptive reweighting strategy.
We also provide an alternative, \emph{hard} update variant of $C_t$ in
Appendix~\ref{appendix:hardanchor}.

\subsection{History Aware Adaptive Difficulty Weighting \textbf{(HA-DW)}}


  



To rectify the inherent bias in group-based advantage estimation, we introduce
HA-DW, which dynamically
adjusts advantage weights based on the model’s evolving state while
incorporating long-term reward signals.
Coupled with the evolving difficulty anchor, we define the history-based prompt difficulty
as:
\begin{equation}
    \mathrm{diff}^{\mathrm{his}}_t = \hat{p}_t - C_t^{-},
\end{equation}
where $\mathrm{diff}^{\mathrm{his}}_t$ captures both the magnitude and direction
of a prompt’s difficulty relative to the current model belief.

To determine the \emph{direction} of adjustment, we use the evolving difficulty anchor as a reference and define:
\begin{equation}
    D_{t,i}
    =
    -\,\mathrm{sgn}\!\left(\hat{A}_{t,i}\right)
    \cdot
    \mathrm{sgn}\!\left(\mathrm{diff}^{\mathrm{his}}_t\right),
\end{equation}
where $\mathrm{sgn}(\cdot)$ denotes the sign function.

Next, we quantify the \emph{magnitude} of adjustment using the absolute
history-based difficulty:
\begin{equation}
    M_t = \left| \mathrm{diff}^{\mathrm{his}}_t \right|.
\end{equation}

Here, $M_t$ measures the extent to which the prompt deviates from the model’s
current capability.

We then define the history-aware reweighting factor as:
\begin{equation}
\label{eq:reweighting factor}
    \Phi_{t,i}
    =
    \lambda_{\mathrm{scale}}
    \cdot
    \exp\!\left(D_{t,i} \cdot M_t\right),
\end{equation}
where $\lambda_{\mathrm{scale}}$ is a scaling constant, and the exponential form
ensures smooth and multiplicative adjustment of advantage weights.
The resulting HA-DW objective is:
\begin{equation}
\begin{aligned}
    L_{\mathrm{HA\text{-}DW}}(\theta)
    =
    \frac{1}{G}
    \sum_{i=1}^{G}
    &\psi\!\left(
        \frac{\pi_{\theta}(y_{t,i} \mid x_t)}
             {\pi_{\theta_{\mathrm{old}}}(y_{t,i} \mid x_t)}
    \right) \cdot \\
& \phi\!\left(\hat{A}_{t,i}\right)
    \cdot
    \Phi_{t,i},
\end{aligned}
\end{equation}
where $\psi(\cdot)$ and $\phi(\cdot)$ follow specific definitions in group-relative RL algorithms.

Intuitively, $\Phi_{t,i}$ amplifies the estimated advantage for difficult
prompts—where group-based estimation tends to be conservative—and suppresses it
for easy prompts—where overestimation is prevalent—thereby correcting systematic
bias identified in our analysis.
HA-DW can be seamlessly integrated as a plug-and-play module into GRPO and its
variants, improving reasoning performance under fixed rollouts while effectively mitigating biased advantage estimation.
Detailed instantiations for GRPO and related algorithms are provided in the
appendix~\ref{appendix:loss func}

\section{Theoretical Analysis}

In this section, we provide a theoretical analysis of the effectiveness of the
proposed adjustment strategy. 
We begin by analyzing how reweighting the empirical baseline $\hat{p}_t$
affects the expected bias.


\begin{lemma}[Baseline Rectification]
\label{lemma:adjustment of baseline}
Given a prompt $x_t \sim D$ and the policy $\pi_{\theta_t}$, let
$\tilde{p}_t = c \cdot \hat{p}_t$ be the rectified group baseline.
Assume $p_t \in [\Delta,\, 1-\Delta]$ for some $\Delta \in (0,1/2]$.
Given any $\delta \in (0,1)$,
we can define that:
\begin{equation}
\label{eq:eps_delta}
\epsilon_\delta
:=
\sqrt{\frac{1}{2G}\log\!\left(
\frac{2}{\delta\big(1-(1-\Delta)^G-\Delta^G\big)}
\right)}.
\end{equation}
Let
\[
\begin{aligned}
&I_t := \bigl[\hat p_t-\epsilon_\delta,\ \hat p_t+\epsilon_\delta\bigr]\cap[\Delta,1-\Delta], \\
& A(p):=1-(1-p)^G-p^G.
\end{aligned}
\]
Fix any $\epsilon>0$,
we define:
\begin{equation}
\label{eq:cl_ch_def}
c_{\mathrm{low}}
:=
\sup_{p\in I_t}
\frac{(p-\epsilon)\,A(p)}{p(1-p^{G-1})}, 
\end{equation}
and:
\begin{equation}
c_{\mathrm{high}}
:=
\inf_{p\in I_t}
\frac{(p+\epsilon)\,A(p)}{p(1-p^{G-1})}.
\end{equation}
Then, with
probability at least $1-\delta$ conditional on $\mathcal S$, for any choice
\begin{equation}
\label{eq:c_interval_hatp}
c \in (c_{\mathrm{low}},\ c_{\mathrm{high}}),
\end{equation}
we can derive that:
\[
\mathbb E[\tilde p_t\mid \mathcal S] \in (p_t-\epsilon,\ p_t+\epsilon).
\]
\end{lemma}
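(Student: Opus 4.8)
The plan is to split the argument into three parts: (i) an exact closed form for the conditional expectation $\mathbb{E}[\hat{p}_t\mid\mathcal{S}]$, which turns $\mathbb{E}[\tilde{p}_t\mid\mathcal{S}]=c\,\mathbb{E}[\hat{p}_t\mid\mathcal{S}]$ into an explicit function of $p_t$; (ii) a concentration step placing the unknown $p_t$ inside the data-dependent set $I_t$ with conditional probability at least $1-\delta$; and (iii) a sign-safe algebraic rearrangement that, given $p_t\in I_t$, converts the desired two-sided bound on $\mathbb{E}[\tilde{p}_t\mid\mathcal{S}]$ into the membership condition $c\in(c_{\mathrm{low}},c_{\mathrm{high}})$.

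For (i), since $R\sim\mathrm{Binomial}(G,p_t)$ I would use $\sum_{k=0}^{G} k\binom{G}{k}p_t^{k}(1-p_t)^{G-k}=Gp_t$; removing only the $k=0$ and $k=G$ terms (exactly what conditioning on $\mathcal{S}$ does) gives $\mathbb{E}[R\,\mathbf{1}_{\mathcal{S}}]=Gp_t-Gp_t^{G}=Gp_t(1-p_t^{G-1})$, while $\mathbb{P}(\mathcal{S})=1-(1-p_t)^{G}-p_t^{G}=A(p_t)$. Dividing yields $\mathbb{E}[\hat{p}_t\mid\mathcal{S}]=\frac{p_t(1-p_t^{G-1})}{A(p_t)}$ and hence $\mathbb{E}[\tilde{p}_t\mid\mathcal{S}]=c\cdot\frac{p_t(1-p_t^{G-1})}{A(p_t)}$. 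For $G\ge 2$ and $p_t\in[\Delta,1-\Delta]$ with $\Delta\in(0,1/2]$, both $p_t(1-p_t^{G-1})$ and $A(p_t)$ are strictly positive, a fact needed in part (iii).

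For (ii), Hoeffding's inequality on the i.i.d.\ Bernoulli sample gives $\mathbb{P}(|\hat{p}_t-p_t|>\epsilon_\delta)\le 2\exp(-2G\epsilon_\delta^2)$ \emph{unconditionally}; since Hoeffding does not apply to the conditional law directly, I would pass to it via $\mathbb{P}(\cdot\mid\mathcal{S})\le\mathbb{P}(\cdot)/\mathbb{P}(\mathcal{S})$ and then lower-bound $\mathbb{P}(\mathcal{S})=A(p_t)$ uniformly over $p_t\in[\Delta,1-\Delta]$. The uniform bound follows from the unimodality of $A$: the derivative $A'(p)=G\big((1-p)^{G-1}-p^{G-1}\big)$ is positive on $(0,\tfrac12)$ and negative on $(\tfrac12,1)$, so $A$ is minimized on $[\Delta,1-\Delta]$ at its endpoints, giving $A(p_t)\ge 1-(1-\Delta)^{G}-\Delta^{G}$. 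Substituting this bound and invoking the definition of $\epsilon_\delta$ makes $\mathbb{P}(|\hat{p}_t-p_t|>\epsilon_\delta\mid\mathcal{S})\le\delta$; on the complementary event, $|\hat{p}_t-p_t|\le\epsilon_\delta$ together with $p_t\in[\Delta,1-\Delta]$ forces $p_t\in I_t$.

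For (iii), set $f(p):=\frac{(p-\epsilon)A(p)}{p(1-p^{G-1})}$ and $g(p):=\frac{(p+\epsilon)A(p)}{p(1-p^{G-1})}$, so that $c_{\mathrm{low}}=\sup_{p\in I_t}f(p)$ and $c_{\mathrm{high}}=\inf_{p\in I_t}g(p)$. On the good event $p_t\in I_t$, so $f(p_t)\le c_{\mathrm{low}}<c<c_{\mathrm{high}}\le g(p_t)$ for every admissible $c$; multiplying the chain $f(p_t)<c<g(p_t)$ by the strictly positive quantity $\frac{p_t(1-p_t^{G-1})}{A(p_t)}$ and using the formula from (i) gives precisely $p_t-\epsilon<\mathbb{E}[\tilde{p}_t\mid\mathcal{S}]<p_t+\epsilon$, the claimed conclusion. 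I expect (ii) to be the main obstacle: correctly transferring the concentration bound to the $\mathcal{S}$-conditioned distribution and pinning down the clean uniform lower bound $\mathbb{P}(\mathcal{S})\ge 1-(1-\Delta)^{G}-\Delta^{G}$ through the monotonicity of $A$; parts (i) and (iii) are then routine.
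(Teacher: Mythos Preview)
Your proposal is correct and follows essentially the same three-step structure as the paper's proof: an exact computation of $\mathbb{E}[\hat p_t\mid\mathcal S]=p_t(1-p_t^{G-1})/A(p_t)$ via the binomial identity, a Hoeffding-based concentration step transferred to the $\mathcal S$-conditional law using the uniform lower bound $\mathbb{P}(\mathcal S)\ge 1-(1-\Delta)^G-\Delta^G$, and the sup/inf algebraic rearrangement over $I_t$. The only cosmetic difference is that you pass to the conditional bound via $\mathbb{P}(\cdot\mid\mathcal S)\le\mathbb{P}(\cdot)/\mathbb{P}(\mathcal S)$ directly, whereas the paper first states a slightly weaker intermediate inequality $\mathbb{P}(A\cap\mathcal S)\ge\mathbb{P}(A)-\mathbb{P}(\mathcal S^c)$ before arriving at the same final expression for $\epsilon_\delta$.
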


Specifically, we consider adjusting the empirical group baseline using a
reweighting factor $c$. From the perspective of the expected estimation bias, Lemma~\ref{lemma:adjustment of baseline} that an appropriate choice of $c$ can effectively reduce estimation bias. Detailed derivations are provided in
Appendix~\ref{appendix:adjusted estimator}.
Next, we now proceed to present the main theoretical result.

\begin{table*}[!t]
  \centering
      \small   
  \begin{tabular}{ccccccc}
    \toprule
    \textbf{Algorithm} & \textbf{MATH500} & \textbf{AIME25} & \textbf{AMC23} & \textbf{Minerva} & \textbf{OlympiadBench} & \textbf{AVG}\\
    \midrule
    \multicolumn{7}{c}{\textit{\textbf{Qwen-3-4B-Base}} } \\
    \midrule
    GRPO
    &75.4 &19.6 &60.3 &33.8 &43.5 &46.5\\
    \rowcolor{blue!5}
    $\hookrightarrow$ + HA-DW
    &78.0 &20.4 &63.4 &36.8 &44.7 &\textbf{48.7}\\
    GSPO
    &75.8 &20.0 &62.2 &35.3 &42.3 &47.1\\
    \rowcolor{blue!5}
    $\hookrightarrow$ + HA-DW
    &77.6 &19.6 &68.6 &37.1 &43.2 &\textbf{49.2}\\
    DAPO
    &76.8 &18.3 &60.0 &35.7 &43.2 &46.8\\
    \rowcolor{blue!5}
    $\hookrightarrow$ + HA-DW
    &78.6 &21.3 &65.0 &37.5 &45.3 &\textbf{49.5}\\
    
    \midrule
    \multicolumn{7}{c}{ \textit{\textbf{Qwen-3-8B-Base}}} \\
    \midrule
    GRPO
    &78.8 &20.4 &64.2 &38.2 &46.4 &49.6\\
    \rowcolor{blue!5}
    $\hookrightarrow$ + HA-DW
    &80.0 &22.9 &72.8 &39.7 &47.1 &\textbf{52.5}\\
    GSPO
    &78.6 &21.7 &67.0 &37.9 &45.9 &50.2\\
    \rowcolor{blue!5}
    $\hookrightarrow$ + HA-DW
    &80.2 &22.1 &66.5 &41.9 &47.6 &\textbf{51.7}\\
    DAPO
    &79.2 &20.4 &67.5 &39.3 &47.2 &50.7\\
    \rowcolor{blue!5}
    $\hookrightarrow$ + HA-DW
    &82.8 &23.3 &70.0 &40.8 &50.0 &\textbf{53.4}\\

    \midrule
    \multicolumn{7}{c}{ \textit{\textbf{LLaMA-3.2-3B-Instruct}}} \\
    \midrule
    GRPO
    &51.4 &2.7 &31.7 &22.8 &19.9 &25.7\\
    \rowcolor{blue!5}
    $\hookrightarrow$ + HA-DW
    &53.2 &3.3 &35.0 &23.9 &20.1 &\textbf{27.1}\\
    GSPO
    &48.6 &1.9 &30.9 &23.2 &19.8 &24.9\\
    \rowcolor{blue!5}
    $\hookrightarrow$ + HA-DW
    &50.4 &2.3 &32.7 &22.4 &21.0 &\textbf{25.8}\\
    DAPO
    &52.4 &2.5 &35.0 &22.4 &20.2 &26.5\\
    \rowcolor{blue!5}
    $\hookrightarrow$ + HA-DW
    &53.2 &3.1 &37.5 &24.6 &22.3 &\textbf{28.1}\\
    \bottomrule
  \end{tabular}
  \caption{\label{main-results}
    Overall results across models (Qwen, LLaMA) and different group-relative RL algorithms (GRPO, GSPO, DAPO). We report the performance of different base RL algorithms, and the corresponding accuracy when applied HA-DW for each model scale and family.
  }
  \vspace{-0.3cm}
\end{table*}

\begin{figure*}[!t]
    \centering
    \includegraphics[width=\textwidth]{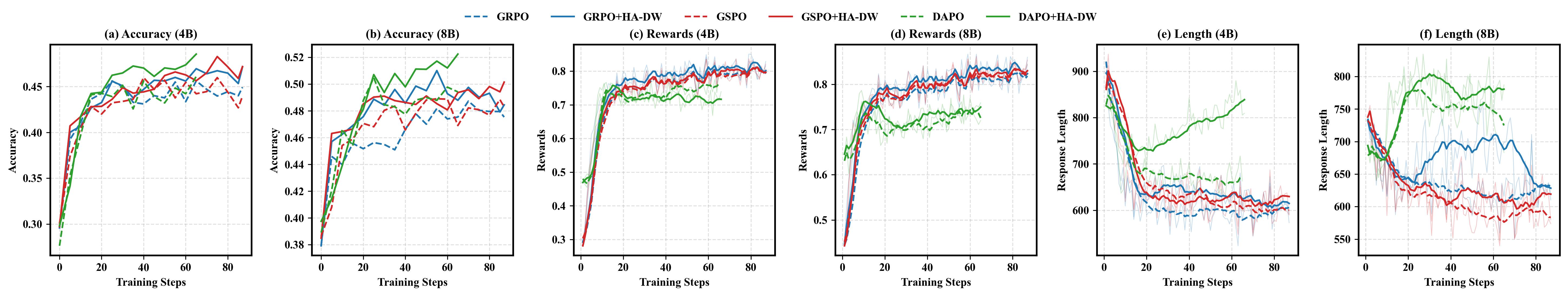}
    \caption{Comparison of training dynamics under different training strategies. Average accuracy across five benchmarks, training reward and response length of Qwen3-4B-Base and Qwen3-8B-Base on different training methods.}
    \label{fig:training-dynamics}
      \vspace{-0.3cm}
\end{figure*}

\begin{tcolorbox}[
    colback=gray!5,
    colframe=black,
    boxrule=0.8pt,
    arc=2pt,
    left=6pt,
    right=6pt,
    top=6pt,
    bottom=6pt
]
\begin{theorem}
\label{theo:algorithm}
Under the condition of Lemma \ref{lemma:adjustment of baseline},
suppose there exists a scaling factor $\lambda_{\mathrm{scale}}$ in
Equation~\eqref{eq:reweighting factor} such that:
\[
\begin{aligned}
\lambda_{\mathrm{scale}}
\in\;&
\left(
\frac{1 + \frac{(1-c_{\mathrm{high}})\hat{p}_t}{1-\hat{p}_t}}
     {\exp(D_{t,i} M_t)},
\;
\frac{1 + \frac{(1-c_{\mathrm{low}})\hat{p}_t}{1-\hat{p}_t}}
     {\exp(D_{t,i} M_t)}
\right)
\\
&\cup
\left(
\frac{c_{\mathrm{low}}}{\exp(D_{t,i} M_t)},
\;
\frac{c_{\mathrm{high}}}{\exp(D_{t,i} M_t)}
\right).
\end{aligned}
\]
Then, HA-DW algorithm provably mitigates the bias of group-relative advantage:
\[
\begin{aligned}
    &\left|
\mathbb{E}\!\left[\hat{A}_{t,i}\cdot \Phi_{t,i} \mid \mathcal{S}\right]
- A_{t,i}
\right|\\
&<
\left|
\mathbb{E}\!\left[\hat{A}_{t,i} \mid \mathcal{S}\right]
- A_{t,i}
\right|.
\end{aligned}
\]
\end{theorem}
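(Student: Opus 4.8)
The plan is to reduce the stated advantage-bias inequality to a comparison of \emph{baseline} biases and then close it with Lemma~\ref{lemma:adjustment of baseline}. The elementary fact driving everything is the pointwise identity
\[
\hat{A}_{t,i}-A_{t,i}=(r_{t,i}-\hat p_t)-(r_{t,i}-p_t)=p_t-\hat p_t,
\]
which shows the advantage error is exactly the baseline error; read in the natural way, the left-hand quantity in Theorem~\ref{theo:main1} (and here) is $\mathbb{E}[\hat A_{t,i}-A_{t,i}\mid\mathcal S]=p_t-\mathbb{E}[\hat p_t\mid\mathcal S]$. Accordingly, I would try to exhibit $\hat{A}_{t,i}\,\Phi_{t,i}$ as a \emph{rectified} estimator $r_{t,i}-\tilde p_t$ with $\tilde p_t=c\,\hat p_t$ and $c\in(c_{\mathrm{low}},c_{\mathrm{high}})$, so that the analogous identity $\hat{A}_{t,i}\Phi_{t,i}-A_{t,i}=p_t-\tilde p_t$ holds pointwise and the target becomes simply
\[
\bigl|p_t-\mathbb{E}[\tilde p_t\mid\mathcal S]\bigr|<\bigl|p_t-\mathbb{E}[\hat p_t\mid\mathcal S]\bigr|.
\]

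The first real step is a two-case check that, under the interval condition on $\lambda_{\mathrm{scale}}$, one indeed has $\hat A_{t,i}\Phi_{t,i}=r_{t,i}-c\hat p_t$ for some $c\in(c_{\mathrm{low}},c_{\mathrm{high}})$. Since $r_{t,i}\in\{0,1\}$ fixes $\mathrm{sgn}(\hat A_{t,i})$ and $\mathrm{sgn}(\mathrm{diff}^{\mathrm{his}}_t)$ fixes $D_{t,i}=-\mathrm{sgn}(\hat A_{t,i})\,\mathrm{sgn}(\mathrm{diff}^{\mathrm{his}}_t)$, only two sub-cases arise. For a correct rollout ($r_{t,i}=1$, $\hat A_{t,i}=1-\hat p_t$) the target identity is $(1-\hat p_t)\Phi_{t,i}=1-c\hat p_t$, i.e. $\Phi_{t,i}=1+\tfrac{(1-c)\hat p_t}{1-\hat p_t}$; substituting $\Phi_{t,i}=\lambda_{\mathrm{scale}}\exp(D_{t,i}M_t)$ and letting $c$ sweep $(c_{\mathrm{low}},c_{\mathrm{high}})$ — using that $1+\tfrac{(1-c)\hat p_t}{1-\hat p_t}$ is decreasing in $c$, so $c_{\mathrm{high}}$ yields the left endpoint and $c_{\mathrm{low}}$ the right one — reproduces exactly the first interval in the theorem's union. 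For an incorrect rollout ($r_{t,i}=0$, $\hat A_{t,i}=-\hat p_t$) the target identity is $(-\hat p_t)\Phi_{t,i}=-c\hat p_t$, i.e. $\Phi_{t,i}=c\in(c_{\mathrm{low}},c_{\mathrm{high}})$, which reproduces the second interval; the sign bookkeeping is coherent with the intended mechanism (on a hard prompt $\mathrm{diff}^{\mathrm{his}}_t<0$ forces $\Phi_{t,i}>1$ on correct rollouts and $\Phi_{t,i}<1$ on incorrect ones). Having identified $\tilde p_t=c\hat p_t$ with $c\in(c_{\mathrm{low}},c_{\mathrm{high}})$, I invoke Lemma~\ref{lemma:adjustment of baseline}: with probability at least $1-\delta$ conditional on $\mathcal S$ (the Hoeffding event on which $p_t\in I_t$), one gets $\mathbb{E}[\tilde p_t\mid\mathcal S]\in(p_t-\epsilon,\,p_t+\epsilon)$, hence $|p_t-\mathbb{E}[\tilde p_t\mid\mathcal S]|<\epsilon$. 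Finally, by Theorem~\ref{theo:main1} the original bias $|p_t-\mathbb{E}[\hat p_t\mid\mathcal S]|$ is strictly positive whenever $p_t\neq 1/2$ (and when $p_t=1/2$ both sides vanish and there is nothing to prove), so choosing the tolerance $\epsilon$ in Lemma~\ref{lemma:adjustment of baseline} strictly below this original bias gives $|p_t-\mathbb{E}[\tilde p_t\mid\mathcal S]|<\epsilon<|p_t-\mathbb{E}[\hat p_t\mid\mathcal S]|$, which, read back through the two pointwise identities, is the claimed inequality.

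The main obstacle I anticipate is the bookkeeping that ties the reweighting factor to the rectified baseline: one must verify that after dividing the two target forms $\Phi_{t,i}=1+\tfrac{(1-c)\hat p_t}{1-\hat p_t}$ and $\Phi_{t,i}=c$ by $\exp(D_{t,i}M_t)$, the endpoints match those written in the theorem for both signs of $\mathrm{diff}^{\mathrm{his}}_t$, and that the ``union'' is the right combinator (each response activates the branch matching its own $r_{t,i}$, so a $\lambda_{\mathrm{scale}}$ in the union suffices for the given response $i$). A subtler point is that the implied $c$ is realization-dependent — it is read off from $\Phi_{t,i}$ and $\hat p_t$ — so $\mathbb{E}[\tilde p_t\mid\mathcal S]=\mathbb{E}[c\,\hat p_t\mid\mathcal S]$ is the mean of a nonlinear statistic; this is precisely why the argument must route through the high-probability event of Lemma~\ref{lemma:adjustment of baseline}, on which $\hat p_t$ and hence $c$ are pinned near values making $\tilde p_t\approx p_t$, rather than through a constant-$c$ linearity shortcut. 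One should also confirm that the range of $\epsilon$ for which Lemma~\ref{lemma:adjustment of baseline} produces a nonempty interval $(c_{\mathrm{low}},c_{\mathrm{high}})$ overlaps $(0,\,|p_t-\mathbb{E}[\hat p_t\mid\mathcal S]|)$; this holds comfortably in the concentrated regime where $I_t$ is narrow (moderate-to-large $G$), where essentially any $\epsilon>0$ works. The remaining manipulations are routine algebra.
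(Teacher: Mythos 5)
Your proposal follows essentially the same route as the paper's proof: reduce the advantage bias to the baseline bias via the pointwise identity, identify $\Phi_{t,i}$ case-by-case ($\Phi_{t,i}=\tfrac{1-c\hat p_t}{1-\hat p_t}$ for $r_{t,i}=1$ and $\Phi_{t,i}=c$ for $r_{t,i}=0$) so that $\hat A_{t,i}\Phi_{t,i}=r_{t,i}-c\hat p_t$ with $c\in(c_{\mathrm{low}},c_{\mathrm{high}})$, and then invoke Lemma~\ref{lemma:adjustment of baseline}. Your closing step — explicitly requiring $\epsilon<|p_t-\mathbb{E}[\hat p_t\mid\mathcal S]|$ and handling $p_t=1/2$ — and your flag about the realization-dependence of the implied $c$ are in fact slightly more careful than the paper's own write-up, which leaves both points implicit.
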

\end{tcolorbox}

Theorem~\ref{theo:algorithm} shows that, with an appropriate choice of the scaling
parameter $\lambda_{\mathrm{scale}}$, the HA-DW adjustment yields advantage
estimates that are closer to the true advantage $A_{t,i}$ in expectation.
This theoretical result provides principled guidance for selecting
$\lambda_{\mathrm{scale}}$ in practice.

\section{Experiments}

\paragraph{Setups.} We conduct our experiments on Qwen3-4B-Base, Qwen3-8B-Base \cite{qwen3technicalreport} and LLaMA-3.2-3B-Instruct on five common-used RLVR benchmarks. We apply our proposed method on top of several representative group-relative reinforcement learning algorithms: GRPO, GSPO, and DAPO. We compare the performance of group-relative algorithms applying HA-DW to original ones, verifying the effectiveness and scalability of our method.  We conduct RL training within the VeRL framework \citep{sheng2024hybridflow} on a single node with 8 $\times$ NVIDIA A100 GPUs. More implementation details in Appendix \ref{appendix:training-hyperparameters}.




\subsection{Main Results}

\begin{table*}[!t]
  \centering
    \small   
  \setlength{\tabcolsep}{10pt}
  \begin{tabular}{cccccccc}
    \toprule
     \textbf{Threshold} & \textbf{MATH500} & \textbf{AIME25} & \textbf{AMC23} & \textbf{Minerva} & \textbf{OlympiadBench} & \textbf{AVG}\\
    \midrule
    Base
    &75.4 &19.6 &60.3 &33.8 &43.5 &46.5\\
    0.4 (fixed)
    &77.0 &18.5 &63.1 &\textbf{37.5} &44.3 &48.1\\
    0.5 (fixed)
    &76.6 &20.0 &62.7 &35.7 &44.0 &47.8\\
    0.6 (fixed)
    &76.8 &\textbf{21.3} &61.1 &36.4 &44.3 &48.0\\
    $C_t$
    &\textbf{78.0} &20.4 &\textbf{63.4} &36.8 &\textbf{44.7} &\textbf{48.7}\\
\bottomrule
  \end{tabular}
  \caption{\label{ablation study}
    Ablation on the effectiveness of dynamic threshold for RL training using Qwen3-4B-Base. $C_t$ denotes the dynamic threshold.
  }
\vspace{-0.5cm}
\end{table*}

Our main results are presented in Table~\ref{main-results}. Notably, group-based RL algorithms ( GRPO, GSPO and DAPO ) equipped with HA-DW outperforms original methods across five benchmarks. We observed clear and consistent improvements across benchmarks on different models of different scales and family. Overall, the results underscore that HA-DW compensates for advantage estimation bias via dynamic reweighting to fully leverage these overshadowed critical prompts, thereby unlocking the potential performance gains in RL.

\par To validate our method's effectiveness in extending model capabilities, we divided the MATH500 dataset into three difficulty levels: Easy (Level 1), Mid (Levels 2-3), and Hard (Levels 4-5). We evaluated Qwen3-4B-Base trained with GRPO and GRPO+HA-DW, as shown in Figure~\ref{fig:introduction}(c). The performance on Easy and Mid levels was comparable for both methods, but GRPO+HA-DW outperformed GRPO by $3.4\%$ on Hard prompts. This improvement is due to our history-based dynamic reweighting strategy, which enhances exploration on hard prompts while reducing unnecessary exploitation on easy ones. Simultaneously, it substantiates the existence of bias estimation indirectly.

\paragraph{Training Dynamics.}
Figure~\ref{fig:training-dynamics} demonstrates the temporal dynamics of average accuracy across five benchmarks, training rewards throughout the training process, and response lengths of Qwen3-4B-Base and Qwen3-8B-Base. RL algorithms applied HA-DW converge to a higher performance plateau in accuracy and acquired higher reward compared to the original RL algorithms, suggesting that the application of HA-DW boosts the exploration of hard prompts and weakens the exploitation of easy ones by mitigating the biased advantage estimation. In addition, our method encourages longer reasoning, greatly improving its reasoning abilities \citep{DBLP:conf/acl/JinYSZHMZD24,DBLP:journals/corr/abs-2501-12948}. HA-DW is capable of incentivizing the model to produce more sophisticated reasoning chain of thoughts to tackle more challenging tasks.

\paragraph{Ablation Study on $C_t$.}
We evaluate the effectiveness of the dynamic threshold $C_t$ by comparing it
with a fixed thresholds across five benchmarks, as shown in
Table~\ref{ablation study}. Experiments on Qwen3-4B-Base with GRPO-based training
show that dynamic adjustment achieves the best performance. Removing $C_t$
degrades performance, while a fixed threshold still improves over the baseline
by partially mitigating biased estimation. By incorporating cross-batch
information, $C_t$ captures long-term reward signals and further enhances RL
performance.


\begin{table}[h]
  \centering
  \small
  \renewcommand{\arraystretch}{1.2} 
  \begin{tabular}{cccc}
    \hline
    Dataset & 8 & 16 & 8+HA-DW\\
    \hline
    MATH500   & 75.4  & 76.2  & \textbf{78.0} \\
    AIME25  & 19.6  & 19.2 & \textbf{20.4} \\
    AMC23  & 60.3  & 61.6 & \textbf{63.4} \\
    Minerva  & 33.8  & 34.2 &  \textbf{36.8} \\
    OlympiadBench & 43.5 & 43.9 & \textbf{44.7}\\
    \hline
  \end{tabular}
  
  \caption{Performance of Qwen3-4B-Base trained with: Rollout=8 with GRPO, Rollout=16 with GRPO and Rollout=8 with GRPO+HA-DW. \emph{Rollout=32 with GRPO is out of memory}.}
  \label{tab:Ablation-on-G}
\end{table}

\paragraph{Supplementary Experiments (Appendix~\ref{appendix:supplementary})}
Due to space limitations, we include the following additional experiments in
Appendix~\ref{appendix:supplementary}:  
(1) empirical verification of advantage estimation bias,  
(2) an ablation study on the group size $G$ (Table \ref{tab:Ablation-on-G}), and  
(3) an ablation study on the scaling parameter $\lambda_{\mathrm{scale}}$.
\section{Related Work}

\noindent \textbf{GRPO and GRPO Variants.} Following the success of Deepseek-R1 \citep{DBLP:journals/corr/abs-2501-12948}, GRPO has attracted widespread attention. To achieve better performance, numerous GRPO-based variants have been proposed. Dr.GRPO removes heuristic normalizations to obtain more stable, less biased updates. DAPO stabilizes training with decoupled clipping and dynamic sampling. GSPO uses sequence-level ratios and clipping to improve stability and efficiency, especially for large and MoE models. However, these variants adopt static prompt difficulty and suffer from insufficient exploration of model's capability. More related work are placed in Appendix \ref{appendix:relatedwork}.

\section{Conclusion}
Our work uncovers a fundamental limitation of group-relative RL algorithms: biased
advantage estimation. To address this issue, we propose \emph{HA-DW}, which dynamically
adjusts advantage weights based on the model’s evolving state. Extensive
experiments demonstrate that \emph{HA-DW} effectively improves reasoning
performance by mitigating biased advantage estimation.

\section*{Acknowledgement}
Z.C. acknowledges the Challenge Institute for Quantum Computation (CIQC) funded by NSF through grant number OMA-2016245.

\section*{Limitations}

This work reveals an intrinsic limitation of group-relative RL—namely, biased advantage estimation under non-degenerate sampling—and proposes HA-DW to effectively mitigate this issue.  Our study primarily focuses on the issue of group-wise estimation bias, restricting the application of HA-DW to group-relative methods. Nevertheless, estimation bias is pervasive, and future work will focus on extending this concept to a broader scope.


\bibliography{custom}

\clearpage
\appendix
\section*{appendix}

\section{More Related Work} \label{appendix:relatedwork}
\noindent \textbf{Group-based RLVR.}
Artificial intelligence has achieved significant advances in recent years\citep{zou2025transformer,he2025llm}.
Recent studies have proposed numerous improvements to group-based reinforcement learning algorithms. DaGRPO \citep{xie2025dagrpo} tackles GRPO’s instability and poor sample efficiency (caused by low distinctiveness in on-policy rollouts) by introducing sequence-level gradient rectification to filter low-distinctiveness pairs and off-policy anchor augmentation to restore learning signals on hard prompts. To address the advantage reversion and advantage mirror issues of fixed advantage formulations in GRPO that fail to adapt to samples with varying trajectory certainty, MAPO \citep{DBLP:journals/corr/abs-2509-18849} introduces Advantage Percent Deviation (APD) for high-certainty trajectories and Trajectory Certainty Reweight (TCR) to dynamically reweight the advantage function, enabling adaptive and reliable trajectory evaluation. LSPO \citep{DBLP:journals/corr/abs-2510-01459} adopts length-aware dynamic sampling to retain shortest/longest responses, addressing the ineffectiveness of RLVR training for LLM reasoning. GMPO \citep{DBLP:journals/corr/abs-2507-20673} uses the geometric mean of token-level rewards (replacing GRPO’s arithmetic mean) to resolve unstable policy updates from outlier importance sampling ratios. And DCPO \citep{DBLP:journals/corr/abs-2509-02333} uses dynamic adaptive clipping and smooth advantage standardization to solve zero gradients, limited token exploration, and low response utilization in RLVR. FAPO \citep{DBLP:journals/corr/abs-2510-22543} uses a generative reward model (GenRM) to detect flawed-positive rollouts and a parameter-free reward penalty, addressing unreliable reasoning patterns and performance limitations caused by such rollouts in RLVR. SPO \citep{DBLP:journals/corr/abs-2505-23564} uses segment-level advantage estimation (with Monte Carlo sampling and flexible segmentation) to solve inaccurate advantage estimation of token-level methods and imprecise credit assignment of trajectory-level methods in LLM reinforcement learning. KTAE \citep{DBLP:journals/corr/abs-2505-16826} uses statistical analysis to quantify tokens’ association with correct rollouts and combines it with rollout-level advantages, solving the coarse granularity issue of GRPO that ignores token-specific contributions. \\

\noindent \textbf{Leverage of Cross-batch Signals.} Cross-batch signals have found widespread application across numerous domains. 
SamS \cite{huangadaptive} leverages a cross-batch scheduler to identify high-quality samples for direct preference optimization.
XBM \citep{DBLP:conf/cvpr/WangZHS20} improves embedding learning by leveraging memory from previous batches to enhance the consistency and quality of embeddings. CBNS \citep{DBLP:conf/sigir/WangZH21} introduces a method to improve negative sampling in embedding learning by utilizing negative samples from different batches, enhancing the model's ability to learn more robust and generalized representations. CIBN \citep{DBLP:conf/cvpr/Yao0ZHL21} extends traditional batch normalization across iterations, rather than within a single batch, to improve model convergence and generalization. CBRL \citep{DBLP:conf/mm/YangLC16} utilizes reference samples from different batches during training to improve the learning of deep classification and retrieval models. CBHEM-PLB \citep{DBLP:journals/tip/TanLWLLGL22} combines cross-batch hard example mining with a pseudo large batch strategy to improve face recognition models.\\

\noindent \textbf{Biased Estimation.} Considerable research effort has been directed towards addressing the critical challenge of biased estimation. The Bias–Variance Tradeoff theory \citep{ DBLP:books/lib/HastieTF09, DBLP:books/lib/Murphy12} suggests that as a model's complexity increases, its bias decreases but its variance increases, and vice versa. It emphasizes that there is a balance between bias and variance that affects the overall error in model predictions, and finding the optimal model complexity is crucial to minimize both bias and variance. Retrace \citep{DBLP:conf/nips/MunosSHB16} addresses the challenge of bias estimation in off-policy reinforcement learning. It proposes a retracing technique to mitigate the bias caused by off-policy data, which can lead to inaccurate value estimates. V-trace \citep{DBLP:conf/icml/EspeholtSMSMWDF18, DBLP:books/daglib/0035704} introduces a method for improving off-policy reinforcement learning by applying importance-weighted corrections to the value function updates in actor-critic algorithms, mitigating bias in off-policy data. DR-OVR \citep{DBLP:conf/icml/JiangL16, DBLP:conf/iclr/TschannenDRGL20} combines importance sampling and regression to correct for bias in off-policy value estimation, making it more stable and accurate. 


\section{Detailed Instantiations for GRPO and Related Algorithms}\label{appendix:loss func}

In this section, we present detailed instantiations of three group-relative reinforcement learning algorithms: GRPO, GSPO, and DAPO. And $t$ and $\tau$ denote training step and token index in this part.

\par GRPO streamlines PPO by discarding the value network without compromising stability. Instead of fitting a baseline, it derives the advantage using group-relative normalization. This group-normalized advantage is then assigned uniformly to all tokens in the response, formulating the clipped surrogate loss:
\begin{equation}
    \begin{aligned}
        J_{\text{GRPO}}(\theta) = &\frac{1}{G} \sum_{i=1}^{G} \frac{1}{|o_{t,i}|} \sum_{\tau=1}^{|o_{t,i}|} \min \left( r_{t,i,\tau}(\theta)\hat{A}_{t,i,\tau}, \right. \\
        &\left. \text{clip}\left(r_{t,i,\tau}(\theta), 1-\epsilon, 1+\epsilon\right) \hat{A}_{t,i,\tau} \right),
    \end{aligned}
\end{equation}
where $\epsilon$ is the clipping hyperparameter and $r_{t,i,\tau}$ is is the importance sampling ratio comparing the new and old policy:
\begin{equation}
    r_{t,i,\tau}(\theta)=\frac{\pi_{\theta}(y_{t,i,\tau} \mid x_t, y_{t,i,<\tau})}{\pi_{\theta_{\text{old}}}(y_{t,i,\tau} \mid x_t, y_{t,i,<\tau})}.
\end{equation}
And GRPO defines the group advantage by subtracting the average reward of the group and normalizing by its standard deviation:
\begin{equation}
    \hat{A}_{t,i,\tau} = \frac{R(x_t, o_{t,\tau}) - \mathrm{mean}\left(\{R(x_t, o_{t,j})\}_{j=1}^G\right)}{\mathrm{std}\left(\{R(x_t, o_{t,j})\}_{j=1}^G\right)},
\end{equation}
where $R(x,o)$ denotes the reward function.

The objective function of GRPO applied with HA-DW can be denoted as:
\begin{equation}
    \begin{aligned}
        &J_{\text{GRPO+HA-DW}}(\theta) \\
        &= \frac{1}{G} \sum_{i=1}^{G} \frac{1}{|o_{t,i}|} \sum_{\tau=1}^{|o_{t,i}|} \min \left( r_{t,i,\tau}(\theta)\hat{A}_{t,i,\tau}\cdot \Phi_{t,i}, \right. \\
        &\left. \text{clip}\left(r_{t,i,\tau}(\theta), 1-\epsilon, 1+\epsilon\right) \hat{A}_{t,i,\tau}\cdot \Phi_{t,i} \right),
    \end{aligned}
\end{equation}
where $\Phi_{t,i}$ is the history-aware reweighting factor defined earlier.
\par GSPO optimizes policy learning by defining importance ratios at the sequence level, eliminating the need for a critic model. Rather than relying on a separate value network, it computes advantages through normalized relative rewards of group responses. This sequence-level advantage is directly used for policy updates without token-level processing, yielding the following objective function:
\begin{equation}
    \begin{aligned}
        J_{\text{GSPO}}(\theta) = &\frac{1}{G} \sum_{i=1}^{G} \min\bigg( 
    r_{t,i}(\theta) \hat{A}_{t,i},\  \\
    &\text{clip}\bigg(r_{t,i}(\theta),\ 1-\epsilon,\ 1+\epsilon\bigg) \hat{A}_{t,i} 
\bigg),
    \end{aligned}
\end{equation}
where the sequence-level importance sampling ratio $r_i(\theta)$ can be denoted as:
\begin{equation}
\begin{aligned}
    r_{t,i}(\theta) &= \frac{\pi_\theta(y_{t,i} | x_t)} {\pi_{\theta_{\text{old}}}(y_{t,i} | x_t)} \\
    &= \frac{\prod_{\tau=1}^{|y_{t,i}|} \pi_\theta(y_{t,i,\tau} | x_t, y_{t,i,<\tau})}{\prod_{t=1}^{|y_{t,i}|} \pi_{\theta_{\text{old}}}(y_{t,i,\tau} | x_t, y_{t,i,<\tau})},
\end{aligned}
\end{equation}
where the advantage for GSPO can be denoted as:
\begin{equation}
    \hat{A}_{t,i} = \frac{R(x_t, o_{t,i}) - \mathrm{mean}\left(\{R(x_t, o_{t,j})\}_{j=1}^G\right)}{\mathrm{std}\left(\{R(x_t, o_{t,j})\}_{j=1}^G\right)}
\end{equation}
And the objective function of GSPO+HA-DW is:
\begin{equation}
    \begin{aligned}
        &J_{\text{GSPO+HA-DW}}(\theta)\\
        &= \frac{1}{G} \sum_{i=1}^{G} \min\bigg( 
    r_{t,i}(\theta) \hat{A}_{t,i}\cdot \Phi_{t,i} ,\  \\
    &\text{clip}\bigg(r_{t,i}(\theta),\ 1-\epsilon,\ 1+\epsilon\bigg) \hat{A}_{t,i}\cdot \Phi_{t,i} 
\bigg).
    \end{aligned}
\end{equation}

DAPO’s key feature is operating at the token level instead of treating full responses as single units, ensuring each token in sampled output $o_i$ contributes proportionally to gradient updates. This fine-grained optimization boosts training stability and delivers more informative feedback for LLMs. The objective function is defined as:
\begin{equation}\label{eq:dapo-loss}
    \begin{aligned}
        &J_{\text{DAPO}}(\theta)  \\
        &=\frac{1}{\sum_{i=1}^{G} |o_{t,i}|} \sum_{i=1}^{G} \sum_{\tau=1}^{|o_{t,i}|} \min \bigg( r_{t,i,\tau}(\theta) \hat{A}_{t,i,\tau}, \\
        &\text{clip}\big(r_{t,i,\tau}(\theta), 1 - \epsilon, 1 + \epsilon'\big) \hat{A}_{t,i,\tau} \bigg) .
    \end{aligned}
\end{equation}

DAPO introduces two key mechanisms: decoupled clipping and dynamic sampling, to address the limitations of traditional group-based methods. Decoupled clipping refines the trust region for more stable updates, while dynamic sampling mitigates estimation bias by adaptively reweighting samples based on their distribution.

Applying HA-DW on Equation~\eqref{eq:dapo-loss}, and we have:
\begin{equation}
    \begin{aligned}
        &J_{\text{DAPO+HA-DW}}(\theta)  \\
        &=\frac{1}{\sum_{i=1}^{G} |o_{t,i}|} \sum_{i=1}^{G} \sum_{\tau=1}^{|o_{t,i}|} \min \bigg( r_{t,i,\tau}(\theta) \hat{A}_{t,i,\tau}\cdot \Phi_{t,i}, \\
        &\text{clip}\big(r_{t,i,\tau}(\theta), 1 - \epsilon, 1 + \epsilon'\big) \hat{A}_{t,i,\tau}\cdot \Phi_{t,i} \bigg) .
    \end{aligned}
\end{equation}

\section{Setup Details}
\label{appendix:training-hyperparameters}

\textbf{Models \& Datasets.} We conduct our experiments on Qwen3-4B-Base, Qwen3-8B-Base \cite{qwen3technicalreport} and LLaMA-3.2-3B-Instruct to assess the mathematical reasoning performance of different algorithms across models of varying scales and family. Our training dataset is sourced from MATH dataset \citep{DBLP:conf/nips/HendrycksBKABTS21,DBLP:conf/iclr/LightmanKBEBLLS24} which contains 7.5$k$ questions for training. Our evaluation suite includes: MATH500 \citep{DBLP:conf/nips/HendrycksBKABTS21}, AMC23, AIME25, Minerva, and OlympiadBench \citep{DBLP:conf/acl/HeLBHTSHHHZLQL024}. To mitigate high variance on small benchmark sets and obtain reliable results, we report avg@16 on AIME25 and AMC23. \\

\noindent \textbf{Baseline.} We apply our proposed method on top of several representative group-relative reinforcement learning algorithms: GRPO, GSPO, and DAPO. We compare the performance of group-relative algorithms applying HA-DW to original ones, verifying the effectiveness and scalability of our method. \\

\noindent \textbf{Implementation Details.} We conduct RL training within the VeRL framework \citep{sheng2024hybridflow} on a single node with 8 $\times$ NVIDIA A100 GPUs. All experiments use a maximum prompt batch size of 1,024 and a maximum response length of 4,096. More hyperparameter details are provides in appendix \ref{appendix:training-hyperparameters}.

\paragraph{Training Hyperparameters.} The detailed hyperparameters used during our training process on 6 different methods of 3 models (Qwen3-4B-Base, Qwen3-8B-Base and LLaMA-3.2-3B-Instruct) used in our experiments are demonstrated in Table~\ref{table:hyperparameter}.

\section{Theoretical Proof}\label{appendix:theoretical proof}
\subsection{Proof of Theorem~\ref{theo:main1}} \label{proof of theorem1}
In group-relative RL algorithms, the truncation mechanism will discard prompts with all-correct or all-incorrect responses. Under the binary reward setting, the retention condition for the total reward within the group $R$ is given by:
\begin{equation*}
    \mathcal{S}\coloneqq \{1\le R\le G-1\}.
\end{equation*}

Under the retention condition $\mathcal{S}$, $\mathbb{E}\left[\hat{p}_t\mid S\right]$ denotes the conditional expectation of the empirical estimation $\hat{p}_t=R / G$. And it can be derived what the relationship is between it and the expected reward $p_t$:
\begin{equation}
\begin{aligned}
    \mathbb{E}\left[\hat{p}_t \mid \mathcal{S}\right] 
    &= \mathbb{E}\left[\frac{R}{G} \mid \mathcal{S}\right] \\
    &= \frac{1}{G} \cdot \frac{\mathbb{E}\left[R \cdot \mathbf{1}_{\{\mathcal{S}\}}\right]}{\mathbb{P}(\mathcal{S})} \\
    &= \frac{1}{G} \cdot \frac{\mathbb{E}[R] - \mathbb{E}\left[R \cdot \mathbf{1}_{\{R=G\}}\right]}{\mathbb{P}(\mathcal{S})} \\
    &= \frac{1}{G} \cdot \frac{Gp_t - G\mathbb{P}(R=G)}{\mathbb{P}(\mathcal{S})} \\
    &= \frac{p_t - p_t^G}{1 - (1-p_t)^G - p_t^G},
\end{aligned}
\end{equation}
where the indicator function $\mathbf{1}_{\{\mathcal{S}\}}$ takes the value $1$ if the event $\mathcal{S}$ occurs and $0$ otherwise. Through the conditional expectation of $\hat{p}_t$, we can obtain that its expected value is less than $p_t$ when $p_t<\frac{1}{2}$ and the baseline tends to be underestimated. Conversely, when $p_t>\frac{1}{2}$, the expected value exceeds $p_t$, leading to an overestimation.

\par Based on Equation~\eqref{eq:group_advantage} and Equation~\eqref{eq:expected_advantage}, inaccurate baseline estimation will induce biased advantage estimation. From the foregoing analysis, we can derive that:
\begin{equation}
    \begin{aligned}
        \mathbb{E}\!\left[\hat{A}_{t,i} \mid \mathcal{S}\right]
        &< A_{t,i}, \quad \text{if } p_t < 0.5; \\
        \mathbb{E}\!\left[\hat{A}_{t,i} \mid \mathcal{S}\right]
        &> A_{t,i}, \quad \text{if } p_t > 0.5; \\
        \mathbb{E}\!\left[\hat{A}_{t,i} \mid \mathcal{S}\right]
        &= A_{t,i}, \quad \text{if and only if } p_t = 0.5.
    \end{aligned}
\end{equation}

\begin{lemma} \label{lemma:biasdetails}
Under the condition of Theorem \ref{theo:main1}, the bias induced by the group-relative advantage is formulated as:
\begin{equation}
\begin{aligned}
     & A_{t,i} - \mathbb{E}\left[\hat{A}_{t,i} \mid \mathcal{S}\right]    \\ 
     = &\frac{p_t(1-p_t)^G + p_t^{G+1} - p_t^G}{1 - (1-p_t)^G - p_t^G}.
    \end{aligned}
\end{equation}
\end{lemma}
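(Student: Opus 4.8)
The plan is to reduce the bias of the \emph{advantage} to the bias of the \emph{baseline}, and then to reuse the closed form for $\mathbb{E}[\hat p_t\mid\mathcal S]$ already computed in the proof of Theorem~\ref{theo:main1}.

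First I would note that the reward term cancels in the difference between the true and the group-relative advantage: since $A_{t,i}=r_{t,i}-p_t$ and $\hat A_{t,i}=r_{t,i}-\hat p_t$, we have the pointwise identity $A_{t,i}-\hat A_{t,i}=\hat p_t-p_t$. Hence the per-sample estimation error of the advantage is exactly the estimation error of the baseline, and taking conditional expectation gives
\[
\mathbb{E}\!\left[A_{t,i}-\hat A_{t,i}\mid\mathcal S\right]=\mathbb{E}\!\left[\hat p_t\mid\mathcal S\right]-p_t .
\]
(Equivalently, by exchangeability of $r_{t,1},\dots,r_{t,G}$ and the symmetry of $\mathcal S$ one has $\mathbb{E}[\hat A_{t,i}\mid\mathcal S]=0$ --- which also follows from $\sum_i\hat A_{t,i}=0$ surely --- together with $\mathbb{E}[r_{t,i}\mid\mathcal S]=\mathbb{E}[\hat p_t\mid\mathcal S]$, so the left-hand side of the lemma is read as $\mathbb{E}[\hat p_t\mid\mathcal S]-p_t$.)

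Second, I would substitute the expression obtained in the proof of Theorem~\ref{theo:main1},
\[
\mathbb{E}\!\left[\hat p_t\mid\mathcal S\right]=\frac{p_t-p_t^{G}}{1-(1-p_t)^{G}-p_t^{G}},
\]
which is legitimate because $\mathbb{P}(\mathcal S)=1-(1-p_t)^{G}-p_t^{G}>0$ for $G\ge 2$ and $p_t\in(0,1)$ (the stated formula being understood on this range). It then remains to place $\mathbb{E}[\hat p_t\mid\mathcal S]-p_t$ over the common denominator $1-(1-p_t)^{G}-p_t^{G}$ and collect terms in the numerator: $(p_t-p_t^{G})-p_t\bigl(1-(1-p_t)^{G}-p_t^{G}\bigr)=p_t(1-p_t)^{G}+p_t^{G+1}-p_t^{G}$, which is precisely the claimed expression.

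There is essentially no substantive obstacle: the only genuinely nontrivial computation --- the conditional expectation of $\hat p_t$ under the truncation event $\mathcal S$ --- has already been carried out for Theorem~\ref{theo:main1}, so what is left is the cancellation observation plus a single line of algebra. The one point that warrants care is bookkeeping the conditioning, i.e.\ making explicit that the advantage bias equals the baseline bias because $r_{t,i}$ enters $A_{t,i}$ and $\hat A_{t,i}$ identically. As a consistency check, one may factor the numerator as $p_t(1-p_t)\bigl[(1-p_t)^{G-1}-p_t^{G-1}\bigr]$, which recovers the sign pattern of Theorem~\ref{theo:main1}: the bias is positive for $p_t<1/2$, negative for $p_t>1/2$, and zero at $p_t=1/2$.
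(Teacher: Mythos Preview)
Your proposal is correct and follows essentially the same approach as the paper: reduce the advantage bias to the baseline bias via the cancellation $A_{t,i}-\hat A_{t,i}=\hat p_t-p_t$, invoke the closed form for $\mathbb{E}[\hat p_t\mid\mathcal S]$ from the proof of Theorem~\ref{theo:main1}, and simplify. If anything, you are more careful than the paper about what the left-hand side means (noting $\mathbb{E}[\hat A_{t,i}\mid\mathcal S]=0$ by exchangeability and $\mathbb{E}[r_{t,i}\mid\mathcal S]=\mathbb{E}[\hat p_t\mid\mathcal S]$), whereas the paper simply writes the baseline-bias identity and states ``replacing the baseline with the advantage completes the proof.''
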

\begin{proof}
\begin{equation}
\begin{aligned}
     &\mathbb{E}\left[\hat{p}_t \mid \mathcal{S}\right]  - p_t  \\ 
     = &\frac{p_t(1-p_t)^G + p_t^{G+1} - p_t^G}{1 - (1-p_t)^G - p_t^G}.
    \end{aligned}
\end{equation}
Replacing the baseline with the advantage completes the proof.
\end{proof}

\subsection{Proof of Theorem~\ref{theo:hard regime lower bound} and Corollary~\ref{cor:prob_bias}}
For hard prompts, in Theorem~\ref{theo:hard regime lower bound}, we have:
\begin{equation}
    \begin{aligned}
    &\mathbb{P}\left(\hat{p}_t - p_t > \epsilon \mid \mathcal{S}\right) \\
    &= \frac{\sum_{k=\lfloor G\left(p_t+\epsilon\right) \rfloor + 1}^{G-1} \binom{G}{k} \, p_t^{k} (1-p_t)^{G-k}}{1 - (1-p_t)^G - p_t^G} .
\end{aligned}
\end{equation}
The above equation is given by the following argument:
The conditioning event $\mathcal{S}$ restricts the sample space by excluding the outcome $R\in \{0, G\}$
(hence under $\mathcal{S}$ we only keep $R\in\{1,\dots,G-1\}$). Let:
\begin{equation}
    m(p_t)\coloneqq \left\lfloor G(p_t+\epsilon)\right\rfloor+1.
\end{equation}
Therefore, within the event $\mathcal{S}$, the deviation event $A$ becomes
\begin{equation}
    \begin{aligned}
        A\cap \mathcal{S}
        &=\{R \ge m(p_t)\}\cap\{1\leq R\le G-1\} \\
        &=\{m(p_t)\le R\le G-1\}.
    \end{aligned}
\end{equation}

By definition of conditional probability, the {numerator} is the (unconditional) probability mass of all outcomes that
satisfy the deviation requirement $\hat p-p>\epsilon$ and simultaneously, and satisfy the restriction imposed by $\mathcal{S}$. Because $R$ is binomial, for any integer $k$ we have:
\begin{equation}
    \mathbb{P}(R=k)=\binom{G}{k}p_t^k(1-p_t)^{G-k}.
\end{equation}
Summing over all admissible counts $k\in\{m(p_t),m(p_t)+1,\dots,G-1\}$ yields:
\begin{align}
    \mathbb{P}(A\cap \mathcal{S})
=&
\sum_{k=m(p_t)}^{G-1} \mathbb{P}(R=k)\\
=&
\sum_{k=m(p_t)}^{G-1}\binom{G}{k}p_t^k(1-p_t)^{G-k}.
\end{align}

Thus, based on the formula of conditional probability \citep{DBLP:conf/cikm/WuYLD25, DBLP:journals/corr/abs-2502-00949}, we can derive the conclusion of Theorem~\ref{theo:hard regime lower bound}.

According to Theorem~\ref{theo:hard regime lower bound}, we can formulate:
\begin{equation}
    f(G, p_t)\coloneqq \mathbb{P}(\hat{p}_t - p_t > \epsilon \mid \mathcal{S}).
\end{equation}

Assume that $p_t$ follows a uniform distribution. And we define:
\begin{equation}\label{eq:jifen}
    \begin{aligned}
        \mathbb{P}(G,p_{t_1},p_{t_2}) 
    &\coloneqq \frac{1}{p_{t_2}-p_{t_1}} \int_{p_{t_1}}^{p_{t_2}} f(G, p_t) dp_t
    \end{aligned}
\end{equation}
where $p_{t_1}$ and $p_{t_2}$ is the expected reward. And $\mathbb{P}(G)$ reflects the probability that, when $G$ is fixed, the baseline $\hat{p}_t$ is overestimated of group-relative RL algorithms over a certain expected-reward interval. For hard prompts with $p_t\in \left(0,0.25\right)$ under different group size $G$, when $G\in[2,8]$, we have:

\begin{table}[ht]
\centering
\renewcommand{\arraystretch}{1.25}
\setlength{\tabcolsep}{32pt} 

\begin{tabular}{c c}
\hline
$G$ & $\mathbb{P}(G,0,0.25)$ \\
\hline
2  & 0.999997499987 \\
4  &  0.999995948256 \\
6  & 0.827761785622 \\
\textbf{8}  & \textbf{0.781129955681} \\
\hline
\end{tabular}

\caption{$\mathbb{P}(G,0,0.25)$ as a function of $G\in [2,8]$.}
\label{tab:P_vs_G_1}
\end{table}

Similarly, we can calculate hard prompts with $p_t \in (0,0.5)$ under different group size $G$.
\begin{table}[ht]
\centering
\renewcommand{\arraystretch}{1.25}
\setlength{\tabcolsep}{32pt} 
\begin{tabular}{c c}
\hline
$G$ & $\mathbb{P}(G,0,0.5)$ \\
\hline
2  & 0.999994999975 \\
4  &  0.776965795853 \\
6  & 0.689721502158 \\
\textbf{8}  & \textbf{0.640944744224} \\
\hline
\end{tabular}

\caption{$\mathbb{P}(G,0,0.5)$ as a function of $G\in [2,8]$.}
\label{tab:P_vs_G_2}
\end{table}

We can conclude from Table~\ref{tab:P_vs_G_1}, when $2\le G\le 8$, $\mathbb{P}(G,0,0.25)>0.78$. This reveals that for hard prompts whose $p_t\in (0,0.25)$ when $G$ is limited, its baseline $\hat{p}_t$ of group-relative RL algorithms is substantially likely to be overestimated. Similarly, due to the evident symmetry of the group-relative methods, for easy prompt with $p_t\in (0.75,1)$, the baseline $\hat{p}_t$ is underestimated with the same probability distribution. 

Based on the aforementioned conclusions, for group-based algorithms, when $G\in [2,8]$, the probability of biased advantage estimation can be denoted as:
\begin{equation}
\begin{aligned}
\mathbb{P}\!\left(\hat{A}_{t,i} < A_{t,i} \mid \mathcal{S},\; p_t < 0.25 \right)
&> 0.78, \\
\mathbb{P}\!\left(\hat{A}_{t,i} > A_{t,i} \mid \mathcal{S},\; p_t > 0.75 \right)
&> 0.78.
\end{aligned}
\end{equation}
Similarlly, Table \ref{tab:P_vs_G_2} can give: 
\begin{equation}
\begin{aligned}
\mathbb{P}\!\left(\hat{A}_{t,i} < A_{t,i} \mid \mathcal{S},\; p_t < 0.5 \right)
&> 0.63, \\
\mathbb{P}\!\left(\hat{A}_{t,i} > A_{t,i} \mid \mathcal{S},\; p_t > 0.5 \right)
&> 0.63.\\
\end{aligned}
\end{equation}

For $ p_t > 0.875$ and $ p_t < 0.125$, the results are the adaptation of Corollary \ref{cor:extreme_sure}.

\subsection{Proof of Corollary~\ref{corollary:sufficient G} and Corollary \ref{cor:extreme_sure}}\label{appendix:additional theoretical analysis}

Let $G$ be a large integer, for hard prompts, according to Theorem~\ref{theo:hard regime lower bound}, we have:
\begin{equation}
    \begin{aligned}
    &\mathbb{P}\left(\hat{p}_t - p_t > \epsilon \mid \mathcal{S} \right) \\
    &= \frac{\sum_{k=\lfloor G\left(p_t+\epsilon\right) \rfloor + 1}^{G-1} \binom{G}{k} \, p_t^{k} (1-p_t)^{G-k}}{1 - (1-p_t)^G - p_t^G} .
\end{aligned}
\end{equation}
And we define:
\begin{equation}
    f(p_t)\coloneqq \mathbb{P}(\hat{p}_t - p_t > \epsilon \mid \mathcal{S}).
\end{equation}

We analyze the integral in the limit of large $G$ using the Poisson approximation \citep{serfling1978some}. Let us perform the change of variable $x_t = Gp_t$. The limits of integration change from $[1/G, 2/G]$ to $[1, 2]$, and $dp_t = dx_t/G$. We define the integral of interest:
\begin{equation}
\begin{aligned}
    \mathbb{P}(G_1,G_2) &= \frac{G}{G_2-G_1} \int_{G_1}^{G_2} f(x_t/G) \frac{dx_t}{G} \\
    &= \int_{1}^{2} f(x_t/G) \, dx_t.
\end{aligned}
\end{equation}

First, we determine the summation lower bound $m(p_t)$. For $p_t \in [1/G, 2/G)$, we have $Gp_t \in [1, 2)$. Consequently, $\lfloor Gp_t \rfloor = 1$, which implies:
\begin{equation}
    m(p_t) = \lfloor Gp_t \rfloor + 1 = 2.
\end{equation}

Next, we approximate the binomial terms. In the limit $G \to \infty$ with $Gp_t = x$ fixed, the binomial distribution converges to the Poisson distribution with parameter $x_t$. The denominator $Z(p_t)$ approximates to:
\begin{equation}
    \begin{aligned}
        Z(p_t) &= 1 -(x_t/G)^G-  (1 - x_t/G)^G \\
        &\xrightarrow{G \to \infty} 1 - e^{-x_t}.
    \end{aligned}
\end{equation}

The numerator is the probability that a Poisson random variable $K \sim \text{Pois}(x_t)$ takes a value $k \ge 2$ (ignoring the upper limit $G-1$ as the Poisson tail vanishes exponentially):
\begin{equation}
    \begin{aligned}
        f(p_t) &=  \mathbb{P} (K\geq 2) = \sum_{k=2}^{\infty} \frac{x_t^k e^{-x_t}}{k!} \\
        &= 1 - \mathbb{P}(K=0) - \mathbb{P}(K=1) \\
        &= 1 - e^{-x_t} - x e^{-x_t} \\
        &= 1 - e^{-x_t}(1+x_t).
    \end{aligned}
\end{equation}

Substituting these approximations into $f(x_t/G)$, we obtain the limiting integrand $h(x_t)$:
\begin{equation}\label{29}
\begin{aligned}
    h(x_t) &= \frac{1 - e^{-x_t}(1+x_t)}{1 - e^{-x_t}} \\
    &= 1 - \frac{x_t e^{-x_t}}{1 - e^{-x_t}} = 1 - \frac{x_t}{e^{x_t} - 1}.
\end{aligned}
\end{equation}

Assume that $p_t$ follows a uniform distribution. Calculating Equation~\eqref{29} numerically, for sufficiently large $G$, we can show that $\mathbb{P}(0,2)=\frac{G}{2} \int_{0}^{2} f(x_t) \, dx_t =G\frac{\mathbb{P}(0,1)+\int_{1}^{2} h(x_t) \, dx_t}{2} \approx  0.7818$. 

Next, we use numerical computation to show how large $G$ should be, whose result can be found in the following table:
\begin{table}[ht]
\centering
\renewcommand{\arraystretch}{1.25}
\setlength{\tabcolsep}{32pt} 

\begin{tabular}{c c}
\hline
$G$ & $\frac{G}{2} \int_{0}^{2/G} f(p_t) \, dp_t$ \\
\hline
2  &  0.499997499987 \\
3  &   0.749995833315 \\
4  & 0.776965795853 \\
5  & 0.780787089465\\
\textbf{6}  & \textbf{0.781154327380} \\
\hline
\end{tabular}

\caption{$\frac{G}{2} \int_{0}^{2/G} f(p_t) \, dp_t$ as a function of $G\in [2,6]$.}
\label{tab:int_P_vs_G_3}
\end{table}

Thus, $G\geq 6$ is sufficiently large to have:
\begin{align}
    \mathbb{P}\!\left(\hat{A}_{t,i} < A_{t,i} \mid \mathcal{S},\; p_t < \frac{2}{G} \right)
> 0.78\,.
\end{align}

\textbf{Proof of Corollary \ref{cor:extreme_sure}}. On $S$, we have $R\ge 1$, hence $\hat p=R/G\ge 1/G$.
Since $p<1/G$, it follows that $\hat p\ge 1/G>p$. 
On $S$, we have $R\le G- 1$, hence $\hat p=R/G \leq (G-1)/G$.
Since $p>(G-1)/G$, it follows that $\hat p\le (G-1)/G<p$.
This leads to the Corollary.

\subsection{Proof of Lemma \ref{lemma:adjustment of baseline} and Theorem \ref{theo:algorithm}}\label{appendix:adjusted estimator}

\subsubsection{Proof of Lemma \ref{lemma:adjustment of baseline}}
Before proving Lemma~\ref{lemma:adjustment of baseline}, we begin with the following auxiliary result.                 
\begin{lemma} \label{lem:lem3}
Define the non-degenerate event $\mathcal S:=\{1\le S\le G-1\}$, and $\epsilon  \in (0, |p_t-\hat{p}_t|)$.
If 
\begin{equation}\label{eq:inequal}
    \begin{aligned}
        c \in & \left( \frac{\left(p_t - \epsilon\right) \cdot \left(1 - (1 - p_t)^G - p_t^G\right)}{p_t (1 - p_t^{G-1})}, \right. \\
          & \left. \frac{\left(p_t + \epsilon\right) \cdot \left(1 - (1 - p_t)^G - p_t^G\right)}{p_t (1 - p_t^{G-1})} \right),
    \end{aligned}
\end{equation}
we have:
\begin{equation}
   \mathbb{E}\!\left[\tilde{p}_t \mid \mathcal{S}\right]
    \in
    \left(p_t - \epsilon,\; p_t + \epsilon\right).
\end{equation}
\end{lemma}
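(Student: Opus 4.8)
The plan is to reduce the statement to a single application of linearity of conditional expectation, using the closed form of $\mathbb{E}[\hat p_t\mid\mathcal S]$ already established in the proof of Theorem~\ref{theo:main1}. Writing $R=\sum_{i=1}^{G} r_{t,i}$ for the total within-group reward, so that $\mathcal S=\{1\le R\le G-1\}$ and $\hat p_t=R/G$, recall from that proof that
\[
\mathbb{E}[\hat p_t\mid\mathcal S]=\frac{p_t-p_t^{G}}{1-(1-p_t)^{G}-p_t^{G}}=\frac{p_t\,(1-p_t^{G-1})}{1-(1-p_t)^{G}-p_t^{G}}.
\]
Since $\tilde p_t=c\cdot\hat p_t$ with $c$ a deterministic scalar, linearity gives $\mathbb{E}[\tilde p_t\mid\mathcal S]=c\cdot\mathbb{E}[\hat p_t\mid\mathcal S]$, so the entire task is to track how the constraint on $c$ in~\eqref{eq:inequal} propagates through this identity.

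The one point requiring care is the sign of the factor $\kappa:=\dfrac{p_t\,(1-p_t^{G-1})}{1-(1-p_t)^{G}-p_t^{G}}$, since the argument hinges on dividing a two-sided inequality by it. I would check $\kappa>0$: the numerator is positive because $p_t\in(0,1)$ (forced by $\mathbb{P}(\mathcal S)>0$) and $G\ge 2$ give $0<p_t^{G-1}<1$; the denominator is positive because $(1-p_t)^{G}<1-p_t$ and $p_t^{G}<p_t$ for $p_t\in(0,1)$, $G\ge 2$, whence $(1-p_t)^{G}+p_t^{G}<1$. With $\kappa>0$, the conclusion $\mathbb{E}[\tilde p_t\mid\mathcal S]=c\kappa\in(p_t-\epsilon,\,p_t+\epsilon)$ is equivalent to $c\in\bigl(\tfrac{p_t-\epsilon}{\kappa},\,\tfrac{p_t+\epsilon}{\kappa}\bigr)$, and substituting $1/\kappa=\dfrac{1-(1-p_t)^{G}-p_t^{G}}{p_t\,(1-p_t^{G-1})}$ shows this is precisely the interval in~\eqref{eq:inequal}. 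Hence every $c$ in that interval yields $\mathbb{E}[\tilde p_t\mid\mathcal S]\in(p_t-\epsilon,p_t+\epsilon)$, which is the claim.

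I do not expect a genuine obstacle: there is no concentration step and no optimization — just reuse of the Theorem~\ref{theo:main1} identity together with the elementary positivity of $\kappa$. The hypothesis $\epsilon\in(0,|p_t-\hat p_t|)$ is not actually used in the implication; it only ensures the target interval for $c$ is nonempty and excludes the naive choice $c=1$, so in the write-up I would simply carry it along and note that the algebra is valid for any $\epsilon>0$. The only thing to stay vigilant about is preserving the direction of the inequalities when clearing the denominator $\kappa$, which is exactly why the explicit sign check is worth recording.
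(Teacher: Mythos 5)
Your proposal is correct and follows essentially the same route as the paper: both compute $\mathbb{E}[\tilde p_t\mid\mathcal S]=c\,p_t(1-p_t^{G-1})/\bigl(1-(1-p_t)^G-p_t^G\bigr)$ from the conditional expectation of $R$ established in the proof of Theorem~\ref{theo:main1} and then invert the linear map in $c$ to recover the stated interval. Your explicit positivity check of the multiplier $\kappa$ (which the paper leaves implicit when solving for $c_\pm$) and your observation that the hypothesis $\epsilon\in(0,|p_t-\hat p_t|)$ is not needed for the implication are both accurate but do not change the argument.
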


\begin{proof}
We define the adjusted factor $c$ to compensate for the bias in the advantage estimation which applied on empirical group baseline $\hat{p}_t$. The globally scaled estimator can be approximated as:
\begin{equation}
\tilde{p}_t(R)\coloneqq c\, \hat{p}_t = c\,\frac{R}{G}.
\end{equation}

We can derive the conditional expectation of $\tilde{p_t}$ on non-degenerate event $\mathcal{S}= \{1\le R \le G-1\} $:
\begin{equation}
    \begin{aligned}
        \mathbb{E}[\tilde{p}_t \mid \mathcal{S}] &= \mathbb{E}\left[c \frac{R}{G} \,\middle|\,\mathcal{S}\right] \\
        &= \frac{c}{G} \mathbb{E}[R \mid \mathcal{S}] \\
        &= \frac{c}{G} \frac{\mathbb{E} [R \cdot \mathbf{1}_{\{\mathcal{S}\}}]}{\mathbb{P}(\mathcal{S})}.
    \end{aligned}
\end{equation}
And we have:
\begin{equation}
    \begin{aligned}
        \mathbb{E}[R \cdot \mathbf{1}_{\{S\}}] 
        &= \sum_{k=1}^{G-1} k \, \mathbb{P}(R = k) \\
        &= \mathbb{E}[R] - G \mathbb{P}(R = G).
    \end{aligned}
\end{equation}

Because the only term excluded from $\sum_{k=0}^{G} k\mathbb{P}(R=k)=\mathbb E[R]$ is the $k=G$ term (the $k=0$ term is zero anyway). Using $\mathbb E[R]=Gp_t$ and $\mathbb{P}(R=G)=p_t^G$, we can obtain:
\begin{equation}
    \begin{aligned}
        \mathbb{E}[R \cdot \mathbf{1}_{\{S\}}] &= Gp_t - Gp_t^G\\
        &=Gp_t(1-p_t^{G-1}).
    \end{aligned}
\end{equation}
Therefore:
\begin{equation}
    \begin{aligned}
        \mathbb E[\tilde {p}_t \mid S]&=\frac{c}{G}\cdot \frac{Gp_t(1-p_t^{G-1})}{1-(1-p_t)^G-p_t^G}\\
        &= c\,p_t\,\frac{1-p_t^{G-1}}{1-(1-p_t)^G-p_t^G},
    \end{aligned}
\end{equation}
which proves the stated conditional expectation formula.

\par To mitigate the biased estimation, let:
\begin{equation}
    \mathbb E[\tilde {p}_t \mid S]=p_t.
\end{equation}
And we can solve:
\begin{equation}
    c\,p_t\,\frac{1-p_t^{G-1}}{1-(1-p_t)^G-p_t^G}=p_t.
\end{equation}
The analytical solution for this equation is:
\begin{equation}
    c=\frac{1-(1-p_t)^G-p_t^G}{1-p_t^{G-1}}.
\end{equation}

When the adjustment coefficient $c$ falls within a specific range of values, we will have $| \tilde{p}_t-p_t|<| \hat{p}_t-p_t|$. We first let:
\begin{equation}
    \epsilon= | \hat{p}_t - p_t|.
\end{equation}

For $\mathbb{E}[\tilde{p_t} \mid S] = p_t + \epsilon$, solve:
\begin{equation}
    c_+\,p_t\,\frac{1-p_t^{G-1}}{1-(1-p_t)^G-p_t^G}=p_t+\epsilon.
\end{equation}
And we can derive:
\begin{equation}
    \begin{aligned}
        c_+ &= \frac{\left(p_t + \epsilon\right) \cdot \left(1 - (1 - p_t)^G - p_t^G\right)}{p_t (1 - p_t^{G-1})} \\
        &= \left(1 + \frac{\epsilon}{p_t}\right) c.
    \end{aligned}
\end{equation}

For $\mathbb{E}[\tilde{p_t} \mid S] = p_t - \epsilon$, solve:
\begin{equation}
    c_-\,p_t\,\frac{1-p_t^{G-1}}{1-(1-p_t)^G-p_t^G}=p_t-\epsilon.
\end{equation}
Thus, we have:
\begin{equation}
    \begin{aligned}
        c_- &= \frac{\left(p_t - \epsilon\right) \cdot \left(1 - (1 - p_t)^G - p_t^G\right)}{p_t (1 - p_t^{G-1})} \\
        &= \left(1 - \frac{\epsilon}{p_t}\right) c.
    \end{aligned}
\end{equation}


We can conclude that when:



\begin{equation}\label{eq:inequal}
    \begin{aligned}
        c \in & \left( \frac{\left(p_t - \epsilon\right) \cdot \left(1 - (1 - p_t)^G - p_t^G\right)}{p_t (1 - p_t^{G-1})}, \right. \\
          & \left. \frac{\left(p_t + \epsilon\right) \cdot \left(1 - (1 - p_t)^G - p_t^G\right)}{p_t (1 - p_t^{G-1})} \right),
    \end{aligned}
\end{equation}
we have
\begin{equation}
   \mathbb{E}\!\left[\tilde{p}_t \mid \mathcal{S}\right]
    \in
    \left(p_t - \epsilon,\; p_t + \epsilon\right).
\end{equation}
\end{proof}

\begin{lemma}[$p_t$-free concentration under $\mathcal S$]
\label{lem:p_free_conditional_hoeffding}
Define the non-degenerate event $\mathcal S := \{1 \le S \le G-1\}$.
Assume $p_t \in [\Delta,\, 1-\Delta]$ for some $\Delta \in (0,1/2]$.
Then for any $\zeta>0$, we have:
\begin{equation}
\label{eq:p_free_conditional_bound}
\begin{aligned}
&\mathbb P\!\left(|\hat p_t - p_t| < \zeta \,\middle|\, \mathcal S\right)
\\
&\ge   
\frac{
1 - 2\exp(-2G \zeta^2) - (1-\Delta)^G - \Delta^G
}{
1 - (1-\Delta)^G - \Delta^G
}.
\end{aligned}
\end{equation}
\end{lemma}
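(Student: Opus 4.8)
The plan is to reduce the conditional statement to a standard unconditional concentration bound divided by a lower bound on $\mathbb{P}(\mathcal S)$. First I would pass to the complement: $\mathbb{P}(|\hat p_t-p_t|<\zeta\mid\mathcal S)=1-\mathbb{P}(|\hat p_t-p_t|\ge\zeta\mid\mathcal S)$, and then expand by definition of conditional probability, $\mathbb{P}(|\hat p_t-p_t|\ge\zeta\mid\mathcal S)=\mathbb{P}(\{|\hat p_t-p_t|\ge\zeta\}\cap\mathcal S)/\mathbb{P}(\mathcal S)$. Bounding the intersection from above by the unconditional event $\{|\hat p_t-p_t|\ge\zeta\}$, it then suffices to (i) upper bound $\mathbb{P}(|\hat p_t-p_t|\ge\zeta)$ and (ii) lower bound $\mathbb{P}(\mathcal S)$. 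The complement reformulation in this first step is what lets the lower bound on $\mathbb{P}(\mathcal S)$ be used in the right direction.

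For (i), note that $G\hat p_t=R=\sum_{i=1}^G r_{t,i}$ is a sum of $G$ i.i.d.\ $\{0,1\}$ variables each with mean $p_t$, so Hoeffding's inequality gives $\mathbb{P}(|\hat p_t-p_t|\ge\zeta)\le 2\exp(-2G\zeta^2)$. For (ii), $\mathbb{P}(\mathcal S)=1-\mathbb{P}(R=0)-\mathbb{P}(R=G)=1-(1-p_t)^G-p_t^G$, so I would show that $g(p):=p^G+(1-p)^G$ is maximized over $[\Delta,1-\Delta]$ at the endpoints: $g$ is convex on $[0,1]$ (sum of the convex maps $p\mapsto p^G$ and $p\mapsto(1-p)^G$), hence attains its maximum on the compact interval at an endpoint, and by symmetry $g(\Delta)=g(1-\Delta)=\Delta^G+(1-\Delta)^G$. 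Therefore $g(p_t)\le\Delta^G+(1-\Delta)^G$ whenever $p_t\in[\Delta,1-\Delta]$, which yields $\mathbb{P}(\mathcal S)\ge 1-(1-\Delta)^G-\Delta^G$; this quantity is strictly positive for $G\ge 2$ and $\Delta\in(0,1/2]$, so dividing by it is legitimate.

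Combining (i) and (ii) gives $\mathbb{P}(|\hat p_t-p_t|\ge\zeta\mid\mathcal S)\le \frac{2\exp(-2G\zeta^2)}{1-(1-\Delta)^G-\Delta^G}$, and subtracting from $1$ and placing over a common denominator produces exactly the claimed bound in Equation~\eqref{eq:p_free_conditional_bound}. As a sanity check, substituting $\zeta=\epsilon_\delta$ from Equation~\eqref{eq:eps_delta} collapses the right-hand side to $1-\delta$, recovering the ``with probability at least $1-\delta$ conditional on $\mathcal S$'' statement used in Lemma~\ref{lemma:adjustment of baseline}. The only nontrivial estimate beyond the textbook Hoeffding inequality is the convexity argument for $g$; the main place to be careful is keeping the inequality directions straight, namely that $1-(1-\Delta)^G-\Delta^G$ enters as a \emph{lower} bound on $\mathbb{P}(\mathcal S)$ inside the complementary probability rather than an upper bound.
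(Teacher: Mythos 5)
Your proof is correct and follows essentially the same route as the paper's: unconditional Hoeffding for the deviation event combined with the endpoint bound $(1-p_t)^G+p_t^G\le(1-\Delta)^G+\Delta^G$ on $[\Delta,1-\Delta]$ to lower-bound $\mathbb P(\mathcal S)$. Passing to the complement is only a minor reorganization (the paper instead lower-bounds $\mathbb P(A\cap\mathcal S)\ge\mathbb P(A)-\mathbb P(\mathcal S^{-})$ and substitutes into numerator and denominator simultaneously), and it yields the identical final expression; your explicit convexity argument for the endpoint maximum of $p\mapsto p^G+(1-p)^G$ is a valid substitute for the paper's symmetry appeal.
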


\begin{proof}
    Let $A := \{|\hat p_t - p_t| < \zeta\}$.
By the definition of conditional probability:
\begin{equation}
\label{eq:cond_def}
\mathbb P(A \mid \mathcal S)
=
\frac{\mathbb P(A \cap \mathcal S)}{\mathbb P(\mathcal S)}.
\end{equation}
We lower bound the numerator. Since
\(
A \cap \mathcal S \supseteq A \setminus \mathcal S^{-},
\)
we have:
\begin{equation}
\label{eq:num_lb}
\mathbb P(A \cap \mathcal S)
\ge
\mathbb P(A) - \mathbb P(\mathcal S^{-}).
\end{equation}
Next, note that
\(
\mathcal S^{-} = \{S=0\} \cup \{S=G\}
\)
and these two events are disjoint. Therefore:
\begin{equation}\label{eq:Sc}
\begin{aligned}
    \mathbb P(\mathcal S^{-})
&=
\mathbb P(S=0) + \mathbb P(S=G)\\
&=
(1-p_t)^G + p_t^G.
\end{aligned}
\end{equation}
Moreover, we can derive that:
\begin{equation}
\label{eq:Sprob}
\mathbb P(\mathcal S)
=
1-\mathbb P(\mathcal S^{-})
=
1-(1-p_t)^G-p_t^G.
\end{equation}

We now lower bound $\mathbb P(A)$ using Hoeffding's inequality.
Since each $r_{t,i} \in [0,1]$ almost surely and $\{r_{t,i}\}_{i=1}^G$ are independent
with $\mathbb E[r_{t,i}]=p_t$, Hoeffding's inequality yields:
\begin{equation}
\label{eq:hoeffding_two_sided}
\mathbb P\!\left(|\hat p_t - p_t| \ge \zeta\right)
\le
2\exp(-2G\zeta^2),
\end{equation}
equivalently:
\begin{equation}
\label{eq:Aprob}
\mathbb P(A)
=
\mathbb P\!\left(|\hat p_t - p_t| < \zeta\right)
\ge
1-2\exp(-2G\zeta^2).
\end{equation}

It remains to remove the dependence on $p_t$ in $\mathbb P(\mathcal S)$.
Define $f(p):=p^G+(1-p)^G$. For $G\ge 1$, $f$ is symmetric around $1/2$ and
attains its maximum over $[\Delta,1-\Delta]$ at the endpoints. Hence:
\begin{equation}
\label{eq:Sc_upper}
(1-p_t)^G+p_t^G = f(p_t)\le f(\Delta)=(1-\Delta)^G+\Delta^G,
\end{equation}
which implies:
\begin{equation}
\label{eq:S_lower}
\mathbb P(\mathcal S)=1-f(p_t)\ge 1-(1-\Delta)^G-\Delta^G.
\end{equation}

Combining Equation~\eqref{eq:cond_def} and \eqref{eq:num_lb} with Equation~\eqref{eq:Aprob}, \eqref{eq:Sc_upper}, and \eqref{eq:S_lower}, we can obtain that:
\begin{align}
&\mathbb P(A \mid \mathcal S) \\
&\ge 
\frac{\mathbb P(A)-\mathbb P(\mathcal S^{-})}{\mathbb P(\mathcal S)}
\notag\\
&\ge 
\frac{
\left[1-2\exp(-2G\zeta^2)\right]
-
\left[(1-\Delta)^G+\Delta^G\right]
}{
1-(1-\Delta)^G-\Delta^G
},
\end{align}
which completes the proof.
\end{proof}

\begin{lemma}[Conditional $p_t$-free concentration under $\mathcal S$]
\label{thm:cond_p_free_conc_fail}
Assume $p_t\in[\Delta,\,1-\Delta]$ for some $\Delta\in(0,1/2]$.
Then for any $\delta\in(0,1)$, with probability at least $1-\delta$
conditional on $\mathcal S$, we have:
\begin{equation}
\label{eq:c_from_delta_fail}
|\hat p_t-p_t|
<
\sqrt{\frac{1}{2G}\log\!\left(\frac{2}{\delta\big(1-(1-\Delta)^G-\Delta^G\big)}\right)}.
\end{equation}
\end{lemma}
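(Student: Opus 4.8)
The plan is to obtain the statement by directly inverting the conditional concentration bound established in Lemma~\ref{lem:p_free_conditional_hoeffding}. Throughout, write $A_\Delta := 1-(1-\Delta)^G-\Delta^G$ for the quantity appearing in its denominator. First I would record that $A_\Delta\in(0,1]$: the bound $A_\Delta\le 1$ is immediate, and for positivity one uses $\Delta\in(0,1/2]$ together with $G\ge 2$ --- since $1-\Delta\in[1/2,1)$ and $\Delta\in(0,1/2]$, raising to the power $G\ge 2$ strictly decreases each term, so $(1-\Delta)^G+\Delta^G<(1-\Delta)+\Delta=1$. In particular, for every $\delta\in(0,1)$ we have $2/(\delta A_\Delta)>2$, so $\log\!\big(2/(\delta A_\Delta)\big)$ is strictly positive and the candidate threshold
\[
\zeta \;:=\; \sqrt{\tfrac{1}{2G}\,\log\!\Big(\tfrac{2}{\delta\,\big(1-(1-\Delta)^G-\Delta^G\big)}\Big)}
\]
is a well-defined positive real.

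Next I would instantiate Lemma~\ref{lem:p_free_conditional_hoeffding} at this particular $\zeta$, which yields
\[
\mathbb P\!\left(|\hat p_t - p_t| < \zeta \,\middle|\, \mathcal S\right)
\;\ge\;
\frac{A_\Delta - 2\exp(-2G\zeta^2)}{A_\Delta}
\;=\;
1-\frac{2\exp(-2G\zeta^2)}{A_\Delta}.
\]
By construction $2G\zeta^2=\log\!\big(2/(\delta A_\Delta)\big)$, hence $2\exp(-2G\zeta^2)=\delta A_\Delta$, so the right-hand side above equals exactly $1-\delta$. Therefore $|\hat p_t - p_t| < \zeta$ holds with probability at least $1-\delta$ conditional on $\mathcal S$, which is precisely the claimed inequality~\eqref{eq:c_from_delta_fail}.

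I do not expect any genuine obstacle: the argument is a one-step algebraic inversion of a Hoeffding-type tail bound, with $\zeta$ chosen so that $2\exp(-2G\zeta^2)/A_\Delta$ collapses to exactly $\delta$. The only two points needing a moment of care are (i) confirming $A_\Delta>0$, which is exactly where the hypothesis $\Delta\in(0,1/2]$ (and $G\ge 2$) is essential and without which the conditioning and the logarithm would be ill-posed, and (ii) checking that the displayed threshold is the equality solution rather than a loose bound, so that the resulting statement is tight; any $\zeta$ at least as large would serve equally well.
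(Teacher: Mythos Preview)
Your proposal is correct and follows exactly the same route as the paper: instantiate Lemma~\ref{lem:p_free_conditional_hoeffding} at the threshold $\zeta$ chosen so that $2\exp(-2G\zeta^2)/A_\Delta=\delta$, then read off the conclusion. Your explicit verification that $A_\Delta>0$ (so the logarithm and the conditioning are well-posed) is a useful sanity check that the paper omits.
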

\begin{proof}
Now choose $\gamma$ such that the right-hand side of Equation~\eqref{eq:p_free_conditional_bound} is at
most $\delta$, i.e.:
\[
\frac{2\exp(-2G\gamma^2)}{1-(1-\Delta)^G-\Delta^G}\le \delta.
\]
Solving for $\gamma$ gives:
\[
\gamma \ge
\sqrt{\frac{1}{2G}\log\!\left(
\frac{2}{\delta\big(1-(1-\Delta)^G-\Delta^G\big)}
\right)}.
\]
Therefore, for
\[
\gamma^\star :=
\sqrt{\frac{1}{2G}\log\!\left(
\frac{2}{\delta\big(1-(1-\Delta)^G-\Delta^G\big)}
\right)},
\]
we have
\(
\mathbb P(|\hat p_t-p_t|\ge \gamma^\star \mid \mathcal S)\le \delta
\),
equivalently,
\(
\mathbb P(|\hat p_t-p_t|< \gamma^\star \mid \mathcal S)\ge 1-\delta
\),
which proves Equation~\eqref{eq:c_from_delta_fail}.
\end{proof}

Finally, combining Lemma \ref{lem:lem3}, \ref{lem:p_free_conditional_hoeffding}, and \ref{thm:cond_p_free_conc_fail} gives Lemma \ref{lemma:adjustment of baseline}. We restated it here for completeness:
\begin{lemma}[A $p_t$-free feasible range of $c$ expressed via $\hat p_t$]
\label{lem:c_range_hatp}
Assume the conditions of Lemma~\ref{lem:p_free_conditional_hoeffding} and define:
\begin{equation}
\label{eq:eps_delta}
\epsilon_\delta
:=
\sqrt{\frac{1}{2G}\log\!\left(
\frac{2}{\delta\big(1-(1-\Delta)^G-\Delta^G\big)}
\right)}.
\end{equation}
Let:
\begin{equation}
\begin{aligned}
&I_t := \bigl[\hat p_t-\epsilon_\delta,\ \hat p_t+\epsilon_\delta\bigr]\cap[\Delta,1-\Delta], \\
& A(p):=1-(1-p)^G-p^G.
\end{aligned}
\end{equation}
Fix any $\epsilon>0$,
we define:
\begin{equation}
\label{eq:cl_ch_def}
c_{\mathrm{low}}
:=
\sup_{p\in I_t}
\frac{(p-\epsilon)\,A(p)}{p(1-p^{G-1})},
\end{equation}
and:
\begin{equation}
c_{\mathrm{high}}
:=
\inf_{p\in I_t}
\frac{(p+\epsilon)\,A(p)}{p(1-p^{G-1})}.
\end{equation}
Then, on the event $\{|\hat p_t-p_t|<\epsilon_\delta\}$ (which holds with
probability at least $1-\delta$ conditional on $\mathcal S$), any choice
\begin{equation}
\label{eq:c_interval_hatp}
c \in (c_{\mathrm{low}},\ c_{\mathrm{high}})
\end{equation}
implies that the condition \eqref{eq:inequal} holds for the true $p_t$, and hence:
\[
\mathbb E[\tilde p_t\mid \mathcal S] \in (p_t-\epsilon,\ p_t+\epsilon).
\]
\end{lemma}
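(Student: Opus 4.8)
The plan is to prove Lemma~\ref{lem:c_range_hatp} by a \emph{localize-then-uniformize} argument that converts the $p_t$-dependent feasibility condition of Lemma~\ref{lem:lem3} into a $p_t$-free condition expressed purely through the observable $\hat p_t$. Writing $g_-(p):=\frac{(p-\epsilon)A(p)}{p(1-p^{G-1})}$ and $g_+(p):=\frac{(p+\epsilon)A(p)}{p(1-p^{G-1})}$, Lemma~\ref{lem:lem3} states that whenever $c\in(g_-(p_t),g_+(p_t))$ we have $\mathbb E[\tilde p_t\mid\mathcal S]\in(p_t-\epsilon,p_t+\epsilon)$; since $p_t$ is unknown, the entire difficulty is to pin down an admissible $c$ without access to $g_\pm(p_t)$. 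First I would invoke Lemma~\ref{thm:cond_p_free_conc_fail}, which, with $\epsilon_\delta$ as defined in Equation~\eqref{eq:eps_delta}, guarantees that the event $E:=\{|\hat p_t-p_t|<\epsilon_\delta\}$ occurs with probability at least $1-\delta$ conditional on $\mathcal S$. This reduces the claim to a purely deterministic argument on $E$.

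On $E$, the true $p_t$ is \emph{localized}: $|\hat p_t-p_t|<\epsilon_\delta$ gives $p_t\in[\hat p_t-\epsilon_\delta,\hat p_t+\epsilon_\delta]$, and combining this with the standing assumption $p_t\in[\Delta,1-\Delta]$ yields $p_t\in I_t$. The key step is the \emph{uniformization} via the extremal definitions of $c_{\mathrm{low}}$ and $c_{\mathrm{high}}$: because $c_{\mathrm{low}}=\sup_{p\in I_t}g_-(p)$ and $p_t\in I_t$, we have $c_{\mathrm{low}}\ge g_-(p_t)$; symmetrically $c_{\mathrm{high}}=\inf_{p\in I_t}g_+(p)\le g_+(p_t)$. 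Hence any $c\in(c_{\mathrm{low}},c_{\mathrm{high}})$ automatically satisfies $g_-(p_t)\le c_{\mathrm{low}}<c<c_{\mathrm{high}}\le g_+(p_t)$, i.e.\ $c\in(g_-(p_t),g_+(p_t))$, which is precisely condition~\eqref{eq:inequal} evaluated at the true $p_t$. Feeding this back into Lemma~\ref{lem:lem3} delivers $\mathbb E[\tilde p_t\mid\mathcal S]\in(p_t-\epsilon,p_t+\epsilon)$, completing the argument on $E$ and hence with conditional probability at least $1-\delta$.

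The step I expect to require the most care is the \emph{well-definedness} of the extrema $c_{\mathrm{low}},c_{\mathrm{high}}$ and the positivity needed for Lemma~\ref{lem:lem3} to apply at $p_t$. I would verify that on $I_t\subseteq[\Delta,1-\Delta]$ the denominator $p(1-p^{G-1})$ is bounded away from zero (since $p\ge\Delta>0$ and $p^{G-1}\le(1-\Delta)^{G-1}<1$) and that $A(p)=1-(1-p)^G-p^G=\mathbb P(\mathcal S)>0$ for $G\ge 2$; these ensure $g_\pm$ are continuous and finite on the compact set $I_t$, so the supremum and infimum are attained and the sandwiching inequalities are genuine rather than vacuous. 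I would also note explicitly that the lemma asserts only the conditional implication ``\emph{if} $c\in(c_{\mathrm{low}},c_{\mathrm{high}})$,'' so no nonemptiness of this interval is needed for correctness; whether $(c_{\mathrm{low}},c_{\mathrm{high}})$ is nonempty is governed by how the oscillation of $g_\pm$ across $I_t$ (controlled by $\epsilon_\delta$, hence by $G$ and $\delta$) compares to the target tolerance $\epsilon$, a point I would flag but which the stated implication does not require.
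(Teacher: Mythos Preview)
Your proposal is correct and follows essentially the same approach as the paper, which proves this lemma in one sentence by ``combining Lemma~\ref{lem:lem3}, \ref{lem:p_free_conditional_hoeffding}, and \ref{thm:cond_p_free_conc_fail}.'' Your localize-then-uniformize argument is precisely the intended combination, and your additional care about well-definedness of $g_\pm$ on $I_t$ and the nonemptiness caveat goes beyond what the paper spells out.
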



\subsubsection{Proof of Theorem \ref{theo:algorithm} }

\par When applying adjustment on the advantage $\hat{A}_{t,i}$, we do not consider the standard deviation here, and assume that:
\begin{equation}
    \begin{aligned}
        \Phi_{t,i}\hat{A}_{t,i}=r_{t,i}-\tilde{p}_t=r_{t,i}-c\hat{p}_t.
    \end{aligned}
\end{equation}
It is equivalent to:
\begin{equation}
    \Phi_{t,i}r_{t,i}-\Phi_{t,i}\hat{p}_t=r_{t,i}-c\hat{p}_t.
\end{equation}
And for correct responses with $r_{t,i}=1$:
\begin{equation}
    \Phi_{t,i}=\frac{1-c\hat{p}_t}{1-\hat{p}_t}.
\end{equation}
While for incorrect responses with $r_{t,i}=0$:
\begin{equation}
    \Phi_{t,i}=c.
\end{equation}

\par According to Equation~\eqref{eq:reweighting factor}:
\begin{equation}\label{eq:scale}
\begin{aligned}
    \Phi_{t,i}&=\lambda_\mathrm{scale}\cdot \exp\left(D_{t,i} \cdot M_{t}\right),
\end{aligned}
\end{equation}
the adjustment of $A_{t,i}$ can be categorized into four types. For responses in defined hard prompts with $r_{t,i}=1$, the adjusted advantage can be denoted as:
\begin{equation}
\hat{A}^{\mathrm{1}}_{t,i}=\lambda_\mathrm{scale}\cdot\exp\left(M_{t}\right)\cdot \hat{A}_{t,i}.
\end{equation}

For hard prompts, we have $c\in \left(0, 1\right)$ and $\hat{p}\in \left(0, 1\right)$. Based on Equation~\eqref{eq:inequal} and Lemma~\ref{lem:c_range_hatp}, to mitigate biased estimation, $\lambda_\mathrm{scale}$ satisfies:
\begin{equation}
    \lambda_{\mathrm{scale}}^{\mathrm{1}}\in \left( \frac{1 + \frac{(1 - c_{\mathrm{high}}^{\mathrm{hard}}) \hat{p}_t}{1 - \hat{p}_t}}{\exp\left(M_{t}\right)},\ \frac{1 + \frac{(1 - c_{\mathrm{low}}^{\mathrm{hard}}) \hat{p}_t}{1 - \hat{p}_t}}{\exp\left(M_{t}\right)} \right).
\end{equation}
And for incorrect responses in hard prompts, we have:
\begin{equation}
    \begin{aligned}
\hat{A}^{\mathrm{2}}_{t,i}=\frac{\lambda_\mathrm{scale}}{\exp\left(M_{t}\right)}\cdot \hat{A}_{t,i}.
    \end{aligned}
\end{equation}
And we can set:
\begin{equation}
    \lambda_{\mathrm{scale}}^{\mathrm{2}}\in \left( c_{\mathrm{low}}^{\mathrm{hard}}\cdot{\exp\left(M_{t}\right)},\ c_{\mathrm{high}}^{\mathrm{hard}}\cdot{\exp\left(M_{t}\right)} \right).
\end{equation}

For easy prompts, we have $c>1$ and $\hat{p}\in \left(0, 1\right)$, thus for correct answers:
\begin{equation}
\begin{aligned}
    \lambda_{\mathrm{scale}}^3 \in \left( 
    \left(1 + \frac{(1 - c_{\mathrm{high}}^{\mathrm{easy}}) \hat{p}_t}{1 - \hat{p}_t}\right) \cdot \exp(M_t), \right. \\
    \left. \left(1 + \frac{(1 - c_{\mathrm{low}}^{\mathrm{easy}}) \hat{p}_t}{1 - \hat{p}_t}\right) \cdot \exp(M_t)
\right).
\end{aligned}
\end{equation}
And for incorrect responses:
\begin{equation}
    \lambda_{\mathrm{scale}}^{\mathrm{4}}\in \left( \frac{c_{\mathrm{low}}^{\mathrm{easy}}}{\exp\left(M_{t}\right)},\ \frac{c_{\mathrm{high}}^{\mathrm{easy}}}{\exp\left(M_{t}\right)} \right).
\end{equation}

In training process with HA-DW, to rectify the biased advantage estimation, there exists a specfic $\lambda_{\mathrm{scale}}$ supposing to satisfy:
\begin{equation}
    \begin{aligned}
        \lambda_{\mathrm{scale}}\in \lambda_{\mathrm{scale}}^{\mathrm{1}}\cup \lambda_{\mathrm{scale}}^{\mathrm{2}}\cup
        \lambda_{\mathrm{scale}}^{\mathrm{3}}\cup \lambda_{\mathrm{scale}}^{\mathrm{4}}
    \end{aligned}
\end{equation}
which denotes:
\begin{equation}\label{eq:lamba-scale-correct}
    \begin{aligned}
        \lambda_{\mathrm{scale}}\in &\left( \frac{1 + \frac{(1 - c_{\mathrm{low}}^{\mathrm{hard}}) \hat{p}_t}{1 - \hat{p}_t}}{\exp\left(M_{t}\right)},\ \frac{1 + \frac{(1 - c_{\mathrm{low}}^{\mathrm{hard}}) \hat{p}_t}{1 - \hat{p}_t}}{\exp\left(M_{t}\right)} \right) \cup \\
        &\left( 
        \left(1 + \frac{(1 - c_{\mathrm{high}}^{\mathrm{easy}}) \hat{p}_t}{1 - \hat{p}_t}\right) \cdot \exp(M_t), \right. \\
        &\left. \left(1 + \frac{(1 - c_{\mathrm{low}}^{\mathrm{easy}}) \hat{p}_t}{1 - \hat{p}_t}\right) \cdot \exp(M_t)
        \right) \cup \\
        &\left( c_{\mathrm{low}}^{\mathrm{hard}}\cdot{\exp\left(M_{t}\right)},\ c_{\mathrm{low}}^{\mathrm{hard}}\cdot{\exp\left(M_{t}\right)} \right) \cup \\
        &\left( \frac{c_{\mathrm{low}}^{\mathrm{easy}}}{\exp\left(M_{t}\right)},\ \frac{c_{\mathrm{high}}^{\mathrm{easy}}}{\exp\left(M_{t}\right)} \right).
    \end{aligned}
\end{equation}

Overall, since the difficulty does not affect the expressions, we can further derive Equation~\eqref{eq:lamba-scale-correct} as follows:
\begin{equation}\label{eq:lambda-scale-simplified}
    \begin{aligned}
         \lambda_{\mathrm{scale}}\in &\left( \frac{1 + \frac{(1 - c_{\mathrm{high}}) \hat{p}_t}{1 - \hat{p}_t}}{\exp\left(D_{t,i}\cdot M_{t}\right)},\ \frac{1 + \frac{(1 - c_{\mathrm{low}}) \hat{p}_t}{1 - \hat{p}_t}}{\exp\left(D_{t,i}\cdot M_{t}\right)} \right)  \\
        &\cup \left( \frac{c_{\mathrm{low}}}{\exp\left(D_{t,i}\cdot M_{t}\right)},\ \frac{c_{\mathrm{high}}}{\exp\left(D_{t,i}\cdot M_{t}\right)} \right) .
    \end{aligned}
\end{equation}

When Equation~\eqref{eq:lambda-scale-simplified} holds, our method HA-DW is efficient in compensating biased advantage estimation.

\subsection{Non-binary Reward Analysis} \label{appendix:nonbinary} 
In this section, we extend our analysis to \emph{continuous bounded reward distributions} (e.g., Beta and truncated Gaussian scores), which better reflect the behavior of soft verifiers and learned reward models commonly used in practice. Our extended analysis demonstrates that, under these more general reward assumptions, the group-relative advantage estimator remains \emph{systematically biased} in an analogous manner: it tends to \emph{underestimate} the true advantage for hard prompts and \emph{overestimate} the true advantage for easy prompts. Moreover, as prompt difficulty becomes more extreme (i.e., as $\Delta$ increases), the magnitude of this bias becomes increasingly pronounced. Next, we show the main results.

\begin{theorem}
    At training step $t$ and let $G \ge 2$, with CDF $F$ and PDF $f$. Given a prompt $x_t\sim\mathcal D$ and draw $G\ge 2$ i.i.d.\ rewards:
\begin{equation}
    r_{t,1},\dots,r_{t,G}\ \overset{\text{i.i.d.}}{\sim}\ \mathcal D(p_t).
\end{equation}
And we extend the binary reward setting to non-binary rewards:
\begin{equation}
    r_{t,i}\in \{0,1\} \rightarrow r_{t,i}\in [0,1].
\end{equation}

The group-relative advantage can be denoted as:
\begin{equation}
    \hat A_{t,i}\coloneqq r_{t,i}-\hat p_t, \quad \hat{p}_t=\frac{1}{G}\sum_{i=1}^G r_{t,i},
\end{equation}
while the expected advantage is defined as:
\begin{equation}
    A_{t,i}\coloneqq r_{t,i}-p_t.
\end{equation}

Fix a constant $\sigma\in[0,1]$ and define the update event:
\begin{equation}
    \begin{aligned}
        S_{\sigma} &:= \bigl\{\exists\, i\neq j:\ |r_{t,i}-r_{t,j}|>\sigma\bigr\} \\
\Rightarrow 
S_{\sigma}^c&=\bigl\{\max_i r_{t,i}-\min_i r_{t,i}\le \sigma\bigr\}.
    \end{aligned}
\end{equation}
For $u\in[0,1]$, define $u^+:=\min\{1,u+\sigma\}$, we have:
\begin{equation}
    q(u):=F(u^+)-F(u),
\end{equation}
and:
\begin{equation}
\begin{aligned}
    m(u)&:=\mathbb{E}[r_{t,1}\mid u\le r_{t,1}\le u^+]\\
&=\frac{\int_u^{u^+} x f(x)\,dx}{F(u^+)-F(u)} \quad \text{(when }q(u)>0\text{)}.
\end{aligned}
\end{equation}

Then the probability of a \emph{non-update} is:
\begin{equation}
    \mathbb{P}(S_{\sigma}^c)=G\int_0^1 f(u)\,q(u)^{G-1}\,du,
\end{equation}
and:
\begin{equation}
    \qquad \mathbb{P}(S_{\sigma})=1-\mathbb{P}(S_{\sigma}^c).
\end{equation}

Moreover, we have:
\begin{equation}
    \begin{aligned}
        \mathbb{E}[\hat p_t\mid S_{\sigma}]
=\frac{p_t-\mathbb{E}[\hat p_t\cdot \mathbf{1}_{\{S_{\sigma}^c\}}]}{\mathbb{P}(S_{\sigma})}
    \end{aligned}
\end{equation}
with:
\begin{equation}
    \begin{aligned}
        &\mathbb{E}[\hat p_t\cdot\mathbf{1}_{\{S_{\sigma}^c\}}]\\
&=\int_0^1 \bigl(u+(G-1)m(u)\bigr)\, f(u)\, q(u)^{G-1}\,du.
    \end{aligned}
\end{equation}

Finally, the conditional bias transferred to advantages satisfies, for all $i$, we have:
\begin{equation}
    \begin{aligned}
        \mathbb{E}[\hat A_{t,i}-A_{t,i}\mid S_{\sigma}]=p_t-\mathbb{E}[\hat p_t\mid S_{\sigma}].
    \end{aligned}
\end{equation}
\end{theorem}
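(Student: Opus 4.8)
The plan is to collapse the entire statement onto one object, the conditional mean $\mathbb{E}[\hat p_t\mid S_\sigma]$, and then obtain its two ingredients $\mathbb{P}(S_\sigma^c)$ and $\mathbb{E}[\hat p_t\cdot\mathbf{1}_{\{S_\sigma^c\}}]$ by conditioning on the index that attains the sample minimum. First I would record the pointwise identity $\hat A_{t,i}-A_{t,i}=(r_{t,i}-\hat p_t)-(r_{t,i}-p_t)=p_t-\hat p_t$, valid for every $i$, which gives the final displayed equality $\mathbb{E}[\hat A_{t,i}-A_{t,i}\mid S_\sigma]=p_t-\mathbb{E}[\hat p_t\mid S_\sigma]$ for free. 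Next, since $\hat p_t=\tfrac1G\sum_i r_{t,i}$ has unconditional mean $p_t$ by linearity (here $\mathcal D(p_t)$ has mean $p_t$), the split $p_t=\mathbb{E}[\hat p_t\mathbf{1}_{\{S_\sigma\}}]+\mathbb{E}[\hat p_t\mathbf{1}_{\{S_\sigma^c\}}]$ divided by $\mathbb{P}(S_\sigma)>0$ yields $\mathbb{E}[\hat p_t\mid S_\sigma]=(p_t-\mathbb{E}[\hat p_t\mathbf{1}_{\{S_\sigma^c\}}])/\mathbb{P}(S_\sigma)$, which is the third displayed formula. So everything reduces to the two integral expressions.

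For $\mathbb{P}(S_\sigma^c)$: since $\mathcal D$ is continuous, the $G$ rewards are a.s.\ distinct, so there is a unique minimizer and $\mathbf{1}_{\{S_\sigma^c\}}=\sum_{i=1}^G\mathbf{1}_{\{r_{t,i}=\min_j r_{t,j},\,S_\sigma^c\}}$ almost surely, with the summands pairwise disjoint. On the event that index $i$ is the minimizer, $S_\sigma^c$ is precisely the event that the other $G-1$ rewards lie in $[r_{t,i},\,r_{t,i}+\sigma]$; because all rewards lie in $[0,1]$, the upper endpoint may be replaced by $u^+=\min\{1,u+\sigma\}$. Conditioning on $r_{t,i}=u$ and using independence of the remaining coordinates gives the factor $q(u)^{G-1}$; integrating against $f(u)\,du$ and summing the $G$ symmetric contributions produces $\mathbb{P}(S_\sigma^c)=G\int_0^1 f(u)\,q(u)^{G-1}\,du$, and $\mathbb{P}(S_\sigma)=1-\mathbb{P}(S_\sigma^c)$ is immediate.

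For $\mathbb{E}[\hat p_t\mathbf{1}_{\{S_\sigma^c\}}]$ I would rerun the same symmetrization: it equals $G$ times the contribution of $\{r_{t,1}=\min,\,S_\sigma^c\}$. Conditioning on $r_{t,1}=u$, write $\hat p_t=\tfrac1G\big(u+\sum_{j=2}^G r_{t,j}\big)$ and multiply by $\prod_{j=2}^G\mathbf{1}_{\{u\le r_{t,j}\le u^+\}}$; expanding the product of the sum and the indicators and invoking independence across $j$ separates this into the ``$u$'' term contributing $u\,q(u)^{G-1}$ and $G-1$ equal cross terms, each contributing $\mathbb{E}[r_{t,2}\mathbf{1}_{\{u\le r_{t,2}\le u^+\}}]\cdot q(u)^{G-2}=m(u)q(u)\cdot q(u)^{G-2}=m(u)q(u)^{G-1}$ by the definition of $m(u)$. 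Collecting gives the conditional value $\tfrac1G\big(u+(G-1)m(u)\big)q(u)^{G-1}$, and integrating against $f(u)\,du$ and multiplying by $G$ gives $\mathbb{E}[\hat p_t\mathbf{1}_{\{S_\sigma^c\}}]=\int_0^1\big(u+(G-1)m(u)\big)f(u)\,q(u)^{G-1}\,du$. Substituting this and $\mathbb{P}(S_\sigma)$ into the ratio closes the argument.

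The main obstacle will be the bookkeeping in that last step: justifying the minimizer decomposition for a continuous law (ties are null events), keeping the factorization correct when the summand $r_{t,k}$ shares an index with one of the indicators, and handling the degenerate case $q(u)=0$ --- which is harmless because $G\ge2$ forces $q(u)^{G-1}$ (equivalently, the cross-term factor $\int_u^{u^+}xf(x)\,dx\cdot q(u)^{G-2}$) to vanish there, so the convention $m(u)\,q(u)^{G-1}:=0$ at such $u$ is consistent. Everything else is linearity of expectation and Fubini.
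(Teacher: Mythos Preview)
Your proposal is correct and follows essentially the same approach as the paper. The only cosmetic difference is bookkeeping: the paper conditions on the order statistic $\min=u$ via its density $g_{\min}(u)=Gf(u)(1-F(u))^{G-1}$ and then computes $\mathbb{P}(S_\sigma^c\mid\min=u)=\bigl(q(u)/(1-F(u))\bigr)^{G-1}$, whereas you decompose by the \emph{index} of the minimizer and condition on $r_{t,i}=u$ directly; after the $(1-F(u))^{G-1}$ factors cancel, both routes land on the same integrand $Gf(u)q(u)^{G-1}$, and likewise for the $\hat p_t$ computation.
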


\noindent \textit{Proof.} The complement event can be denoted as:
\begin{equation}
    S_{\sigma}^c=\{\max-\min\le \sigma\}
\end{equation}
For absolutely continuous i.i.d.\ samples, the minimum has density:
\begin{equation}
    g_{\min}(u)=G f(u)\bigl(1-F(u)\bigr)^{G-1}.
\end{equation} 
Condition on $\min=u$. The remaining $G-1$ samples are i.i.d.\ with the original law conditioned on $[u,1]$;
imposing $\max\le u^+$ is equivalent to requiring each of those samples lies in $[u,u^+]$.
Thus:
\begin{equation}
    \begin{aligned}
        \mathbb{P}(S_{\sigma}^c\mid \min=u)=\Bigl(\frac{F(u^+)-F(u)}{1-F(u)}\Bigr)^{G-1},
    \end{aligned}
\end{equation}
and multiplying by $g_{\min}(u)$ gives:
\begin{equation}
    \mathbb{P}(S_{\sigma}^c)=G\int_0^1 f(u)\,q(u)^{G-1}du.
\end{equation}

On $S_{\sigma}^c$ and $\min=u$, one sample equals the minimum and the other $G-1$ samples lie in $[u,u^+]$.
By symmetry, the conditional mean of each of the $G-1$ non-minimum samples is $m(u)$, hence:
\begin{equation}
    \begin{aligned}
        \mathbb{E}\Bigl[\sum_{i=1}^G r_{t,i}\ \Big|\ S_{\sigma}^c,\min=u\Bigr]=u+(G-1)m(u).
    \end{aligned}
\end{equation}

So we can derive:
\begin{equation}
    \begin{aligned}
        &\mathbb{E}[\hat p_t\cdot\mathbf{1}_{\{S_{\sigma}^c\}}] \\
&=\int_0^1 \frac{u+(G-1)m(u)}{G}\, d\mathbb{P}(\min\in du, S_{\sigma}^c)\\
&=\int_0^1 (u+(G-1)m(u)) f(u) q(u)^{G-1}du.
    \end{aligned}
\end{equation}

\begin{corollary}
    For $\mathrm{Beta}(\alpha,\beta)$ reward distribution, the Beta density is:
\begin{equation}
    f(x)=\frac{x^{\alpha-1}(1-x)^{\beta-1}}{B(\alpha,\beta)},
\end{equation}
and the CDF is:
\begin{equation}
    F(x)=I_x(\alpha,\beta)\quad \text{for }x\in[0,1],
\end{equation}
where $B(\cdot,\cdot)$ is the Beta function and $I_x(\alpha,\beta)$ is the regularized incomplete beta function.
In particular:
\begin{equation}
    p_t=\mathbb{E}[r_{t,1}]=\frac{\alpha}{\alpha+\beta}.
\end{equation}
Moreover, we have:
\begin{equation}
\begin{aligned}
    q(u)&=F(u^+)-F(u) \\
    &=I_{u^+}(\alpha,\beta)-I_u(\alpha,\beta),
\end{aligned}
\end{equation}
and the conditional mean over $[u,u^+]$ admits the closed form:
\begin{equation}
    \begin{aligned}
        m(u)
&=\frac{\int_u^{u^+} x f(x)\,dx}{\int_u^{u^+} f(x)\,dx} \\
&=\frac{B_{u^+}(\alpha+1,\beta)-B_u(\alpha+1,\beta)}
       {B_{u^+}(\alpha,\beta)-B_u(\alpha,\beta)},
    \end{aligned}
\end{equation}
where $B_x(\cdot,\cdot)$ denotes the (unregularized) incomplete beta function.
\end{corollary}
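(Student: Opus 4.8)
The plan is to specialize the general non-binary reward theorem above to the case $\mathcal{D}(p_t)=\mathrm{Beta}(\alpha,\beta)$; once the distribution is fixed, every quantity appearing in that theorem ($p_t$, $q(u)$, $m(u)$) reduces to a routine incomplete-beta-function identity. The density $f(x)=x^{\alpha-1}(1-x)^{\beta-1}/B(\alpha,\beta)$ and the CDF $F(x)=I_x(\alpha,\beta)$ are simply the textbook definition of the Beta law on $[0,1]$, so nothing needs to be proved there beyond recalling $\int_0^1 f=1$ and $F(x)=\int_0^x f$.

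First I would verify $p_t=\alpha/(\alpha+\beta)$. Writing $p_t=\int_0^1 x\,f(x)\,dx=\frac{1}{B(\alpha,\beta)}\int_0^1 x^{\alpha}(1-x)^{\beta-1}\,dx=\frac{B(\alpha+1,\beta)}{B(\alpha,\beta)}$ and invoking the functional equation $B(\alpha+1,\beta)=\frac{\Gamma(\alpha+1)\Gamma(\beta)}{\Gamma(\alpha+\beta+1)}=\frac{\alpha}{\alpha+\beta}\,B(\alpha,\beta)$ yields the claim. The formula for $q(u)$ is immediate: it is just the identity $q(u)=F(u^+)-F(u)$ from the parent theorem with $F(v)=I_v(\alpha,\beta)$ substituted in, using $u^+=\min\{1,u+\sigma\}$ as defined there.

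The substantive step is the closed form for the conditional mean $m(u)$. I would substitute the Beta density into the numerator $\int_u^{u^+}x\,f(x)\,dx$ and denominator $\int_u^{u^+}f(x)\,dx$ of $m(u)$; the normalization constant $1/B(\alpha,\beta)$ appears in both and cancels. For the numerator, observe $x\cdot x^{\alpha-1}(1-x)^{\beta-1}=x^{(\alpha+1)-1}(1-x)^{\beta-1}$, so by the definition of the \emph{unregularized} incomplete beta function $B_x(a,b):=\int_0^x t^{a-1}(1-t)^{b-1}\,dt$ together with additivity of the integral over $[0,u]$ and $[0,u^+]$, the numerator integral equals $B_{u^+}(\alpha+1,\beta)-B_u(\alpha+1,\beta)$; the denominator integral similarly equals $B_{u^+}(\alpha,\beta)-B_u(\alpha,\beta)$. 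Taking the ratio gives exactly the stated expression.

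I do not expect a genuine obstacle here, since the result is a direct specialization of the preceding theorem. The only care required is bookkeeping: keeping the regularized CDF $I_x(\alpha,\beta)$ distinct from the unregularized $B_x(\alpha,\beta)$ and checking that the $B(\alpha,\beta)$ factor cancels cleanly in $m(u)$, and correctly tracking the parameter shift $\alpha\mapsto\alpha+1$ forced by the extra factor of $x$ in the numerator integrand. I would also inherit from the parent theorem the caveat that $m(u)$ is only defined when $q(u)>0$, which fails only in the degenerate case $u^+=u$ (i.e. $\sigma=0$), so no separate handling is needed.
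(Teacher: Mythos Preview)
Your proposal is correct and mirrors the paper's own treatment: the paper does not give a separate proof for this corollary but simply states the Beta density, CDF, and mean, and then remarks that substituting these $F,f,q,m$ into the general non-binary theorem yields the explicit expressions. Your write-up is in fact more detailed than the paper's (you spell out the $B(\alpha+1,\beta)/B(\alpha,\beta)$ identity and the $\alpha\mapsto\alpha+1$ shift in $m(u)$), but the underlying approach---direct specialization of the parent theorem---is identical.
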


Consequently, substituting $F,f,q,m$ into conclusions obtained earlier yields explicit one-dimensional
integral expressions (in standard special functions) for $\mathbb{P}(S_{\sigma}^c)$ and $\mathbb{E}[\hat p_t\mid S_{\sigma}]$.

\begin{corollary} \label{cor_}
    Let the reward $Z_{t,1},\dots,Z_{t,G}$ be i.i.d.\ $\mathcal{N}(\mu,\xi^2)$ with $\xi>0$, and define $r_{t,i}$ to be \emph{properly truncated} to $[0,1]$, i.e.\ $r_{t,i}$ has the conditional law:
    \begin{equation}
        \begin{aligned}
            r_{t,i}\ \stackrel{d}{=}\ Z_{t,i}\ \big|\ (0\le Z_{t,i}\le 1),
\qquad i=1,\dots,G.
        \end{aligned}
    \end{equation}

    Let $u^+:=\min\{1,u+c\}$ and define, for $u\in[0,1]$ with $q(u)>0$, we have:
    \begin{equation}
        q(u):=\mathbb{P}\bigl(u\le r_{t,1}\le u^+\bigr),
    \end{equation}
    and:
    \begin{equation}
        m(u):=\mathbb{E}\bigl[r_{t,1}\mid u\le r_{t,1}\le u^+\bigr].
    \end{equation}

    Let $\Phi$ and $\varphi$ be the standard normal CDF and PDF, and set:
    \begin{equation}
        a:=\frac{0-\mu}{\xi},\qquad b:=\frac{1-\mu}{\xi}.
    \end{equation}

    Then the truncated-normal density on $[0,1]$ is:
    \begin{equation}
        \begin{aligned}
            f(x)=\frac{\varphi\!\left(\frac{x-\mu}{\xi}\right)}
          {\sigma\bigl(\Phi(b)-\Phi(a)\bigr)}\,\mathbf{1}_{[0,1]}(x).
        \end{aligned}
    \end{equation}

    Its CDF on $[0,1]$ is:
    \begin{equation}
        F(x)=\frac{\Phi\!\left(\frac{x-\mu}{\xi}\right)-\Phi(a)}{\Phi(b)-\Phi(a)}.
    \end{equation}
    The mean satisfies:
    \begin{equation}
        p_t=\mathbb{E}[r_{t,1}]
=\mu+\xi\,\frac{\varphi(a)-\varphi(b)}{\Phi(b)-\Phi(a)}.
    \end{equation}
    Moreover:
    \begin{equation}
        q(u)=F(u^+)-F(u),
    \end{equation}
    and the conditional mean over $[u,u^+]$ has the standard truncated-normal form:
    \begin{equation}
        m(u)
=\mu+\sigma\,
\frac{\varphi\!\left(\frac{u-\mu}{\xi}\right)-\varphi\!\left(\frac{u^+-\mu}{\xi}\right)}
     {\Phi\!\left(\frac{u^+-\mu}{\xi}\right)-\Phi\!\left(\frac{u-\mu}{\xi}\right)}.
    \end{equation}
Consequently, substituting $F,f,q,m$ to yield explicit one-dimensional
integral expressions for $\mathbb{P}(S_{\sigma}^c)$ and $\mathbb{E}[\hat p_t\mid S_{\sigma}]$ in terms of $\Phi$ and $\varphi$.
\end{corollary}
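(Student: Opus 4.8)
The plan is to verify each displayed identity directly from the definition of the $[0,1]$-truncated Gaussian and then invoke the general non-binary reward theorem above as a black box. First I would note that, since the $Z_{t,i}$ are i.i.d.\ and each $r_{t,i}$ is obtained by conditioning its own copy $Z_{t,i}$ on the event $\{0\le Z_{t,i}\le 1\}$, the variables $r_{t,1},\dots,r_{t,G}$ are again i.i.d.; this places us in the setting of the preceding theorem with $\mathcal D(p_t)$ taken to be the $[0,1]$-truncated $\mathcal N(\mu,\xi^2)$. The density formula then follows from Bayes' rule: the conditional law has density proportional to the Gaussian density restricted to $[0,1]$, with normalizing constant $\mathbb P(0\le Z_{t,1}\le 1)=\Phi(b)-\Phi(a)$ where $a=-\mu/\xi$ and $b=(1-\mu)/\xi$, so $f(x)=\varphi((x-\mu)/\xi)/\bigl(\xi(\Phi(b)-\Phi(a))\bigr)$ on $[0,1]$. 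Integrating $f$ from $0$ to $x$ under the change of variables $z=(y-\mu)/\xi$ together with $\int\varphi(z)\,dz=\Phi(z)+\text{const}$ yields the stated CDF.

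Next I would compute the mean using the elementary identity $\tfrac{d}{dz}\varphi(z)=-z\,\varphi(z)$, hence $\int_a^b z\,\varphi(z)\,dz=\varphi(a)-\varphi(b)$. Writing $p_t=\int_0^1 x f(x)\,dx$, substituting $z=(x-\mu)/\xi$ and $x=\mu+\xi z$, and splitting the integral gives $p_t=\mu+\xi(\varphi(a)-\varphi(b))/(\Phi(b)-\Phi(a))$, which is exactly the claimed expression. The identity $q(u)=F(u^+)-F(u)$ is immediate, since by definition $q(u)=\mathbb P(u\le r_{t,1}\le u^+)$. For $m(u)$, the key observation is that conditioning the $[0,1]$-truncated normal further on the sub-interval $[u,u^+]$ produces the $[u,u^+]$-truncated normal, because the extra normalizing constants cancel; applying the same first-moment computation as for $p_t$, but with truncation limits $u$ and $u^+$ in place of $0$ and $1$, produces the stated closed form for $m(u)$ in terms of $\Phi$ and $\varphi$, valid wherever $q(u)>0$, which is precisely the regime the general theorem requires.

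Finally, with $F,f,p_t,q,m$ identified in closed form, I would substitute them into the three formulas supplied by the non-binary theorem — $\mathbb P(S_{\sigma}^c)=G\int_0^1 f(u)\,q(u)^{G-1}\,du$, $\mathbb E[\hat p_t\cdot\mathbf{1}_{\{S_{\sigma}^c\}}]=\int_0^1(u+(G-1)m(u))\,f(u)\,q(u)^{G-1}\,du$, and $\mathbb E[\hat p_t\mid S_{\sigma}]=(p_t-\mathbb E[\hat p_t\cdot\mathbf{1}_{\{S_{\sigma}^c\}}])/\mathbb P(S_{\sigma})$ — which turns them into one-dimensional integrals whose integrands involve only $\Phi$ and $\varphi$. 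This step is purely mechanical once the closed forms are in hand.

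I do not expect a genuine obstacle here, since all substantive content is carried by the general non-binary theorem, which this corollary merely instantiates. The only points requiring a little care are justifying that the double truncation $[0,1]\cap[u,u^+]=[u,u^+]$ collapses to a single truncated normal so that $m(u)$ has the clean standard form, and keeping the scale factor consistent (the density line in the statement writes $\sigma$ where the parameter $\xi$ is intended, and the truncation half-width is denoted $c$ in the corollary versus $\sigma$ in the general theorem); both are routine bookkeeping.
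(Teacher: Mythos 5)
Your proposal is correct and takes essentially the same route as the paper, which states this corollary as a direct instantiation of the preceding non-binary theorem using standard truncated-normal facts (density via Bayes/normalization, CDF by substitution, first moments via $\varphi'(z)=-z\varphi(z)$) and offers no further proof beyond "substitute $F,f,q,m$." Your added care about the double truncation collapsing to a single truncated normal and about the notational slips ($\sigma$ versus $\xi$, $c$ versus $\sigma$) is sound bookkeeping that the paper itself glosses over.
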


\begin{tcolorbox}[
    colback=blue!5,
    colframe=blue!60!black,
    boxrule=0.8pt,
    arc=2pt,
    left=6pt,
    right=6pt,
    top=6pt,
    bottom=6pt
]
\paragraph{Remark.}
Theoretical and numerical evaluations under properly truncated Gaussian reward distributions indicate that the group-relative
advantage bias,
$\left|A_{t,i} - \mathbb{E}[\hat A_{t,i} \mid S_{\sigma}]\right|$,
tends to increase as $p_t$ deviates further from $1/2$.
Intuitively, near these extremes, a typical group of samples exhibits reduced dispersion.
As a result, conditioning on the non-degenerate event $S_{\sigma}$ preferentially selects groups with
atypically large variability, which in turn shifts the conditional expectation of the advantage away
from its true value.
\end{tcolorbox}

Figure~\ref{appendix:biasdistribution} illustrates two representative cases corresponding to
group sizes $G=4$ and $G=8$, as predicted by Corollary~\ref{cor_}.
In both settings, the magnitude of the bias $|A_{t,i} - \mathbb{E}[\hat A_{t,i} \mid S_{\sigma}]|$ increases
as $p_t$ moves farther away from $0.5$, corroborating our theoretical analysis.

\begin{figure}[th]
  \centering
  \includegraphics[width=\columnwidth]{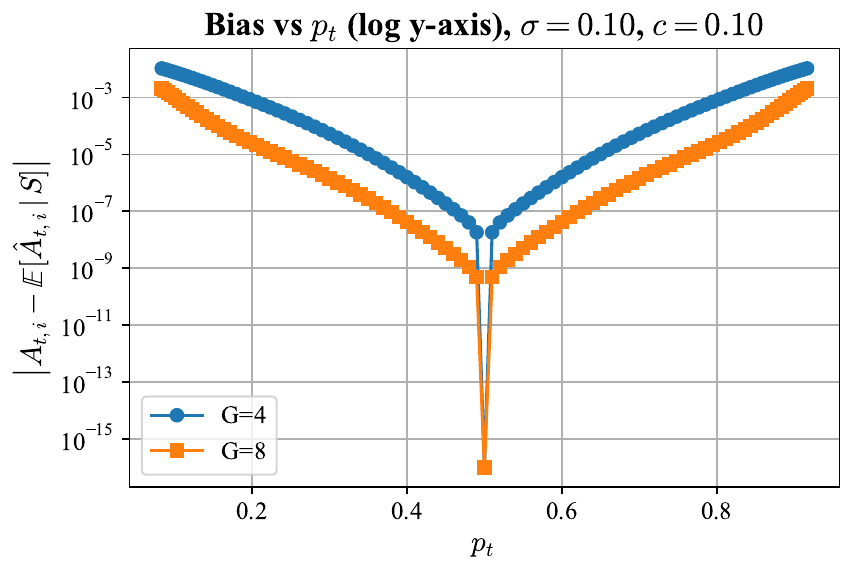}
  \caption{Illustration of advantage bias under truncated Gaussian rewards for different group sizes.}
  \label{appendix:biasdistribution}
\end{figure}

\section{Supplementary Experiments} \label{appendix:supplementary}

\subsection{Advantage Distribution} \label{appendix:toy-experiment}

We conducted an assessment of select prompts from the widely used training dataset MATH and DAPO-Math-17k \citep{DBLP:journals/corr/abs-2503-14476} on Qwen3-4B-Base across different rollouts. At first, we evaluated the model's performance on the dataset at rollout=8. From these, we selected four groups of 50 prompts each: groups with single correct or incorrect response. We then evaluated the outcomes of these selected prompts at rollout=128 where enough rollouts can reflect intrinsic difficulty of these prompts. 

\par For those groups with only 1 correct responses at rollout=8, the distribution of the number of correct responses within these groups is shown in Figure~\ref{fig:appendix-c}(a). For the MATH and DAPO-Math-17k datasets, 24 and 15 groups have fewer than 16 correct responses at rollout=128 respectively which suggests that the advantage of correct responses for these prompts are underestimated at rollout=8. And these distinct responses in these most challenging prompts are crucial for pushing the model’s capability frontier, requiring more exploration. Similarly, for prompts with 1 incorrect answer at rollout=8 and we find that 12 and 21 groups have less than 16 incorrect responses with 128 rollouts on MATH and DAPO-Math-17k  which may lead to over-exploitation as Figure~\ref{fig:appendix-c}(b).

\subsection{Ablation Study on $G$} \label{appendix:Group-size}

It is a widely accepted consensus that increasing the number of rollouts effectively mitigates estimation bias \citep{DBLP:journals/corr/abs-2510-04996}. As the group size grows, the empirical advantage distribution converges closer to the true advantage distribution, thereby reducing the variance and bias inherent in the advantage estimation of group-relative RL algorithms. To rigorously validate the effectiveness of our dynamic adjusting approach HA-DW in mitigating estimation bias under constrained sampling conditions, we conducted a comparative analysis of model's training performance across varying rollout sizes. The results presented in Table~\ref{tab:Ablation-on-G} shows that increasing the number of rollouts can, to a certain extent, enhance model performance by providing a more stable baseline. Although scaling up the number of rollouts is a straightforward method to improve performance, its benefits are often capped by computational constraints. As the rollout size increases, the training time grows substantially and out-of-memory (OOM) will occur once it exceeds a certain threshold. Our method offers a more efficient alternative: dynamic advantage adjustment demonstrates superior efficacy even under the limited rollouts condition, and it effectively mitigates the estimation bias that typically plagues low-sample scenarios, achieving robust performance without the need for extensive sampling.

\subsection{Ablation Study on $\lambda_{\text{scale}}$} \label{appendix:lambda-scale}
As illustrated in Section~\ref{appendix:adjusted estimator},there exists a specific scaling factor $\lambda_{\mathrm{scale}}$ satisfying Equation~\eqref{eq:lambda-scale-simplified} to compensate biased advantage estimation. Table~\ref{tab:differen-lambda} demonstrates the performance of RL training under different values of $\lambda_{\mathrm{scale}}$. When $\lambda_{\mathrm{scale}}=1.3$ or $1.5$, the trained model achieves the best performance across five benchmarks. The results correspond to our analysis that there exists an optimal value that balances the adjustment across prompts of varying difficulties, thereby enhancing RL training performance.

\section{Hard Evolving Difficulty Anchor}\label{appendix:hardanchor}
To simplify the update process of evolving belief $C_t$, thereby reducing algorithmic complexity. The synchronization of the model's state can be facilitated through a hard update mechanism, executed at every training step.  
Let $h$ be a hyperparameter denoting the number of most recent training rounds considered.
Let $h$ be hyper-paramter to represent the last $h$ training rounds.
The Equation~\eqref{eq:soft-evolve} can be rewritten as:
\begin{equation}
\begin{aligned}
    C_t^+&=\frac{h-1}{h}C_t^-+\frac{1}{h}y_t
    =\frac{1}{h}\left(\sum_{j=1}^{h-1}y_{t-j}+y_t\right),
\end{aligned}
\end{equation}
which indicates that the belief update is effectuated by directly synthesizing the accuracy information derived from the preceding $h$ batches with observations from the current iteration, and we leave the remaining update procedures intact. 
Although this formulation ignores short-term oscillations in belief updates, it significantly simplifies the overall algorithm.

\section{Prompt}

\begin{tcolorbox}[
    colback=white,          
    colframe=black,         
    colbacktitle=darkgray,  
    coltitle=white,         
    title={Prompt Template}  
]
$\{$question$\}$. Let's think step by step and output the final answer within $\backslash\backslash$boxed$\{\}$.

\end{tcolorbox}

\section{Case Study}
This appendix demonstrates some output examples generated by policy models trained with GRPO and GRPO+HA-DW. And the results are shown in Figure~\ref{fig:case-study-grpo} and Figure~\ref{fig:case-study-grpo-hadw}.

\begin{table*}[!t]
  \centering  
  \setlength{\tabcolsep}{10pt}
  \begin{tabular}{cccccccc}
    \toprule
     \textbf{$\lambda_{\mathrm{scale}}$} & \textbf{MATH500} & \textbf{AIME25} & \textbf{AMC23} & \textbf{Minerva} & \textbf{OlympiadBench} & \textbf{AVG}\\
    \midrule
    0.5
    &75.4 &18.1 &61.1 &34.2 &43.7 &46.5\\
    0.8
    &76.8 &19.2 &61.3 &34.9 &43.7 &47.2\\
    1.0 
    &76.8 &18.5 &61.6 &36.0 &44.3 &47.4\\
    1.3 
    &\textbf{78.0} &20.4 &\textbf{63.4} &36.8 &\textbf{44.7} &\textbf{48.7}\\
    1.5
    &77.8 &\textbf{20.8} &63.1 &\textbf{37.1} &44.0 &48.6\\
    1.7
    &76.4 &20.0 &\textbf{63.4} &36.4 &44.3 &48.1\\
    2.0
    &76.8 &19.0 &61.9 &35.3 &43.5 &47.3\\
\bottomrule
  \end{tabular}
  \caption{\label{tab:differen-lambda}
    Performance of Qwen3-4B-Base trained with GRPO+HA-DW on different $\lambda_{\mathrm{scale}}$.
  }
\end{table*}

\clearpage
\begin{figure*}[!ht]
    \centering
    \includegraphics[width=\linewidth, trim=180 160 220 180,clip]{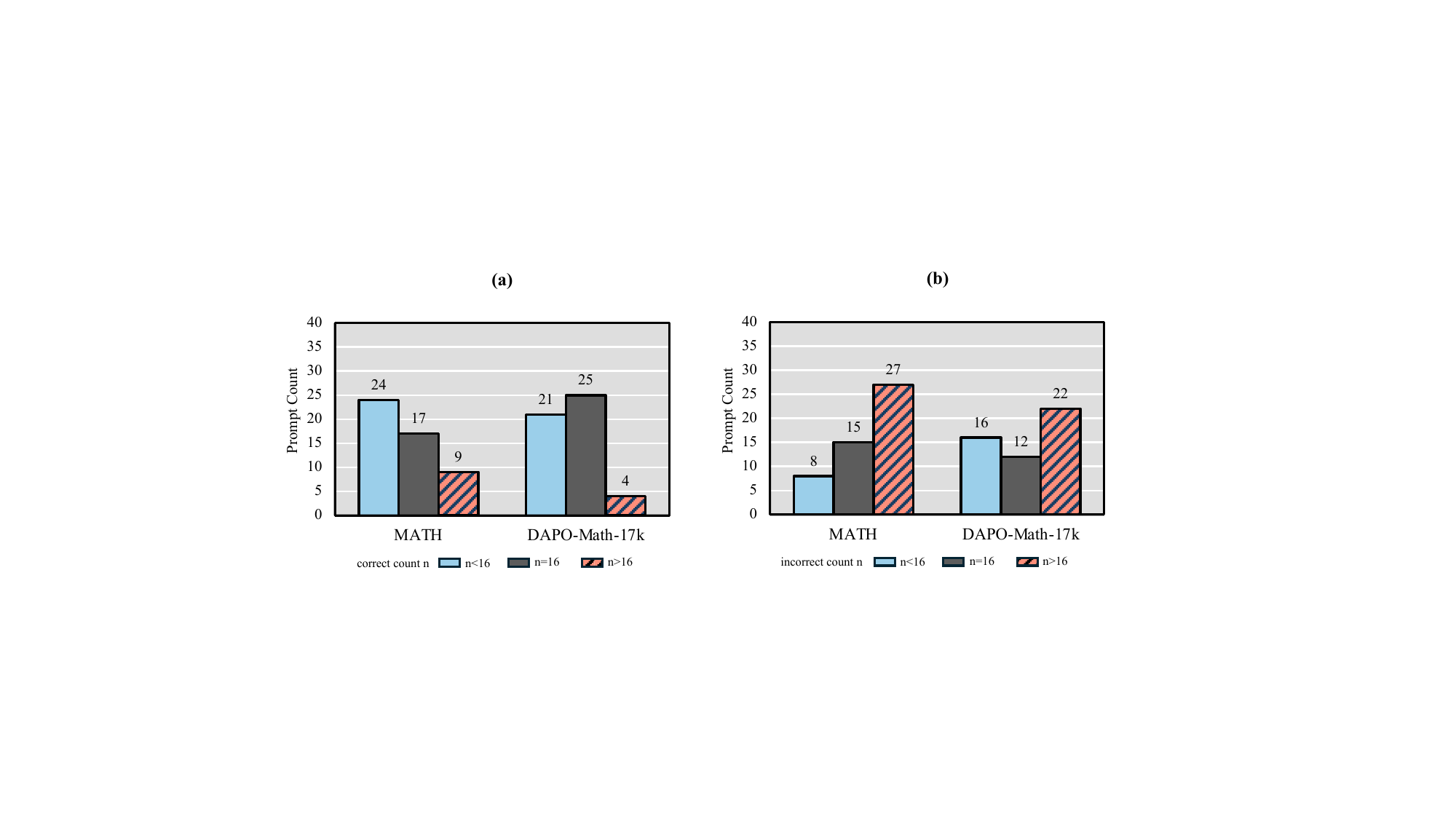}
    \caption{The distribution of prompts by the number of correct and incorrect responses on the MATH dataset and DAPO-Math-17k under 8 and 128 rollouts of Qwen3-4B-Base.}
    \label{fig:appendix-c}
\end{figure*}

\begin{table*}[t]
\centering
\small
\setlength{\tabcolsep}{3pt}

\renewcommand{\arraystretch}{1.25} 
\setlength{\extrarowheight}{1pt}  

\begin{tabular}{clcccccc} 
\toprule
\textbf{} & \textbf{Hyperparameter} & \textbf{GRPO} & \textbf{GRPO+HA-DW} & \textbf{GSPO} & \textbf{GSPO+HA-DW} & \textbf{DAPO} & \textbf{DAPO+HA-DW} \\
\midrule

\multirow[c]{6}{*}{\textbf{General}}
& nnode                 & 1     & 1     & 1     & 1     & 1     & 1 \\
& gpus per node         & 8     & 8     & 8     & 8     & 8     & 8 \\
& use kl in reward      & False & False & False & False & False & False \\
& use kl loss           & False & False & False & False & False & False \\
& tensor parallel size  & 1     & 1     & 1     & 1     & 1     & 1 \\
& test frequency        & 5     & 5     & 5     & 5     & 5     & 5 \\
\midrule

\multirow[c]{8}{*}{\textbf{Training}}
& train batch size & 256 & 256 & 256 & 256 & 256 & 256 \\
& mini batch size  & 16  & 16  & 16  & 16  & 16  & 16 \\
& micro batch size   & 4   & 4   & 4   & 4   & 4   & 4 \\
& epoch            & 3 & 3 & 3 & 3 & 9 & 9 \\
& gradient clip    & 1.0 & 1.0 & 1.0 & 1.0 & 1.0 & 1.0 \\
& optimizer        &AdamW &AdamW &AdamW &AdamW &AdamW &AdamW \\
& warmup steps     & 10   & 10   & 10   & 10   & 10   & 10 \\
& weight decay     & 0.1 & 0.1 & 0.1 & 0.1 & 0.1 & 0.1 \\
& learning rate    & $1\times10^{-6}$ & $1\times10^{-6}$ & $1\times10^{-6}$ & $1\times10^{-6}$ & $1\times10^{-6}$ & $1\times10^{-6}$ \\
\midrule

\multirow[c]{2}{*}{\textbf{Clipping}}
& clip-high & 0.2  & 0.2  & 0.0004  & 0.0004  & 0.28  & 0.28 \\
& clip-low  & 0.2  & 0.2  & 0.0003  & 0.0003  & 0.2   & 0.2 \\
\midrule

\multirow[c]{6}{*}{\textbf{Rollout}}
& max prompt length   & 1024 & 1024 & 1024 & 1024 & 1024 & 1024 \\
& max response length & 4096 & 4096 & 4096 & 4096 & 4096 & 4096 \\
& rollout.n           & 8    & 8    & 8    & 8    & 8    & 8 \\
& do sample           & False& False& False& False& False& False \\
& filtering           & False& False& False& False& False& False \\
& dynamic batch size  & True & True & True & True & True & True \\
\bottomrule
\end{tabular}

\caption{Hyperparameter settings for Group-relative methods.}
\label{table:hyperparameter}
\end{table*}

\clearpage

\begin{figure*}[t]
\centering
\begin{tcolorbox}[
    colback=white,          
    colframe=black,         
    colbacktitle=darkgray,  
    coltitle=white,         
    title={Case Study}  

]

\textbf{Question:}\\[0.3em]
\small
\linespread{1.3}\selectfont
In the $xy$-plane, a circle of radius $4$ with center on the positive $x$-axis is tangent to the $y$-axis at the origin, and a circle with radius $10$ with center on the positive $y$-axis is tangent to the $x$-axis at the origin. What is the slope of the line passing through the two points at which these circles intersect? The final answer can be written in the form $\frac{m}{n}$, where $m$ and $n$ are relatively prime positive integers. What is $m+n$? Let's think step by step and output the final answer within $\backslash\backslash$ boxed\{\}.\\

{\large \textbf{Answer}: \small 7}

\tcblower 

\textbf{GRPO Solution:}\\[0.3em]
\small
\linespread{1.3}\selectfont
To solve the problem, we need to find the points of intersection of the two circles and then determine the slope of the line passing through these points.\\

1. Equation of the Circles:
The first circle has a radius of 4 and is centered on the positive $x$-axis. Since it is tangent to the $y$-axis at the origin, its center is at $(4, 0)$. The equation of this circle is:
$ (x - 4)^2 + y^2 = 16 $. The second circle has a radius of 10 and is centered on the positive $y$-axis. Since it is tangent to the $x$-axis at the origin, its center is at $(0, 10)$. The equation of this circle is:
$ x^2 + (y - 10)^2 = 100 $.\\

2. Finding the Points of Intersection:
We need to solve the system of equations:
\[
\begin{cases}
    (x - 4)^2 + y^2 = 16\\
    x^2 + (y - 10)^2 = 100
\end{cases}
\]

Expanding both equations, we get:
\[
\begin{cases}
    x^2 - 8x + 16 + y^2 = 16 \implies x^2 - 8x + y^2 = 0 \\
    x^2 + y^2 - 20y + 100 = 100 \implies x^2 + y^2 - 20y = 0
\end{cases}
\]

Subtract from the second equation:
$ (x^2 + y^2 - 20y) - (x^2 - 8x + y^2) = 0 \implies -20y + 8x = 0 \implies 8x = 20y \implies x = \frac{5y}{2} $. \\

Substitute $x = \frac{5y}{2}$ into the first equation :
$ \left(\frac{5y}{2}\right)^2 - 8\left(\frac{5y}{2}\right) + y^2 = 0 \implies \frac{25y^2}{4} - 20y + y^2 = 0 \implies \frac{25y^2 + 4y^2}{4} - 20y = 0 \implies \frac{29y^2}{4} - 20y = 0 $
$ 29y^2 - 80y = 0 \implies y(29y - 80) = 0 $
So, $y = 0$ or $y = \frac{80}{29}$. If $y = 0$, then $x = \frac{5 \cdot 0}{2} = 0$, which is the origin and not a point of intersection of the two circles. If $y = \frac{80}{29}$, then $x = \frac{5 \cdot \frac{80}{29}}{2} = \frac{200}{29}$.\\

So, the points of intersection are $\left(\frac{200}{29}, \frac{80}{29}\right)$ and $(0, 0)$, but we already know $(0, 0)$ is not a valid intersection point for the two circles. Therefore, the only valid intersection point is $\left(\frac{200}{29}, \frac{80}{29}\right)$.\\

3. Finding the Slope:
The other point of intersection can be found by symmetry or by solving the equations again, but we know the circles are symmetric about the line $y = x$. So, the other point of intersection is $\left(\frac{80}{29}, \frac{200}{29}\right)$. The slope of the line passing through the points $\left(\frac{200}{29}, \frac{80}{29}\right)$ and $\left(\frac{80}{29}, \frac{200}{29}\right)$ is:
$ \text{slope} = \frac{\frac{200}{29} - \frac{80}{29}}{\frac{80}{29} - \frac{200}{29}} = \frac{\frac{120}{29}}{-\frac{120}{29}} = -1 $. So, the slope is $-1$, and in the form $\frac{m}{n}$, we have $m = -1$ and $n = 1$. \\

Therefore, $m + n = -1 + 1 = 0$.\\

The final answer is:
$ \boxed{-1} $

\end{tcolorbox}
\caption{An example of GRPO} 
\label{fig:case-study-grpo}
\end{figure*}

\clearpage

\begin{figure*}[t]
\centering
\begin{tcolorbox}[
    colback=white,          
    colframe=black,         
    colbacktitle=darkgray,  
    coltitle=white,         
    title={Case Study}  

]

\textbf{GRPO+HA-DW Solution:}\\[0.3em]
\small
\linespread{1.3}\selectfont
To solve the problem, we need to find the points of intersection of the two circles and then determine the slope of the line passing through these points.\\

1. Equation of the first circle:\\
The first circle has a radius of 4 and is centered on the positive $x$-axis. Since it is tangent to the $y$-axis at the origin, the center of the circle is at $(4, 0)$. The equation of the first circle is:
$(x - 4)^2 + y^2 = 16 $.\\

2. Equation of the second circle:\\
The second circle has a radius of 10 and is centered on the positive $y$-axis. Since it is tangent to the $x$-axis at the origin, the center of the circle is at $(0, 10)$. The equation of the second circle is:
$ x^2 + (y - 10)^2 = 100 $.\\

3. Find the points of intersection:
We need to solve the system of equations:
\[
  \begin{cases}
  (x - 4)^2 + y^2 = 16 \\
  x^2 + (y - 10)^2 = 100 
  \end{cases}
  \]

First, expand both equations:
\[
\begin{cases}
    x^2 - 8x + 16 + y^2 = 16 \implies x^2 - 8x + y^2 = 0\\
    x^2 + y^2 - 20y + 100 = 100 \implies x^2 + y^2 - 20y = 0
\end{cases}
\]

Subtract the first expanded equation from the second:
 $(x^2 + y^2 - 20y) - (x^2 - 8x + y^2) = 0 - 0 $
$ -20y + 8x = 0 \implies 8x = 20y \implies x = \frac{5}{2}y $.\\

Substitute $x = \frac{5}{2}y$ into the first expanded equation:
$ \left(\frac{5}{2}y\right)^2 - 8\left(\frac{5}{2}y\right) + y^2 = 0 $. This gives us two solutions for $y$:
$ y = 0 \quad \text{or} \quad \frac{29}{4}y = 20 \implies y = \frac{80}{29} $.\\

For $y = 0$: $ x = \frac{5}{2}(0) = 0 $. So, one intersection point is $(0, 0)$, which is the origin. For $y = \frac{80}{29}$: $ x = \frac{5}{2}\left(\frac{80}{29}\right) = \frac{200}{29} $. So, the other intersection point is $\left(\frac{200}{29}, \frac{80}{29}\right)$.\\

4. Find the slope of the line passing through the points $(0, 0)$ and $\left(\frac{200}{29}, \frac{80}{29}\right)$:
The slope $m$ of the line passing through two points $(x_1, y_1)$ and $(x_2, y_2)$ is given by:
$ m = \frac{y_2 - y_1}{x_2 - x_1} $
Substituting the points $(0, 0)$ and $\left(\frac{200}{29}, \frac{80}{29}\right)$:
$ m = \frac{\frac{80}{29} - 0}{\frac{200}{29} - 0} = \frac{\frac{80}{29}}{\frac{200}{29}} = \frac{80}{200} = \frac{2}{5} $.\\\

Therefore, the slope of the line is $\frac{2}{5}$, and $m + n = 2 + 5 = 7$.\\

The final answer is \boxed{7}.

\end{tcolorbox}
\caption{An example of GRPO+HA-DW} 
\label{fig:case-study-grpo-hadw}
\end{figure*}

\end{document}